\def\eqref#1{equation~\ref{#1}}
\def\Eqref#1{Equation~\ref{#1}}
\def\1{\bm{1}}
\DeclareMathAlphabet{\mathsfit}{\encodingdefault}{\sfdefault}{m}{sl}
\SetMathAlphabet{\mathsfit}{bold}{\encodingdefault}{\sfdefault}{bx}{n}
\newtheorem{thm}{Theorem}
\newtheorem{assumption}{Assumption}
\newtheorem{lem}{Lemma}
\newtheorem{defn}{Definition}
\newtheorem{fact}{Fact}
\newtheorem{corollary}{Corollary}
\icmltitlerunning{Cross-Gradient Aggregation for Decentralized Learning from Non-IID Data}
\begin{document}

\twocolumn[
\icmltitle{Cross-Gradient Aggregation for Decentralized Learning from Non-IID Data}



\icmlsetsymbol{equal}{*}

\begin{icmlauthorlist}
\icmlauthor{Yasaman Esfandiari}{isu}
\icmlauthor{Sin Yong Tan}{isu}
\icmlauthor{Zhanhong Jiang}{john}
\icmlauthor{Aditya Balu}{isu}
\icmlauthor{Ethan Herron}{isu}
\icmlauthor{Chinmay Hegde}{nyu}
\icmlauthor{Soumik Sarkar}{isu}
\end{icmlauthorlist}

\icmlaffiliation{isu}{Department of Mechanical Engineering, Iowa State University, Ames, Iowa, USA}
\icmlaffiliation{john}{Johnson Controls, Milwaukee, Wisconsin, USA}
\icmlaffiliation{nyu}{Computer Science and Engineering Department, New York University, New York City, New York, USA}

\icmlcorrespondingauthor{Soumik Sarkar}{soumiks@iastate.edu}

\icmlkeywords{Distributed Deep Learning, Non-IID Data Distributions}

\vskip 0.3in
]



\printAffiliationsAndNotice{}  


\begin{abstract}

Decentralized learning enables a group of collaborative agents to learn models using a distributed dataset without the need for a central parameter server. Recently, decentralized learning algorithms have demonstrated state-of-the-art results on benchmark data sets, comparable with centralized algorithms. However, the key assumption to achieve competitive performance is that the data is independently and identically distributed (IID) among the agents which, in real-life applications, is often not applicable. Inspired by ideas from continual learning, we propose \emph{Cross-Gradient Aggregation} ({\textit{CGA}}), a novel decentralized learning algorithm where (i) each agent aggregates \emph{cross}-gradient information, i.e., derivatives of its model with respect to its neighbors' datasets, and (ii) updates its model using a projected gradient based on quadratic programming (QP). We theoretically analyze the convergence characteristics of {\textit{CGA}} and demonstrate its efficiency on non-IID data distributions sampled from the MNIST and CIFAR-10 datasets. Our empirical comparisons show superior learning performance of {\textit{CGA}} over existing state-of-the-art decentralized learning algorithms, as well as maintaining the improved performance under information compression to reduce
peer-to-peer communication overhead. The code is available \href{https://github.com/yasesf93/CrossGradientAggregation}{here on GitHub}.
\end{abstract}


\section{Introduction}

Distributed machine learning refers to a class of algorithms that are focused on learning from data distributed among multiple agents. 
Approaches to design distributed deep learning algorithms include: centralized learning~\citep{mcmahan2017communication, kairouz2019advances}, decentralized learning~\citep{lian2017can,nedic2018network}, gradient compression~\citep{seide20141,alistarh2018convergence} and coordinate updates~\citep{richtarik2016distributed,nesterov2012efficiency}. In centralized learning, a central parameter server collects, processes, and sends processed information back to the agents~\citep{konevcny2016federated}. As a popular approach for centralized learning, Federated Learning (FL) leverages a central parameter server and learns from dispersed datasets that are private to the agents. Another approach is Federated Averaging~\citep{mcmahan2017communication} where agents avoid communicating with the server at each learning iteration and significantly decrease the communication cost. 
\begin{figure}[!t]
    \centering
    \includegraphics[width=0.5\textwidth]{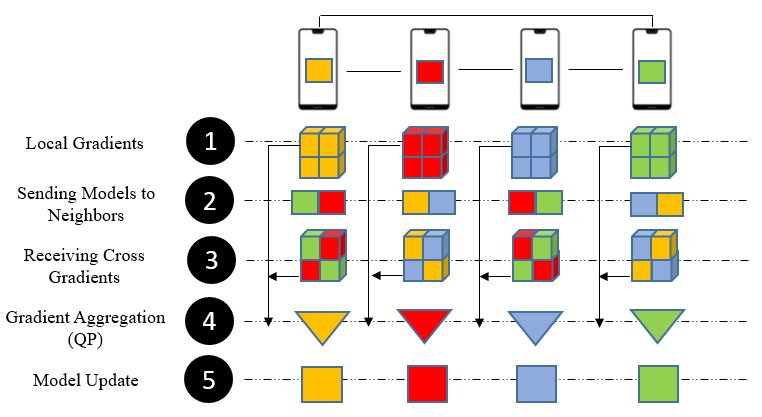}
    \caption{\sl\textbf{Algorithm overview}. In the proposed \textit{CGA} algorithm (1) each agent computes gradients of model parameters on its own data set; (2) each agent sends its model parameters to its neighbors; (3) each agent computes the gradients of its neighbors' models on its own data set and sends the cross gradients back to the respective neighbors; (4) cross gradients and local gradients are projected into an aggregated gradient (using Quadratic Programming); which is then used to (5) update the model parameter.
    }
    \label{sketch}
\end{figure}

\begin{table*}[!t]
\begin{center}
\caption{\sl Comparison between different decentralized learning approaches. Rate: convergence rate for the optimization algorithm, Comm.: Communication overhead per mini-batch, Bo. Gr. Var.: Bounded gradient variances and variations as an assumption, Bo. Sec. Mom.: Bounded second moment of the gradient as an assumption, $m_s$: model size for the local agent, $N_b$: total number of non-zero elements in $\Pi$ (total number of communications per mini-batch), $\gamma$: auxiliary costs due to forward and backward pass of the neural network, $b$: floating point precision of arithmetic computations (e.g.~64)}
\label{table1}
  \begin{tabular}{l c c c c}
    \hline
    Method & Rate & Comm. & Bo. Gr. Var. & Bo. Sec. Mom.\\ \hline
    DPSGD & $\mathcal{O}(\frac{1}{K} + \frac{1}{\sqrt{NK}} )$ & $\mathcal{O}(m_sN_{b} + \gamma)$& Yes & No \\
    SGP & $\mathcal{O}(\frac{1}{K} + \frac{1}{K^{1.5}} + \frac{1}{\sqrt{NK}} )$& $\mathcal{O}(m_sN_{b} + \gamma)$& Yes & No  \\
    SwarmSGD &  $\mathcal{O}(\frac{1}{\sqrt{K}})$& $\mathcal{O}(m_s\frac{N_{b}}{2} + \gamma)$ & No & Yes \\
    CGA (ours) &  $\mathcal{O}(\frac{1}{K} + \frac{1}{K^{1.5}} + \frac{1}{\sqrt{NK}} + \frac{1}{K^2})$& $\mathcal{O}(2m_sN_{b} + \gamma)$& Yes & No  \\
    
   \hline
  \end{tabular}\\
\footnotesize{$^*$ The communication overhead per mini-batch for \textit{CompCGA} method is $\mathcal{O}(\frac{2m_sN_{b}}{b} + \gamma)$}\\
\end{center}
\end{table*}


\textbf{Decentralized learning}: While having a central parameter server is acceptable for data center applications, in certain use cases (such as learning over a wide-area distributed sensor network), continuous communication with a central parameter server is often not feasible~\citep{haghighat2020applications}. 
To address this concern, several decentralized learning algorithms have been proposed, where agents only interact with their neighbors without a central parameter server. 

Recent advances in decentralized learning involve gossip averaging algorithms~\citep{boyd2006randomized, xiao2004fast, kempe2003gossip}. Combining SGD with gossip averaging,~\citet{lian2017can} shows analytically that decentralized parallel SGD (DPSGD) has far less communication overhead than its central counterpart~\citep{dekel2012optimal}. Along the same line of work,~\citet{scaman2018optimal} introduced a multi-step primal-dual algorithm
while ~\citet{yu2019linear} and \citet{balu2021decentralized} introduced
the momentum version of DPSGD. ~\citet{DBLP:journals/corr/abs-1907-07346} proposed DeepSqueeze, error-compensated compression is used in decentralized learning to achieve the same convergence rate as the one of centralized algorithms. ~\citet{koloskova2019decentralized} utilized compression strategies to propose CHOCO-SGD algorithm which learns from agents connected in varying topologies.
\color{black} Similarly with the aid of compression,~\citet{lu2020moniqua} and~\citet{vogels2020practical} introduced compression-based algorithms that improves the memory usage and running time of existing decentralized learning approaches.~\citet{assran2019stochastic} proposed the SGP algorithm which converges at the same sub-linear rate as SGD and achieves high accuracy on benchmark datasets. Additionally, \citet{koloskova2020unified} presented a unifying framework for decentralized SGD analysis and provided the best convergence guarantees. More recently, SwarmSGD was proposed by~\citet{nadiradze2019swarmsgd} which leverages random interactions between participating agents in a graph to achieve consensus. In a recent work,~\citet{arjevani2020ideal} proposes using AGD to achieve optimal convergence rate both in theory and practice. \citet{jiang2018consensus} propose multiple consensus and optimality rounds and the tradeoff between the consensus and optimality in decentralized learning.


\textbf{Handling non-IID data}: It is well known that decentralized learning algorithms can achieve comparable performance with its centralized counterpart under the so-called IID (independently and identically distributed) assumption. This refers to the situation where the training data is distributed in a uniformly random manner across all the agents.
However, in real life applications, such an assumption is difficult to satisfy. Considering centralized learning literature, \citet{li2018federated} proposed a variant of FL by adding a penalty term in the local objective function in FedProx algorithm. They further showed that their algorithm achieves higher accuracy when learning from non-IID data compared to FedAvg. Motivated by life-long learning~\citep{shoham2019overcoming}, FedCurv was proposed by adding a penalty term to the local loss function, with respect to Fisher information matrix. In another research study, FedAvg-EMD~\citep{zhao2018federated} utilized the earth mover’s distance (EMD) as a metric to quantify the distance between the data distribution on each client and the population distribution, which was perceived as the root cause of problems arising in the non-IID scenario. \citet{li2019convergence} showed the limitations with FedAvg on non-IID data analytically. Also, FedNova was proposed in which they use a normalized gradient in the update law of FedAvg after they show that the standard averaging of client models after heterogeneous local updates results in convergence to a stationary point~\citep{wang2020tackling}. Similar to the case of decentralized learning, compression techniques~\citep{sattler2019robust,rothchild2020fetchsgd}, momentum variant of algorithms~\citep{wang2019slowmo,li2019gradient}, the use of adaptive gradients~\citep{tong2020effective},
and use of controllers in agent's and server's models~\citep{karimireddy2019scaffold} are also used in centralized learning for coping with non-IID data. 
~\citet{hsieh2019non} proposes a solution for learning from non-IID data by Estimating the degree of deviation from IID by moving the model from one data partition to another. They then Evaluate the accuracy on the other data set and calculate the accuracy loss, and based on this measure, SkewScout controls the communication tightness by automatically tuning the hyper-parameters of the decentralized learning algorithm. In their experimental results, they consider until $80\%$ non-IID data whereas in our approach our dataset is partitioned in a fully non-IID was based on the classes.

Although the above \emph{centralized} approaches can handle departure from IID assumption, there still exists a gap in \emph{decentralized} learning and several approaches fail under significant non-IID distribution of data among the agents~\citep{hsieh2019non,jiang2017collaborative}.


\textbf{Contributions}: To overcome the issue of handling non-IID data distributions in a decentralized learning setting, we propose the \textit{Cross-Gradient Aggregation} (\textit{CGA}) algorithm in this paper. We show its effectiveness in learning (deep) models in a decentralized manner from both IID and non-IID data distributions. Inspired by continual learning literature~\citep{lopez2017gradient}, we devise an algorithm which in each step of training, collects the gradient information of each agent's model on all its neighbors' datasets and projects them into a single gradient which is then used to update the model. We use quadratic programming (QP) to obtain such a projected gradient. We provide an illustration of this algorithm in Figure~\ref{sketch}. 

The communication cost for our proposed algorithm is higher than the other state-of-the-art algorithms due to additional cost for two-way communication of the model parameters to, and the gradient information from, the neighbors. A comparison of the communication costs is provided in Table~\ref{table1}. Therefore, we also propose a compressed variant (\textit{CompCGA}) to reduce the communication cost. Finally we validate the performance of our algorithms on MNIST and CIFAR-10 with different graph typologies. Our code is publicly available on GitHub\footnote{https://github.com/yasesf93/CrossGradientAggregation}. We then compare the effectiveness of our algorithm with \textit{SwarmSGD}~\citep{nadiradze2019swarmsgd}, \textit{SGP}~\citep{assran2019stochastic}, and \textit{DPSGD}~\citep{lian2017can} and show that we can achieve higher accuracy in learning from non-IID data compared to the state-of-the-art decentralized learning approaches. Note that the goal here is to provide comparison between different decentralized learning algorithms; therefore, studies proposing novel compression schemes~\citep{DBLP:journals/corr/abs-1907-07346, koloskova2019decentralized,lu2020moniqua, vogels2020practical} are excluded from our comparison.

In summary, (i) we introduce the concept of \textit{cross gradients} to develop a novel decentralized learning algorithm (\textit{CGA}) that enables learning from both IID and non-IID data distributions, (ii) to reduce the higher communication costs of \textit{CGA}, we propose a compressed variant, \textit{CompCGA} that maintains a reasonably good performance in both IID and non-IID settings, (iii) we provide a detail convergence analysis of our proposed algorithm and show that we have similar convergence rates to the state-of-the-art decentralized learning approaches as summarized in Table~\ref{table1}, (iv) we demonstrate the efficacy of our proposed algorithms on benchmark datasets and compare performance with state-of-the-art decentralized learning approaches.


\section{Cross-Gradient Aggregation}

Let us first present a general problem formulation for decentralization deep learning, and then use it to motivate the Cross-Gradient Aggregation (\textit{CGA}) algorithmic framework.


\subsection{Problem Formulation}

Very broadly, decentralized learning involves $N$ agents collaboratively solving the empirical risk minimization problem:
\begin{equation}\label{problem}
    \textnormal{min}_{\mathbf{x}\in\mathbb{R}^d}\mathcal{F}(\mathbf{x}):=\frac{1}{N}\sum^N_{i=1}f_i(\mathbf{x}),
\end{equation}
where $f_i(\mathbf{x}):=\mathbb{E}_{\zeta_i\sim\mathcal{D}_i}[F_i(\mathbf{x};\zeta_i)]$ denotes a loss function defined in terms of dataset $\mathcal{D}_i$ that is private to agent $i \in [N]$. The agents are assumed to be communication-constrained and can only exchange information with their neighbors (where neighborliness is defined according to a weighted undirected graph with edge set $\mathbb{C}$ and adjacency matrix $\Pi$). Note that the adjacency matrix $\Pi$ is a doubly stochastic matrix constructed using the edge set of the graph, $\mathbb{C}$. For $(i,j) \notin \mathbb{C}$, we assign zero link weights (i.e., $\pi_{ij} = 0$), and if $(i,j) \in \mathbb{C}$, the link weights are assigned such that the $\Pi$ is stochastic and symmetric, e.g., for a ring topology, $\pi_{ij}=\frac{1}{3}$ if $j\in\{i-1,i,i+1\}$.
The goal is for the agents to come up with a consensus set of model parameters $\mathbf{x}$ (although during training each agent operates on its own copy of $\mathbf{x}$.) 

Usual approaches in decentralized learning involve each agent alternating between updating the local copies of their parameters using gradient information from their private datasets, and exchanging parameters with its neighbors. We depart from this usual path by first introducing two key concepts. 

\begin{defn}
For agent $j$, consider the dataset $\mathcal{D}_j$, the differentiable objective function $f_j$, and the model parameter copy $\mathbf{x}^j$. The self-gradient is defined as:
\begin{equation}
    \mathbf{g}^{jj}:=\nabla_{\mathbf{x}}f_j(\mathcal{D}_j;\mathbf{x}^j) \, .
\end{equation}
\end{defn}

\begin{defn}
For a pair of agents $j,l$, consider the dataset $\mathcal{D}_l$, the differentiable objective function $f_l$, and the model parameter copy $\mathbf{x}^j$. The cross-gradient is defined as:
\begin{equation}
    \mathbf{g}^{jl}:=\nabla_{\mathbf{x}}f_l(\mathcal{D}_l;\mathbf{x}^j) \, .
\end{equation}
\end{defn}

In words, the cross-gradient is calculated by evaluating the gradient of the loss function private to agent $l$ at the parameters of agent $j$. Both the self-gradient $\mathbf{g}^{jj}$ and the cross-gradient $\mathbf{g}^{jl}$ immediately lend themselves to their stochastic counterparts (implemented by simply mini-batching the private datasets); in the rest of the paper, we will operate under this setting.


\subsection{The {CGA} Algorithm}

We now propose the \textit{CGA} algorithm for decentralized deep learning. Figure~\ref{sketch} provides a visual overview of the method. Recall that $\mathbf{x}^j$ is the model parameter copy for each agent $j$ which is initialized by training with $\mathcal{D}_j$. Pick the number of iterations $K$, step-size $\alpha$, and the momentum coefficient $\beta$ as user-defined inputs.

In the $k^{\textrm{th}}$ iteration of \textit{CGA}, each agent $j \in [N]$ calculates its self-gradient $\mathbf{g}_{k}^{jj}$. Then, agent $j$'s model parameters are transmitted to all other agents ($l$) in its neighborhood, and the respective cross-gradients are calculated and transmitted back to agent $j$ and stacked up in a matrix $\mathbf{G}_k^j$. Then $\mathbf{G}_k^j$ and $\mathbf{g}_{k}^{jj}$ are used to perform a quadratic programming (QP) projection step, which we discuss in detail below. To accelerate convergence, a momentum-like adjustment term is also incorporated to obtain the final update law. 

\begin{algorithm}[!t]
    \caption{Cross-Gradient Aggregation (\textit{CGA})\label{CGAalgo}}
    \SetKwInOut{Input}{Input}
    \SetKwInOut{Output}{Output}

    \textbf{Initialize:}~\text{$\mathcal{D}_j, \mathbf{x}_0^j, \mathbf{v}_0^j, (j=1,2,\dots, N),\alpha, \beta, K$, a \texttt{QP} solver}\\
    \LinesNumbered \For{$k=1:K$}{
    \For{$j=1:N$}
    {
        \text{Randomly shuffle the data subset $\mathcal{D}_j$}\\
        Compute $\mathbf{g}_{k}^{jj}$\\
        $\mathbf{G}^j = \{ \}$\\
        \For{\text{each agent} $l$ s.t. $(j,l) \in \mathbb{C}$}
        {
            Compute $\mathbf{g}_{k}^{jl}$\\
            $\mathbf{G}_k^j \leftarrow \mathbf{G}_k^j \cup \mathbf{g}_{k}^{jl}$
        }
        $\mathbf{w}_k^j = \sum_l \mathbf{\pi}_{jl}\mathbf{x}_{k-1}^l$\\
        $\tilde{\mathbf{g}}_k^j \leftarrow \texttt{QP}(\mathbf{g}_{k}^{jj},\mathbf{G}_k^j)\;\;\;$\\
        $\mathbf{v}^j_{k} = \beta \mathbf{v}^j_{k-1} - \alpha\tilde{\mathbf{g}}_k^j$\\
        $\mathbf{x}^j_{k} = \mathbf{w}_{k}^j + \mathbf{v}^j_{k}$
      }
    }
\end{algorithm}

The form of the algorithm is similar to momentum-accelerated consensus SGD \cite{jiang2017collaborative}. The key difference in Algorithm~\ref{CGAalgo} when compared to existing gradient-based learning methods is the QP projection step. We observe that the local gradient $\tilde{\mathbf{g}}^j$ is obtained via a nonlinear projection, instead of just the self-gradient $\mathbf{g}^{jj}$ (as is done in standard momentum-SGD), or a linear averaging of self-gradients in the neighborhood $\mathbb{C}$ (as is done in standard decentralized learning methods).

The motivation for this difference stems from the nature of the cross-gradients $\mathbf{g}_k^{jl}$. In the IID case, these should statistically resemble the self-gradient $\mathbf{g}_k^{jj}$, and hence standard momentum averaging would succeed. However, with non-IID data partitioning, the differences between the cross-gradients in different agents becomes so significant and consensus may be difficult to achieve, leading to overall poor convergence properties. Therefore, in the non-IID case we need an alternative approach. 

We leverage the following intuition, borrowed from \cite{lopez2017gradient}. We seek a descent direction that is close to $\mathbf{g}_k^{ll}$ and \emph{simultaneously} is positively correlated with all the cross-gradients. This can be modeled via a QP projection, posed in primal form as follows:

\begin{subequations}\label{QPGEM}
\begin{alignat}{2}
  &\text{min}_{\mathbf{z}} \; \frac{1}{2} \mathbf{z}^\top \mathbf{z}-\mathbf{g}^\top \mathbf{z}+ \frac{1}{2} \mathbf{g}^\top \mathbf{g}\\
  &\text{s.t.~} \; \mathbf{G} \mathbf{z}\geq 0 \nonumber
\end{alignat}
\end{subequations}
where $\mathbf{g} := \mathbf{g}_k^{jj}$ and $\mathbf{G} := (\mathbf{g}^{jl}) \;\;\forall (j,l)\in \mathbb{C}$. The dual formulation of the above QP can be posed as:
\begin{subequations}\label{QPGEMdual}
\begin{alignat}{2}
  &\text{min}_{\mathbf{u}} \; \frac{1}{2} \mathbf{u}^\top \mathbf{G} \mathbf{G}^\top \mathbf{u}+ \mathbf{g}^\top \mathbf{G}^\top \mathbf{u}\\
  &\text{s.t.~} \; \mathbf{u}\geq 0 \nonumber
\end{alignat}
\end{subequations}
which is more efficient from a computational standpoint. Once we solve for the optimal dual variable $\mathbf{u}^*$, we can recover the optimal projection direction $\mathbf{g}^*$ using the relation $\mathbf{g}^* = \mathbf{G}^\top \mathbf{u}^* + \mathbf{g}$.

\subsection{The Compressed \textit{CGA} Algorithm}

The \textit{CGA} algorithm requires multiple exchanges of model parameters and gradients between neighbor agents in each iteration, which can be a burden particularly in communication-constrained environments. To reduce the communication bandwidth, we propose adding a compression layer on top of the \textit{CGA} framework. For that purpose, we use Error Feedback SGD (\textit{EF-SGD})~\citep{karimireddy2019error} to compress gradients. The resulting algorithm is same as Algorithm~\ref{CGAalgo}; except that instead of regular self- and cross-gradients, a scaled \emph{signed} gradient is calculated, the error between the compressed and non-compressed gradients will be computed ($e^{ij}_k$ in the algorithm), and this error will be added as a penalty term to the gradients in the next step. The resulting algorithm is shown in Algorithm~\ref{CompCGAalgo}. In the pseudo code provided there, 
the quantity $d$ corresponds to the dimension of the computed gradients for each agent.
\begin{algorithm}[!t]
    \caption{Compressed Cross-Gradient Aggregation (\textit{CompCGA})\label{CompCGAalgo}}
    \SetKwInOut{Input}{Input}
    \SetKwInOut{Output}{Output}

    \textbf{Initialize:}~\text{$\mathcal{D}_j, \mathbf{e}_0^j, \mathbf{x}_0^j, \mathbf{v}_0^j, (j=1,\dots, N),\alpha, \beta, K$, a \texttt{QP} solver}\\
    \LinesNumbered \For{$k=1:K$}{
    \For{$j=1:N$}
    {
        \text{Randomly shuffle the data subset $\mathcal{D}_j$}\\
        Compute $\mathbf{g}_{k}^{jj}$\\
        $\mathbf{p}_k^{jj} = \mathbf{g}_{k}^{jj}+\mathbf{e}_k^{jj}$\\
        $\mathbf{\delta}_{k}^{jj} = (\|\mathbf{p}_k^{jj}\|_{1}/d)sgn(\mathbf{p}_k^{jj})$\\
        $\mathbf{G}^j = \{ \}$\\
        \For{each agent $l$, s.t. $(j,l) \in \mathbb{C}$}
        {
            Compute $\mathbf{g}_{k}^{jl}$\\
            $\mathbf{p}_k^{jl} = \mathbf{g}_{k}^{jl}+\mathbf{e}_k^{jl}$\\
            $\mathbf{\delta}_{k}^{jl} = (\|\mathbf{p}_k^{jl}\|_{1}/d)sgn(\mathbf{p}_k^{jl})$\\
            $\mathbf{e}^{jl}_{k} = \mathbf{p}^{jl}_{k}-\mathbf{\delta}_{k}^{jl}$\\
            $\mathbf{G}^j \leftarrow \mathbf{G}^j \cup \mathbf{\delta}_{k}^{jl}$
        }
        $\mathbf{w}_k^j = \sum_l \mathbf{\pi}_{jl}\mathbf{x}_{k-1}^l$\\
        $\tilde{\mathbf{g}}^j \leftarrow \texttt{QP}(\mathbf{\delta}_{k}^{jj},\mathbf{G}^j)\;\;\;$\\
        $\mathbf{v}^j_{k} = \beta \mathbf{v}^j_{k-1} - \alpha\tilde{\mathbf{g}}^j$\\
        $\mathbf{x}^j_{k} = \mathbf{w}_{k}^j + \mathbf{v}^j_{k}$\\
        $\mathbf{e}^{jj}_{k} = \mathbf{p}^{jj}_{k}-\mathbf{\delta}_{k}^{jj}$\\
      }
    }
\end{algorithm}


\section{Convergence Analysis for \textit{CGA}}

We now present a theoretical analysis of our proposed \textit{CGA} approach. It should be noted that the communication among the agents is assumed to be synchronous in the following analysis. Let us begin with a definition of \emph{smoothness}. 
\begin{defn}
A function $\mathcal{F}(\cdot)$ is $L$-smooth if $\forall \mathbf{x}, \mathbf{y}$:
\begin{equation}
    \mathcal{F}(\mathbf{x})\leq\mathcal{F}(\mathbf{y})+\nabla \mathcal{F}(\mathbf{y})^\top(\mathbf{x}-\mathbf{y})+\frac{L}{2}\|\mathbf{x}-\mathbf{y}\|^2.
\end{equation}
\end{defn}
In order to analyze the convergence of decentralized learning algorithms, the following assumptions are standard.
\begin{assumption}\label{assum_1}
Each function $f_i(\mathbf{x})$ is $L$-smooth.
\end{assumption}
\begin{assumption}\label{assum_2}
There exist $\sigma>0$ and $\delta>0$ such that
\begin{equation}\label{var_1}
 \mathbb{E}_{\zeta\sim\mathcal{D}_i}[\|\nabla F_i(\mathbf{x};\zeta)-\nabla f_i(\mathbf{x})\|]\leq \sigma^2,
\end{equation}
and that
\begin{equation}\label{var_2}
    \frac{1}{N}\sum_{i=1}^N\|\nabla f_i(\mathbf{x})-\nabla \mathcal{F}(\mathbf{x})\|^2\leq\delta^2.
\end{equation}
\end{assumption}
\begin{assumption}\label{assum_3}
Define $\mathbf{g}^i=\nabla F_i(\mathbf{x};\zeta)$. Then, there exists $\epsilon>0$ such that
\begin{equation}\label{opt_var}
\mathbb{E}_{\zeta\sim\mathcal{D}_i}[\|\tilde{\mathbf{g}}^i-\mathbf{g}^i\|^2]\leq \epsilon^2.    
\end{equation}
\end{assumption}

Assumption~\ref{assum_1} implies that $\mathcal{F}(\mathbf{x})$ is $L$-smooth. Assumption~\ref{assum_2} assumes bounded variances due to non-IID-ness. 
\Eqref{var_1} bounds the variance within the same agent ("intra-variance") while \Eqref{var_2} bounds the variance among different agents ("inter-variance"). 

Assumption~\ref{assum_3} is necessitated by our adoption of the QP projection step. Intuitively, if the local optimization problem is meaningful, then this assumption holds. 
In this assumption, the value of $\epsilon$ is governed by the difference between the data distributions possessed by each agent. Note that thus far, Eq.~\ref{var_2} has been used to study the effect of non-IID data in most analyses of decentralized learning; previous methods operate upon $\mathbf{g}^i$. In our work, we combine both Eq.~\ref{var_2} and Eq.~\ref{opt_var} to mathematically show convergence.


We next impose another assumption on the graph that serves to characterize consensus.
\begin{assumption}\label{assum_5}
The mixing matrix $\mathbf{\Pi}\in\mathbb{R}^{N\times N}$ is a doubly stochastic matrix with $\lambda_1(\mathbf{\Pi})=1$ and 
\begin{equation}
    \text{max}\{|\lambda_2(\mathbf{\Pi})|,|\lambda_N(\mathbf{\Pi})|\}\leq\sqrt{\rho}<1,
\end{equation}
where $\lambda_i(\mathbf{\Pi})$ is the $i$th-largest eigenvalue of $\mathbf{\Pi}$ and $\rho$ is a constant.
\end{assumption}


\subsection{Theoretical Results}

We now present our theoretical characterization of \textit{CGA}. We focus only on the case of non-convex objective functions. All detailed proofs are presented in the Appendix, and follow from basic algebra and sequence convergence theory. 
Below, $i$ indicates the agent index; the average of all agent model copies is represented by $\bar{\mathbf{x}}$; 
throughout the analysis, we assume that the objective function value is bounded below by $\mathcal{F}^*$. 
We also denote $a_n=\mathcal{O}(b_n)$ if $a_n\leq c\:b_n$ for some constant $c>0$. 

We first present a lemma showing that \textit{CGA} achieves consensus among the different agents, and then prove our main theorem indicating convergence of the algorithm. 

\begin{lem}\label{lemma_1}
Let Assumptions 1-4 hold. Define $\{\bar{\mathbf{x}}_k\}, \forall k\geq 0$ as the agent average sequence obtained by the iterations of {CGA}. If $\beta\in[0,1)$ is the momentum coefficient, then for all $K\geq 1$, we have:
\begin{equation}
    \begin{split}
        &\sum_{k=0}^{K-1}\frac{1}{N}\sum_{i=1}^N\mathbb{E}\bigg[\bigg\|\bar{\mathbf{x}}_k-\mathbf{x}^i_k\bigg\|^2\bigg]\leq\\
        &\frac{2\alpha^2}{(1-\beta)^2}\bigg(\frac{ \epsilon^2}{1-\rho}+\frac{3 \sigma^2}{(1-\sqrt{\rho})^2}+\frac{3 \delta^2}{(1-\sqrt{\rho})^2}\bigg)K+\\
        &\frac{6 \alpha^2}{(1-\beta)^2(1-\sqrt{\rho})}\sum_{k=0}^{K-1}\mathbb{E}[\|\frac{1}{N}\sum_{i=1}^N\nabla f_i(\mathbf{x}^i_k)\|^2].
    \end{split}
\end{equation}
\end{lem}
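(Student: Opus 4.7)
The plan is to recast Algorithm~\ref{CGAalgo} in stacked matrix form, express the consensus deviation as a double convolution of the projected gradient sequence against the momentum kernel (in the time index $s$) and the mixing kernel (in the time index $t$), decompose each projected gradient into one zero-mean QP-noise piece, one zero-mean SGD-noise piece, one inter-agent bias piece, and one deterministic average-gradient piece, and finally sum the four resulting convolutions over $k$ to read off the bound.

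First I would introduce the stacks $\mathbf{X}_k, \mathbf{V}_k, \tilde{\mathbf{G}}_k \in \mathbb{R}^{N\times d}$ whose $i$-th rows are $\mathbf{x}_k^i, \mathbf{v}_k^i, \tilde{\mathbf{g}}_k^i$, so that the two update rules in Algorithm~\ref{CGAalgo} read $\mathbf{V}_k = \beta \mathbf{V}_{k-1} - \alpha \tilde{\mathbf{G}}_k$ and $\mathbf{X}_k = \mathbf{\Pi}\mathbf{X}_{k-1} + \mathbf{V}_k$. Writing $\mathbf{P}=\tfrac{1}{N}\mathbf{1}\mathbf{1}^\top$ and $\mathbf{M} = \mathbf{\Pi}-\mathbf{P}$, the double-stochasticity of $\mathbf{\Pi}$ gives $\mathbf{P}\mathbf{\Pi}=\mathbf{\Pi}\mathbf{P}=\mathbf{P}$, hence $(\mathbf{I}-\mathbf{P})\mathbf{\Pi}=\mathbf{M}(\mathbf{I}-\mathbf{P})$, and Assumption~\ref{assum_5} gives $\|\mathbf{M}\|_2\leq\sqrt{\rho}$. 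Under a common initialization (so $(\mathbf{I}-\mathbf{P})\mathbf{X}_0=0$) and $\mathbf{V}_0=0$, iterating both recursions produces
\[
(\mathbf{I}-\mathbf{P})\mathbf{X}_k \;=\; -\alpha\sum_{s=1}^{k}\mathbf{K}_k^{(s)}(\mathbf{I}-\mathbf{P})\tilde{\mathbf{G}}_s, \qquad \mathbf{K}_k^{(s)} \;:=\; \sum_{t=s}^{k}\beta^{t-s}\mathbf{M}^{k-t}.
\]
The squared Frobenius norm of the left-hand side equals $\sum_i \|\bar{\mathbf{x}}_k - \mathbf{x}_k^i\|^2$, which is exactly the quantity to be summed in the lemma.

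Next I would decompose each row of $\tilde{\mathbf{G}}_s$ into four physically distinct pieces,
\[
\tilde{\mathbf{g}}_s^i \;=\; \bigl(\tilde{\mathbf{g}}_s^i-\mathbf{g}_s^{ii}\bigr) \;+\; \bigl(\mathbf{g}_s^{ii}-\nabla f_i(\mathbf{x}_s^i)\bigr) \;+\; \Bigl(\nabla f_i(\mathbf{x}_s^i)-\tfrac{1}{N}\!\sum_{j}\nabla f_j(\mathbf{x}_s^j)\Bigr) \;+\; \tfrac{1}{N}\!\sum_{j}\nabla f_j(\mathbf{x}_s^j),
\]
and split $(\mathbf{I}-\mathbf{P})\tilde{\mathbf{G}}_s$ accordingly into $\mathbf{E}_s^{(1)},\dots,\mathbf{E}_s^{(4)}$. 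Applying $\|a+b+c+d\|^2\leq 4\sum\|\cdot\|^2$ (or a finer weighted split matching the constants $2,6,6$ in the statement) reduces the problem to bounding four convolutions $\sum_s\mathbf{K}_k^{(s)}\mathbf{E}_s^{(r)}$ separately. For $r=1$, Assumption~\ref{assum_3} bounds the row-wise second moment by $\epsilon^2$, and because these QP-projection errors are conditionally mean-independent across $s$, I would use the orthogonality identity
\[
\mathbb{E}\bigl\|\sum_{s=1}^{k}\mathbf{K}_k^{(s)}\mathbf{E}_s^{(1)}\bigr\|_F^2 \;=\; \sum_{s=1}^{k}\mathbb{E}\bigl\|\mathbf{K}_k^{(s)}\mathbf{E}_s^{(1)}\bigr\|_F^2 \;\leq\; N\epsilon^2 \sum_{s=1}^{k}\|\mathbf{K}_k^{(s)}\|_2^{2},
\]
and sum the geometric double series to pick up the sharper factor $1/\bigl((1-\beta)^2(1-\rho)\bigr)$. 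For $r=2,3$ I give up independence and use Cauchy--Schwarz against the decaying weights $\rho^{(k-t)/2}\beta^{t-s}$; summing over $k$ produces the looser factor $1/\bigl((1-\beta)^2(1-\sqrt{\rho})^2\bigr)$, which combines with $\sigma^2$ via Assumption~\ref{assum_2}\eqref{var_1} and with $\delta^2$ via \eqref{var_2} to yield the two middle terms. For $r=4$ the row is deterministic and equal to the average gradient, so its contribution is kept as is, which after absorbing one geometric factor leaves the $1/\bigl((1-\beta)^2(1-\sqrt{\rho})\bigr)\cdot\sum_k\mathbb{E}\|\tfrac{1}{N}\sum_i\nabla f_i(\mathbf{x}_k^i)\|^2$ term appearing in the statement.

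The main obstacle is coordinating the double convolution carefully enough that the three expected constants all come out correctly: in particular, ensuring that only the QP-noise piece picks up the martingale rate $1/(1-\rho)$ while the three biased pieces are forced onto the deterministic rate $1/(1-\sqrt{\rho})^2$, and that the fourth (gradient-of-average) piece picks up exactly one factor of $1/(1-\sqrt{\rho})$. This requires (i) fixing a filtration so that $\tilde{\mathbf{g}}_s^i-\mathbf{g}_s^{ii}$ and $\mathbf{g}_s^{ii}-\nabla f_i(\mathbf{x}_s^i)$ are martingale differences, (ii) handling the change of order of summation $\sum_{k}\sum_{s\leq t\leq k}\beta^{t-s}\rho^{(k-t)/2}$ cleanly, and (iii) collecting the common prefactor $\alpha^2/(1-\beta)^2$ at the end. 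The rest — applying $\|\mathbf{K}_k^{(s)}\|_2\leq\sum_{t\geq s}\beta^{t-s}\rho^{(k-t)/2}$ and summing geometric series — is routine algebra.
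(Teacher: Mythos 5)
Your overall architecture is the same as the paper's: stack the iterates, unroll the two recursions into a convolution of $\tilde{\mathbf{G}}_s$ against the momentum--mixing kernel, peel off the consensus projector using $\|(\mathbf{I}-\mathbf{Q})\mathbf{\Pi}^{m}\|_{\mathfrak{S}}\leq\rho^{m/2}$, split the projected gradient into QP-error, sampling noise, heterogeneity, and mean-gradient pieces, and sum geometric series. The paper does the split in two stages ($\tilde{\mathbf{G}}=(\tilde{\mathbf{G}}-\mathbf{G})+\mathbf{G}$, then $\mathbf{G}=(\mathbf{G}-\mathbf{H})+\mathbf{H}(\mathbf{I}-\mathbf{Q})+\mathbf{H}\mathbf{Q}$), which is where the constants $2$ and $6$ come from; your single four-way split is equivalent up to that bookkeeping, which you already flag.

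There is, however, one step in your plan that does not go through as stated. For the $r=1$ piece you invoke the orthogonality identity $\mathbb{E}\|\sum_{s}\mathbf{K}_k^{(s)}\mathbf{E}_s^{(1)}\|_F^2=\sum_{s}\mathbb{E}\|\mathbf{K}_k^{(s)}\mathbf{E}_s^{(1)}\|_F^2$ on the grounds that the QP-projection errors $\tilde{\mathbf{g}}_s^i-\mathbf{g}_s^{i}$ are conditionally zero-mean. Nothing in Assumption~\ref{assum_3} gives you this: it only bounds the second moment $\mathbb{E}\|\tilde{\mathbf{g}}^i-\mathbf{g}^i\|^2\leq\epsilon^2$, and the QP step projects the gradient onto a cone, which generically introduces a systematic (non-zero-mean) bias. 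Without the martingale-difference property the cross terms in $s$ do not vanish, and the honest fallback (Cauchy--Schwarz against the decaying weights, exactly as you do for $r=2,3$) yields $1/\big((1-\beta)^2(1-\sqrt{\rho})^2\big)$ rather than the sharper $1/\big((1-\beta)^2(1-\rho)\big)$ appearing in the first term of the lemma; since $(1-\sqrt{\rho})^2\leq 1-\rho$, that is a strictly weaker bound than the one claimed. (To be fair, the paper's own proof obtains $1/(1-\rho)$ via a ``Jensen'' step in its inequality (I) that has the same unnormalized-weights problem, so your proposal is not less rigorous than the reference --- but you should either state the zero-mean property of the QP error as an additional assumption or accept the $(1-\sqrt{\rho})^{-2}$ constant.) A second, harmless inconsistency: because you distribute $(\mathbf{I}-\mathbf{P})$ onto $\tilde{\mathbf{G}}_s$ before splitting, your fourth piece $\mathbf{E}_s^{(4)}$ has identical rows and is annihilated by $\mathbf{I}-\mathbf{P}$, so it contributes nothing; your bound would then simply omit the (nonnegative) term IV and still imply the lemma. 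The paper retains that term only because it bounds the uncentered $\|\mathbf{G}_\tau\|_F^2$ after submultiplicativity, so the common-mode component survives there.
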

A complete proof can be found in the \textit{Supplementary Section}~\ref{lem_1_proof}. From Lemma~\ref{lemma_1}, we can observe that
the evolution of the deviation of the model copies from their average 
can be attributed to two terms. The first is the following constant:
\[
\frac{2\alpha^2}{(1-\beta)^2}\bigg(\underbrace{\frac{ \epsilon^2}{1-\rho}}_{I}+\underbrace{\frac{3 \sigma^2}{(1-\sqrt{\rho})^2}}_{II}+\underbrace{\frac{3 \delta^2}{(1-\sqrt{\rho})^2}}_{III}\bigg)K,\]
where (I) is controlled by Assumption~\ref{assum_3} (which also implies how well the local QP is solved), (II) is related to sampling variance, and (III) indicates the gradient variations (determined by the data distributions). Additionally, the step size and momentum coefficient can be tuned to reduce the negative impact of these variance coefficients. The second is the following term:
\[\frac{6 \alpha^2}{(1-\beta)^2(1-\sqrt{\rho})}\underbrace{\sum_{k=0}^{K-1}\mathbb{E}[\|\frac{1}{N}\sum_{i=1}^N\nabla f_i(\mathbf{x}^i_k)\|^2]}_{IV},\]
where (IV) is the summation of the squared norms of average gradients, the effect of which can be controlled by leveraging the step size and momentum coefficient. 

Lemma~\ref{lemma_1} also aligns with the well-known result phenomenon in decentralized learning that the consensus error is inversely proportional to the spectral gap of the graph mixing matrix.
Using the above lemma, we obtain the following main result.
\begin{thm}\label{theorem_1}
Let Assumptions 1-4 hold. Suppose that the step size $\alpha$ satisfies the following relationships:
\begin{equation}\label{alpha}
\begin{cases}
    0<\alpha\leq\frac{\beta L}{(1-\beta)^2}\\
    1-\frac{6\alpha^2L^2}{(1-\beta)(1-\sqrt{\rho})^2}-\frac{4  L\alpha}{(1-\beta)^2}\geq 0.
\end{cases}
\end{equation}
For all $K\geq 1$, we have
\begin{equation}
    \begin{split}
        &\frac{1}{K}\sum_{k=0}^{K-1} \mathbb{E}\left[\left\|\nabla \mathcal{F}\left(\bar{\mathbf{x}}_{k}\right)\right\|^{2}\right] \leq\\
       &\frac{1}{ C_{1} K}\left(\mathcal{F}\left(\bar{\mathbf{x}}_{0}\right)-\mathcal{F}^{*}\right)+ \bigg(2  C_{2}+ C_{3} \frac{\alpha^{2} \beta}{(1-\beta)^{4}}+ C_{4}+\\
       &C_5 \frac{2 \alpha^{2}}{(1-\beta)^{2}(1-\rho)}\bigg) \epsilon^{2}+
       \bigg(\frac{2}{N}( C_{2}+ C_{3} \frac{\alpha^{2} \beta}{(1-\beta)^{4}})+\\
       &C_{5} \frac{6 \alpha^{2}}{(1-\beta)^{2}(1-\sqrt{p})^{2}}\bigg) \sigma^{2}+  C_{5} \frac{6 \alpha^{2}}{(1-\beta)^{2}(1-\sqrt{\rho})^{2}}\delta^2,\\
    \end{split}
\end{equation}
where $C_1 = \frac{\alpha}{2(1-\beta)}-\frac{(1-\beta)\alpha^2}{2\beta L}$, 
$C_{2}=\left(\frac{\beta L \alpha^{2}}{2(1-\beta)^{3}}+\frac{\alpha^{2} L}{(1-\beta)^{2}}\right) / C_{1}$, 
$C_{3}=\frac{(1-\beta) L}{2 \beta} / C_{1}$,

$C_{4}=\frac{\beta L}{2(1-\beta)^{3}} / C_{1}$, 
$C_{5}=\frac{\alpha L^{2}}{2(1-\beta)} / C_{1}$.
\end{thm}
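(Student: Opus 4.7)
\textbf{Proof plan for Theorem~\ref{theorem_1}.} The strategy is the standard descent-lemma route for non-convex decentralized SGD with momentum, but with two extra pieces of bookkeeping: a bias term coming from the QP projection (controlled by Assumption~\ref{assum_3}) and the consensus error already quantified in Lemma~\ref{lemma_1}. I would first collapse the momentum into an auxiliary iterate. Define $\bar{\mathbf{v}}_k = \frac{1}{N}\sum_i \mathbf{v}_k^i$ and $\bar{\mathbf{x}}_k = \frac{1}{N}\sum_i \mathbf{x}_k^i$; because $\mathbf{\Pi}$ is doubly stochastic, averaging the CGA update kills the mixing step and yields $\bar{\mathbf{x}}_{k} = \bar{\mathbf{x}}_{k-1} + \bar{\mathbf{v}}_k$ with $\bar{\mathbf{v}}_k = \beta \bar{\mathbf{v}}_{k-1} - \alpha \bar{\tilde{\mathbf{g}}}_k$, where $\bar{\tilde{\mathbf{g}}}_k = \frac{1}{N}\sum_i \tilde{\mathbf{g}}_k^i$. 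Introducing the standard shadow sequence $\mathbf{z}_k = \bar{\mathbf{x}}_k + \frac{\beta}{1-\beta}\bar{\mathbf{v}}_k$ lets me write $\mathbf{z}_{k} - \mathbf{z}_{k-1} = -\frac{\alpha}{1-\beta}\bar{\tilde{\mathbf{g}}}_k$, which behaves like a plain SGD step and eliminates the momentum recursion from the descent analysis.

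Next I would apply the $L$-smoothness of $\mathcal{F}$ (Assumption~\ref{assum_1}) along $\mathbf{z}_k$ to obtain
\begin{equation*}
\mathbb{E}[\mathcal{F}(\mathbf{z}_{k+1})] \le \mathbb{E}[\mathcal{F}(\mathbf{z}_k)] - \tfrac{\alpha}{1-\beta}\mathbb{E}\bigl\langle \nabla \mathcal{F}(\mathbf{z}_k), \bar{\tilde{\mathbf{g}}}_k \bigr\rangle + \tfrac{L \alpha^2}{2(1-\beta)^2}\mathbb{E}\|\bar{\tilde{\mathbf{g}}}_k\|^2.
\end{equation*}
The inner product term is handled by adding and subtracting $\nabla \mathcal{F}(\bar{\mathbf{x}}_k)$ and $\bar{\mathbf{g}}_k := \frac{1}{N}\sum_i \nabla f_i(\mathbf{x}_k^i)$, producing (i) the desired $-\|\nabla \mathcal{F}(\bar{\mathbf{x}}_k)\|^2$ contribution, (ii) a $\mathbf{z}_k - \bar{\mathbf{x}}_k$ drift bounded via the momentum recursion by $\alpha^2\beta^2/(1-\beta)^4$ factors, (iii) a $\nabla \mathcal{F}(\bar{\mathbf{x}}_k) - \bar{\mathbf{g}}_k$ consensus piece controlled by $L^2 \cdot \frac{1}{N}\sum_i \|\bar{\mathbf{x}}_k - \mathbf{x}_k^i\|^2$, and (iv) the QP-bias piece $\bar{\mathbf{g}}_k - \bar{\tilde{\mathbf{g}}}_k$ which Assumption~\ref{assum_3} bounds by $\epsilon^2$. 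Standard Young/Cauchy-Schwarz with weights tuned to match the constants $C_2, C_3, C_4$ keeps these terms absorbable by the descent gain, provided the step-size conditions~\eqref{alpha} hold.

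For the quadratic term $\mathbb{E}\|\bar{\tilde{\mathbf{g}}}_k\|^2$ I would again split $\bar{\tilde{\mathbf{g}}}_k = (\bar{\tilde{\mathbf{g}}}_k - \bar{\mathbf{g}}_k) + (\bar{\mathbf{g}}_k - \nabla \mathcal{F}(\bar{\mathbf{x}}_k)) + \nabla \mathcal{F}(\bar{\mathbf{x}}_k)$ and use $\|a+b+c\|^2 \le 3\|a\|^2 + 3\|b\|^2 + 3\|c\|^2$; the first piece is bounded by $\epsilon^2$ (Assumption~\ref{assum_3}), the second by $L^2 \cdot \frac{1}{N}\sum_i\|\bar{\mathbf{x}}_k - \mathbf{x}_k^i\|^2$ via smoothness, and the third is the target gradient quantity. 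Sampling noise $\sigma^2/N$ (Assumption~\ref{assum_2}, Eq.~\ref{var_1}) enters when I replace stochastic gradients by their expectations in the inner-product step. Telescoping the resulting per-iteration inequality from $k=0$ to $K-1$, dividing by $K$, and invoking $\mathcal{F}(\mathbf{z}_0) = \mathcal{F}(\bar{\mathbf{x}}_0)$ and $\mathcal{F}(\mathbf{z}_K) \ge \mathcal{F}^*$ yields the leading $\tfrac{1}{C_1 K}(\mathcal{F}(\bar{\mathbf{x}}_0) - \mathcal{F}^*)$ term after dividing through by the positive coefficient $C_1 = \tfrac{\alpha}{2(1-\beta)} - \tfrac{(1-\beta)\alpha^2}{2\beta L}$ of $\|\nabla \mathcal{F}(\bar{\mathbf{x}}_k)\|^2$ produced by the algebra.

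The final step plugs Lemma~\ref{lemma_1} in to replace $\sum_k \frac{1}{N}\sum_i \mathbb{E}\|\bar{\mathbf{x}}_k - \mathbf{x}_k^i\|^2$ wherever it appears. This substitution contributes the $C_5 \cdot 6\alpha^2 / ((1-\beta)^2(1-\sqrt{\rho})^2)$ multipliers of $\sigma^2$ and $\delta^2$, the $C_5 \cdot 2\alpha^2/((1-\beta)^2(1-\rho))$ multiplier of $\epsilon^2$, and a leftover multiple of $\|\nabla \mathcal{F}(\bar{\mathbf{x}}_k)\|^2$; the second step-size condition in~\eqref{alpha} is precisely what ensures this leftover is dominated by the descent gain and can be moved to the left-hand side. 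The main obstacle, in my view, is keeping careful track of the constants: the momentum drift $\mathbf{z}_k - \bar{\mathbf{x}}_k$ introduces $\beta/(1-\beta)^2$ factors that must be combined with the Young-inequality weights and with Lemma~\ref{lemma_1}'s prefactors in just the right way to match $C_2, C_3, C_4, C_5$. Aside from this coefficient matching, the proof is a fairly mechanical chain of smoothness, Young's inequality, and telescoping.
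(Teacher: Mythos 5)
Your plan reproduces the paper's proof essentially step for step: your shadow sequence $\mathbf{z}_k=\bar{\mathbf{x}}_k+\tfrac{\beta}{1-\beta}\bar{\mathbf{v}}_k$ is algebraically identical to the paper's $\bar{\mathbf{z}}_k=\tfrac{1}{1-\beta}\bar{\mathbf{x}}_k-\tfrac{\beta}{1-\beta}\bar{\mathbf{x}}_{k-1}$, and the descent-lemma decomposition into drift, consensus, QP-bias, and sampling-noise pieces, the substitution of Lemma~\ref{lemma_1}, and the role of the step-size conditions in absorbing the leftover gradient term all match the paper's argument. The only deviations are cosmetic bookkeeping (e.g.\ your three-way split of $\mathbb{E}\|\bar{\tilde{\mathbf{g}}}_k\|^2$ versus the paper's Lemma~\ref{lemma_2}, and remembering that Assumption~\ref{assum_3} compares $\tilde{\mathbf{g}}^i$ to the \emph{stochastic} gradient so the $\sigma^2/N$ term needs one more add-and-subtract), which affect constants but not the result.
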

A complete proof of Theorem~\ref{theorem_1} is discussed in the \textit{Supplementary Section}~\ref{the_1_proof}.

Theorem~\ref{theorem_1} shows that the average gradient magnitude achieved by the consensus estimates  is upper-bounded by the difference between initial objective function value and the optimal value, as well as how well the local QP is solved, the sampling variance, and the non-IID-ness. The coefficients before these constants are determined by $\alpha$, $\beta$, and $L$; judicious selection of $\alpha$ and $\beta$ can be performed to reduce the error bound. Additionally, the step size is required to satisfy two conditions as listed in the above theorem statement. The second condition can be solved to get another upper bound, denoted by $\alpha^*$ (which will be shown in the Appendix section). Hence, if we choose $0<\alpha\leq\text{min}\{\frac{\beta L}{(1-\beta)^2}, \alpha^*\}$, the last inequality naturally holds. We next present a corollary to explicitly show the convergence rate of \textit{CGA}.
\begin{corollary}
Suppose that the step size satisfies $\alpha=\mathcal{O}(\frac{\sqrt{N}}{\sqrt{K}})$ and that $\epsilon=\mathcal{O}(\frac{1}{\sqrt{K}})$. 
For a sufficiently large $K\geq \textnormal{max}\{\frac{144NL^2}{r^2}, \frac{N}{\beta^2L^2}\}, r=(1-\sqrt{\rho})\sqrt{16(1-\sqrt{\rho})^2+24(1-\beta)^3}-4(1-\sqrt{\rho})^2$, we have, for some constant C > 0, 
\begin{align}
        \frac{1}{K}\sum_{k=0}^{K-1} \mathbb{E}&\left[\left\|\nabla \mathcal{F}\left(\bar{\mathbf{x}}_k\right)\right\|^{2}\right] \nonumber \\ 
        &\leq C\Bigg(\frac{1}{\sqrt{NK}} +\frac{1}{K}+\frac{1}{K^{1.5}}+\frac{1}{K^{2}}
        \Bigg).
\end{align}
\end{corollary}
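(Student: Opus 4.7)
The plan is to treat the corollary as a bookkeeping exercise: substitute the prescribed step size $\alpha = \Theta(\sqrt{N/K})$ and the assumed QP error $\epsilon = \mathcal{O}(1/\sqrt{K})$ directly into the bound of Theorem~\ref{theorem_1}, determine the asymptotic order in $K$ of each of the constants $C_1,\ldots,C_5$, and collect terms. The threshold on $K$ in the corollary statement is exactly what is needed to make both conditions of \eqref{alpha} hold under this choice of $\alpha$, and the four rates $1/\sqrt{NK}$, $1/K$, $1/K^{1.5}$, $1/K^2$ in the conclusion should correspond one-to-one to the dominant contributions of the four groups of terms appearing in Theorem~\ref{theorem_1}.

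\textbf{Step 1: Verifying admissibility of $\alpha$.} First I would check that $\alpha = c\sqrt{N/K}$ meets the two constraints in \eqref{alpha}. The bound $\alpha \leq \beta L/(1-\beta)^2$ is equivalent to $K \geq c^2 N(1-\beta)^4/(\beta^2 L^2)$, which is implied by $K \geq N/(\beta^2 L^2)$ up to an absolute constant. The second constraint
\[
1-\frac{6\alpha^{2}L^{2}}{(1-\beta)(1-\sqrt{\rho})^{2}}-\frac{4L\alpha}{(1-\beta)^{2}}\geq 0
\]
is a quadratic inequality in $\alpha$; solving it gives an upper bound $\alpha^{*}$ of the form $r/(12L)$ with $r$ as defined in the statement, and rewriting $\alpha=\sqrt{N/K}\leq \alpha^{*}$ yields $K\geq 144NL^{2}/r^{2}$. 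Hence the stated lower bound on $K$ precisely ensures that Theorem~\ref{theorem_1} applies.

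\textbf{Step 2: Asymptotic orders of the $C_i$.} Under $\alpha = \Theta(\sqrt{N/K})$ (so $\alpha \to 0$ as $K$ grows), one has $C_1 = \Theta(\alpha)$ since the quadratic correction is lower order. Then $C_2 = \Theta(\alpha^{2}/\alpha) = \Theta(\alpha)$, $C_3 = \Theta(1/\alpha)$, $C_4 = \Theta(1/\alpha)$, and $C_5 = \Theta(\alpha/\alpha)=\Theta(1)$. With these, the individual terms of the bound scale as follows. The initial‑gap term contributes $\frac{1}{C_1 K}=\Theta(1/(\alpha K))=\Theta(1/\sqrt{NK})$. Among the $\epsilon^{2}$ terms ($\epsilon^{2}=\mathcal{O}(1/K)$): $C_4\epsilon^{2}=\Theta(1/(\alpha K))=\Theta(1/\sqrt{NK})$ dominates, while $C_2\epsilon^{2}$ and $C_3\alpha^{2}\epsilon^{2}$ both contribute $\Theta(\alpha/K)=\Theta(1/K^{1.5})$ (after absorbing $\sqrt{N}$ into the constant), and $C_5\alpha^{2}\epsilon^{2}=\Theta(\alpha^{2}/K)=\Theta(1/K^{2})$. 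The $\sigma^{2}$ terms give $\frac{1}{N}C_2\sigma^{2}=\Theta(\alpha/N)=\Theta(1/\sqrt{NK})$ and $C_5\alpha^{2}\sigma^{2}=\Theta(\alpha^{2})=\Theta(1/K)$. Finally the $\delta^{2}$ term produces $C_5\alpha^{2}\delta^{2}=\Theta(1/K)$.

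\textbf{Main obstacle and conclusion.} Collecting the dominant contributions across the four rates and absorbing $\beta,L,\rho,\sigma,\delta,N$ into a single constant $C$ (permitted since the corollary only tracks $K$‑dependence explicitly aside from the $1/\sqrt{NK}$ speed‑up term) yields
\[
\frac{1}{K}\sum_{k=0}^{K-1}\mathbb{E}\bigl[\|\nabla\mathcal{F}(\bar{\mathbf{x}}_{k})\|^{2}\bigr]\leq C\Bigl(\tfrac{1}{\sqrt{NK}}+\tfrac{1}{K}+\tfrac{1}{K^{1.5}}+\tfrac{1}{K^{2}}\Bigr).
\]
The genuinely delicate step is Step~1: one must verify that the algebraic simplification $C_1=\Theta(\alpha)$ is legitimate, which requires $\alpha \leq \beta L/(1-\beta)^{2}$ \emph{with room to spare}, and that the quadratic constraint yields exactly the threshold $144NL^{2}/r^{2}$ with $r$ given in the statement. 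Everything else is essentially a scaling analysis, but matching each of the four $K$‑rates to the right combination of $(\alpha,\epsilon)$ powers and $C_i$ orders demands care so that no lower‑order contribution is mistakenly discarded and no dominant one is misclassified.
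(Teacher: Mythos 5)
Your proposal is correct and follows essentially the same route as the paper: substitute $\alpha=\Theta(\sqrt{N/K})$ and $\epsilon=\mathcal{O}(1/\sqrt{K})$ into Theorem~\ref{theorem_1}, read off $C_1,C_2=\Theta(\alpha)$, $C_3,C_4=\Theta(1/\alpha)$, $C_5=\Theta(1)$, and match each group of terms to the rates $1/\sqrt{NK}$, $N/K$, $\sqrt{N}/K^{1.5}$, $N/K^2$ before absorbing $N$ into the constant, exactly as in the paper's Appendix proof. Your Step~1 (deriving the threshold $K\geq 144NL^2/r^2$ from the quadratic constraint on $\alpha$, and noting that $K\geq N/(\beta^2L^2)$ gives the slack needed for $C_1=\Theta(\alpha)$) is handled by the paper in its separate ``Discussion on the Step Size'' subsection, so this is a matter of presentation rather than a different argument.
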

An immediate observation is that when $K$ is sufficiently large, the term $\mathcal{O}(\frac{1}{\sqrt{NK}})$ will dominate the convergence rate such that the \textit{linear speed up} can be achieved, if increasing the number of agents $N$. This convergence rate matches the well-known best result in decentralized SGD algorithms in literature. 

{\textbf{Analysis of \textit{CompCGA}}.} We now provide some qualitative arguments to facilitate the understanding of \textit{CompCGA}. Though we have not directly established its convergence rates, we can presumably extend the analysis of \textit{CGA} to this setting. Observe that the core update laws for $\mathbf{x}$ are the same for \textit{CompCGA} as in \textit{CGA}, but equipped with gradient compression. Moreover, for our theoretical analysis presented for \textit{CGA}, the specific way in which $\tilde{\mathbf{g}}$ is calculated does not play a role, and additional compression can perhaps be modeled by changing the variation constants. Therefore, we hypothesize that \textit{CompCGA} also exhibits a convergence rate of $\mathcal{O}(\frac{1}{\sqrt{NK}})$. This is also evidently seen from our empirical studies, which we present next. 


\section{Experimental Results}
In this section, we analyze the performance of \textit{CGA} algorithm empirically. We compare the effectiveness of our algorithms with other baseline decentralized algorithms such as \textit{SwarmSGD}~\citep{nadiradze2019swarmsgd}, \textit{SGP}~\citep{assran2019stochastic}, and the momentum variant of \textit{DPSGD}~\citep{lian2017can} (\textit{DPMSGD}).

\textbf{Setup.} 
We present the empirical studies on CIFAR-10 and MNIST datasets (MNIST results can be found in the \textit{Supplementary Section~\ref{mnistapp}}). To explore the algorithm performance under different situations, the experiments are performed with $5$, $10$ , and $40$ agents. Here, we consider an extreme form of non-IID-ness by assigning different classes of data to each agent. For example, when there are $5$ agents, each agent has the data for $2$ distinct classes, and similarly when there are $10$ agents, each agent has the data for $1$ distinct class. When the number of agents are more than the number of classes, each class is divided into a sufficient number of subsets of samples and agents are randomly assigned distinct subsets. 
We use a deep convolutional neural network (CNN) model (with 2 convolutional layers with $32$ filters each followed by a max pooling layer, then $2$ more convolutional layers with $64$ filters each followed by another max pooling layer and a dense layer with 512 units, ReLU activation is used in convolutional layers) for our validation experiments. Additionally, 
We  use a VGG11~\citep{simonyan2014very} model for CIFAR-10 (Detailed CIFAR-10 results can be found in the \textit{Supplementary Section}~\ref{cifarvggapp}). A mini-batch size of $128$ is used, the initial step-size is set to $0.01$ for CIFAR-10,  
and step size is decayed with constant $0.981$.  The stopping criterion is a fixed number of epochs and the momentum parameter ($\beta$) is set to be 0.98. The consensus model is then used to be evaluated on the local test sets and the average accuracy is reported.

The experiments are performed on a large high-performance computing cluster with a total of 192 GPUs distributed over 24 nodes. Each node in the cluster is made of 2 Intel Xeon Gold 6248 CPUs with each 20 cores and 8 Tesla V100 32GB SXM2 GPUs. 
An experiment with 40 agents on a VGG11 model for CIFAR10 dataset takes about $55$ seconds per epoch for execution. The code for performing the experiments is publicly available\footnote{https://github.com/yasesf93/CrossGradientAggregation}.


\begin{figure*}[ht]
\centering
\begin{subfigure}[t]{0.33\linewidth}
\includegraphics[width=1\linewidth]{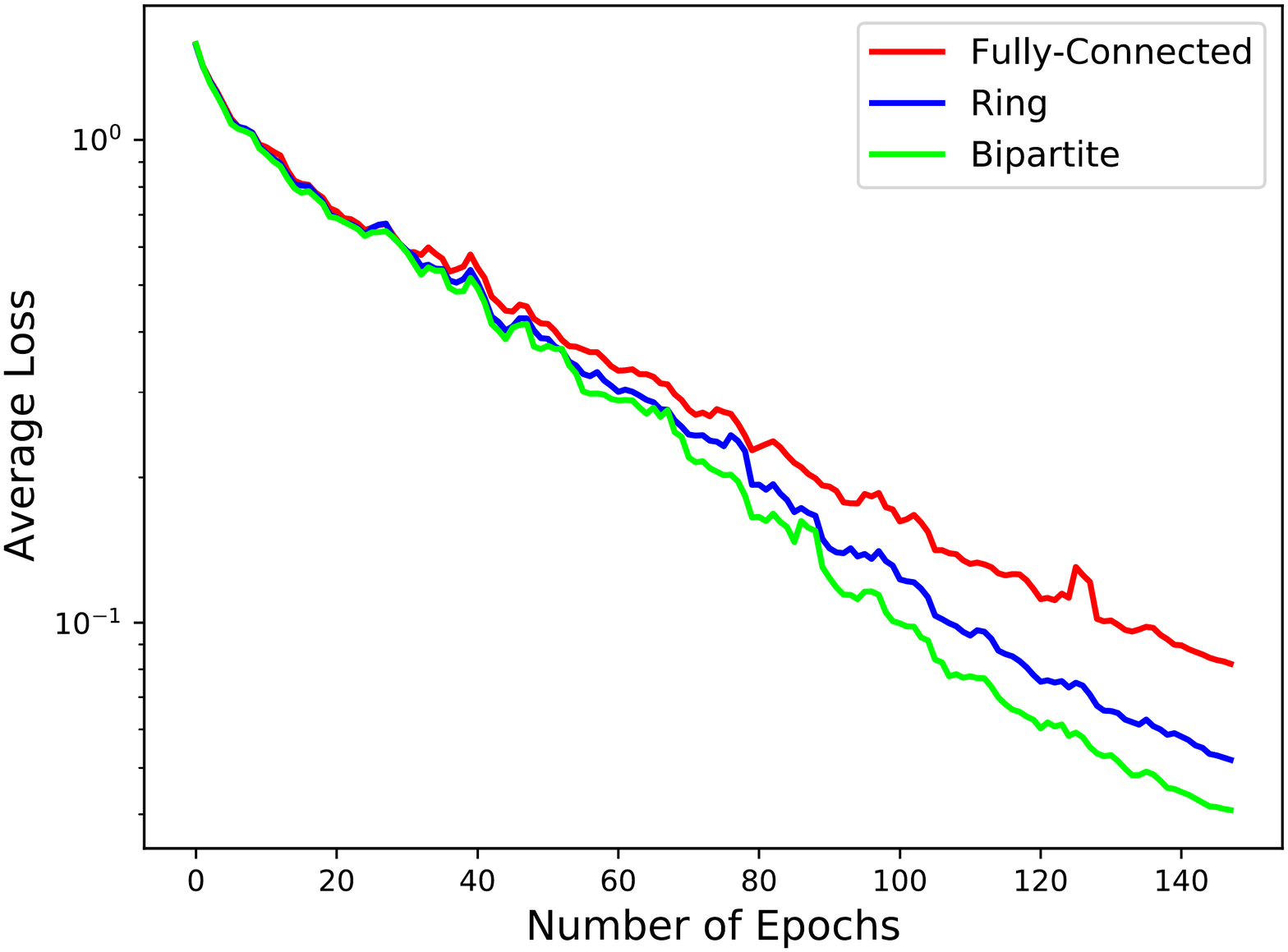}
\caption{}
\end{subfigure}
\begin{subfigure}[t]{0.33\linewidth}
\includegraphics[width=1\linewidth]{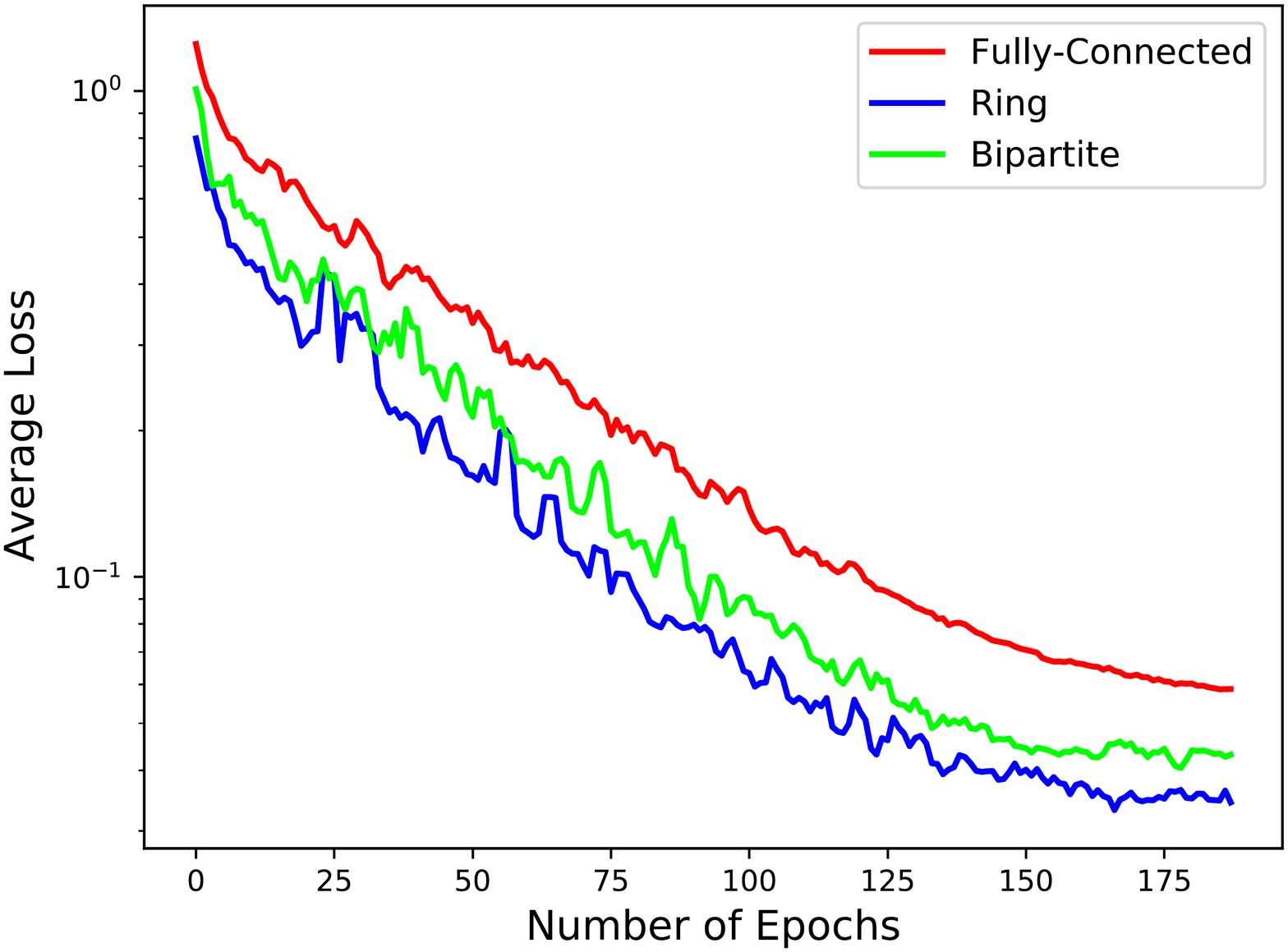}
\caption{}
\end{subfigure}
\begin{subfigure}[t]{0.33\linewidth}
\includegraphics[width=1\linewidth]{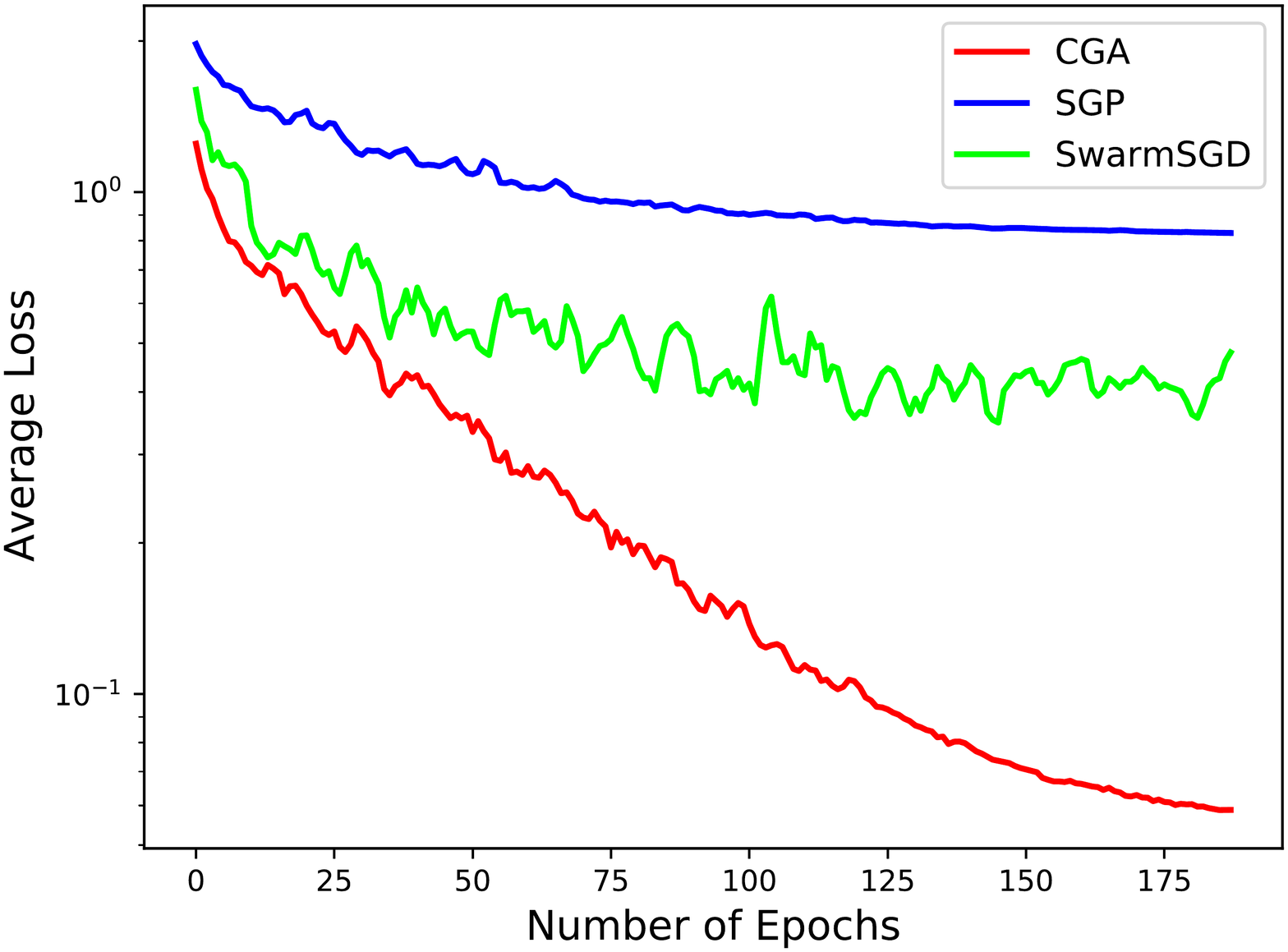}
\caption{}
\end{subfigure}
\caption{\textit{Average training loss (log scale) for (a) \textit{CGA} method on IID (b) \textit{CGA} method on non-IID data distributions (c) different methods on non-IID data distributions for training 5 agents using CNN model architecture}}
\label{cifarloss}
\end{figure*}

\subsection{\textit{CGA} convergence characteristics}
We start by analyzing the performance of \textit{CGA} algorithm on CIFAR-10. Figure~\ref{cifarloss} shows the convergence characteristics of our proposed algorithm via training loss versus epochs. 
Figure~\ref{cifarloss}(a) shows the convergence characteristics of \textit{CGA} for IID data distributions for different communication graph topologies. While the fully connected graph represents a dense topology, the ring and bipartite graphs represent relatively much sparser topologies. 

We observe that the convergence behavior induced by the training loss remain similar across the different graph topologies, though at the final stage of training, the ring and bipartite networks moderately outperform the fully connected one. This can be attributed to more communication occurring for the fully connected case. The phenomenon of faster convergence with sparser graph topology is an observation that have been made by earlier research works in Federated Learning~\cite{mcmahan2017communication} by reducing the client fraction which makes the mixing matrix sparser and decentralized learning~\cite{jiang2017collaborative}. However, as Figure~\ref{cifarloss_reb}(a) in the \textit{Supplementary Section}~\ref{cifarvggapp} shows, we observe that by training for more number of epochs, training losses associated with all graph topologies converge to similar values. 

Figure~\ref{cifarloss}(b) shows similar curves but for the non-IID case. In this case, we do observe a slight difference in convergence with faster rates for sparser topologies compared to their dense (fully connected) counterpart. Another phenomenon observed here is that for sparser topologies, the training process has more gradient variances and variations, which has been caused by the non-IID data distributions. This well matches the theoretical analysis we have obtained.

Finally, Figure~\ref{cifarloss}(c) shows the comparison of convergence characteristics with other state-of-the-art decentralized algorithms with non-IID data distributions. \textit{CGA} training is seen to be smoother compared to SwarmSGD, and to converge significantly faster compared to both SwarmSGD and SGP. From the theoretical analysis, we have shown that \textit{CGA} enables to converge faster at the beginning 
although after a sufficiently large number of epochs, all methods listed here achieve the same rate $\mathcal{O}(\frac{1}{\sqrt{NK}})$. 

Additionally, the SwarmSGD requires a geometrically distributed random variable to determine the number of local stochastic gradient steps performed by each agent upon interaction. That causes the largest variance shown in the loss curve in Figure~\ref{cifarloss}(c). Note that since DPMSGD diverges for most of the non-IID experiments (see Table~\ref{CifarCNNnon}), we do not provide its loss plots here. 


\begin{table} [!t]
\centering
    \caption{Testing accuracy comparison for CIFAR10 with IID data distribution using CNN model architecture}
    \resizebox{0.49\textwidth}{!}{\begin{tabular}{c||l||l||l}
    \textbf{Model} & \textbf{Fully-connected} & \textbf{Ring} & \textbf{Bipartite} \\
    \hline
    \hline
    DPMSGD &\Centering{\begin{tabular}{l}                    68.8\% (5)\\68.1\% (10)\\ \textbf{67.6\%} (40) \end{tabular}}&{\begin{tabular}{l}67.7\% (5)\\67.7\% (10)\\ 66.8\% (40)\end{tabular}}&{\begin{tabular}{l}67.7\% (5)\\67.3\% (10)\\ 57.1\% (40)\end{tabular}}\\
    \hline
    SGP  & {\begin{tabular}{l}
    66.6\% (5)\\ 59.3\% (10)\\ 46.3\% (40)
    \end{tabular}}&{\begin{tabular}{l}
                     66.3\% (5)\\ 59.2\% (10)\\ 46.2\% (40)\end{tabular}} &{\begin{tabular}{l}
                    66.3\% (5)\\ 58.4\% (10)\\ 46.3\% (40)
                 \end{tabular}}\\
    \hline
    SwarmSGD & {\begin{tabular}{l}
                     \textbf{70.6\%} (5)\\ 68.3\% (10)\\ 31.5\% (40) \end{tabular}}&
     {\begin{tabular}{l}\textbf{70.7\%} (5)\\ 65.4\% (10)\\ 31.4\% (40)
                 \end{tabular}} &{\begin{tabular}{l}
                    \textbf{70.7\%} (5)\\ 60.3\% (10)\\ 33.4\% (40)
                 \end{tabular}}\\
    \hline
    CGA (ours) & {\begin{tabular}{l}
                   68.5 \% (5)\\\textbf{68.5\%} (10)\\ 64.6\% (40)
    \end{tabular}}&{\begin{tabular}{l} 67.9 \% (5)\\\textbf{67.8\%} (10)\\ \textbf{63.7\%} (40)\end{tabular}} &{\begin{tabular}{l}
    68.2 \% (5)\\\textbf{68.2\%} (10)\\ \textbf{58.4\%} (40)
                 \end{tabular}}\\
    \hline
    CompCGA (ours) & {\begin{tabular}{l}
                    68.4\% (5)\\ 62.2\% (10)\\ 63.3\% (40)
    \end{tabular}}&{\begin{tabular}{l} 68.3\% (5)\\ 62.9\% (10)\\ 53.4\% (40) \end{tabular}} &{\begin{tabular}{l}
    68.4\% (5)\\ 64.6\% (10)\\ 56.6\% (40)
                 \end{tabular}}\\
    \hline
    \hline
    \end{tabular}}
    \label{CifarCNNiid}
\end{table}

\begin{table} [ht]
\centering
    \caption{Testing accuracy comparison for CIFAR10 with non-IID data distribution using CNN model architecture}
    \resizebox{0.49\textwidth}{!}{\begin{tabular}{c||l||l||l}
    \textbf{Model} & \textbf{Fully-connected} & \textbf{Ring} & \textbf{Bipartite} \\
    \hline
    \hline
    DPMSGD &\Centering{\begin{tabular}{l}Diverges (5)\\10.1\% (10)\\ 10.5\% (40) \end{tabular}}&{\begin{tabular}{l}Diverges (5)\\ Diverges (10)\\ 10.0\% (40)\end{tabular}}&{\begin{tabular}{l}Diverges (5)\\ 10.0\% (10)\\ 10.7\% (40)\end{tabular}}\\
    \hline
    SGP  & {\begin{tabular}{l}
    66.4\% (5)\\ 59.3\% (10)\\ 46.2\% (40)
    \end{tabular}}&{\begin{tabular}{l}
                     46.3\% (5)\\  25.8\% (10)\\  31.4\% (40)\end{tabular}} &{\begin{tabular}{l}
                    49.8\% (5)\\ 24.9\% (10)\\ 11.8\% (40)
                 \end{tabular}}\\
    \hline
    SwarmSGD & {\begin{tabular}{l}
                     67.3\% (5)\\ 47.3\% (10)\\ 30.6\% (40) \end{tabular}}&
     {\begin{tabular}{l}62.5\% (5)\\  38.5\% (10)\\25.8\% (40)
                 \end{tabular}} &{\begin{tabular}{l}
                    63.9\% (5)\\  33.8\% (10)\\ 23.5\% (40)
                 \end{tabular}}\\
    \hline
    CGA (ours) & {\begin{tabular}{l}
                   \textbf{68.4\%} (5)\\\textbf{68.2\%} (10)\\ \textbf{62.1\%} (40)
    \end{tabular}}&{\begin{tabular}{l} \textbf{66.5\%} (5)\\ \textbf{48.8\%} (10)\\ \textbf{40.9\%} (40) \end{tabular}} &{\begin{tabular}{l}
    \textbf{67.2\%} (5)\\ \textbf{38.9\%} (10)\\ \textbf{25.7\%} (40)
                 \end{tabular}}\\
    \hline
    CompCGA (ours) & {\begin{tabular}{l}
                    61.7\% (5)\\ 60.2\% (10)\\ 59.8\% (40)
    \end{tabular}}&{\begin{tabular}{l} 50.3\% (5)\\ 39.5\% (10)\\ 32.7\% (40) \end{tabular}} &{\begin{tabular}{l}
    40.4\% (5)\\ 36.7\% (10)\\ 23.6\% (40)
                 \end{tabular}}\\
    \hline
    \hline
    \end{tabular}}
    \label{CifarCNNnon}
\end{table}

\begin{figure*}[ht]
\centering
\begin{subfigure}[t]{0.24\linewidth}
\includegraphics[trim=0cm 0cm 3.5cm 0cm, clip,width=0.9\linewidth]{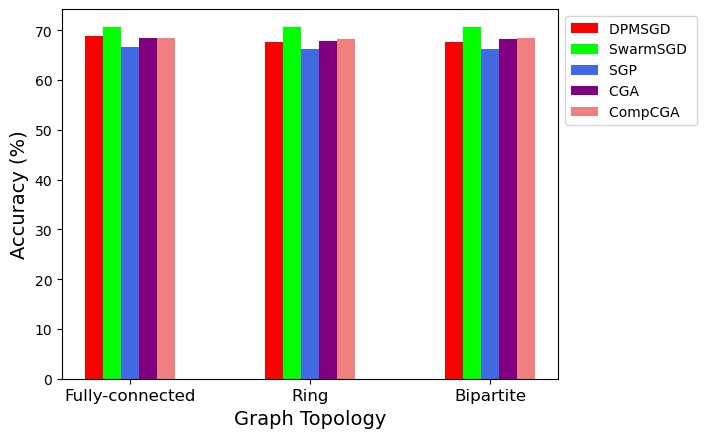}
\caption{}
\end{subfigure}
\begin{subfigure}[t]{0.24\linewidth}
\includegraphics[trim=0cm 0cm 3.5cm 0cm, clip,width=0.9\linewidth]{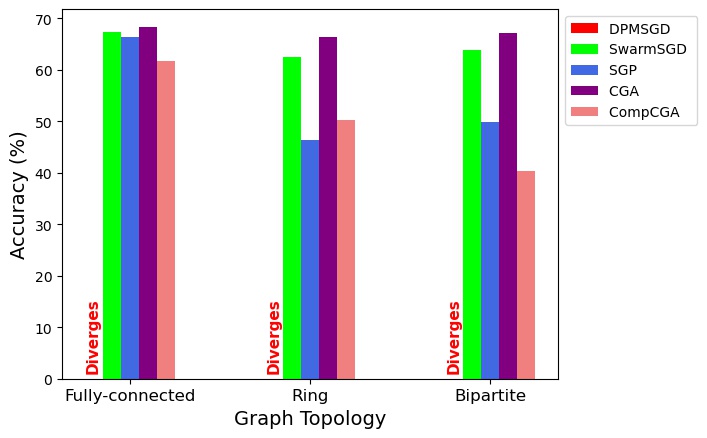}
\caption{}
\end{subfigure}
\centering
\begin{subfigure}[t]{0.24\linewidth}
\includegraphics[trim=0cm 0cm 3.5cm 0cm, clip,width=0.9\linewidth]{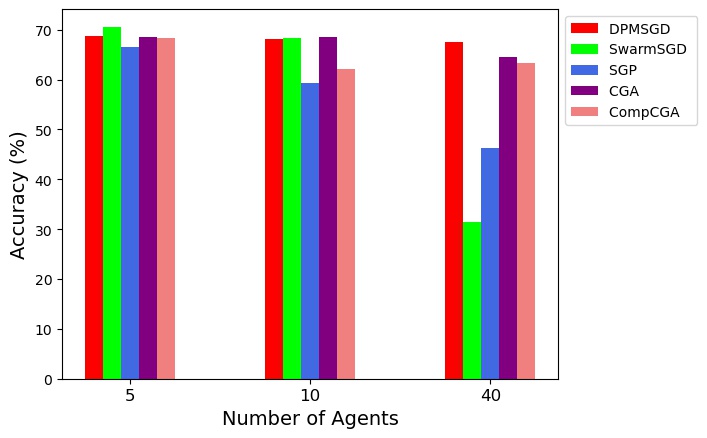}
\caption{}
\end{subfigure}
\begin{subfigure}[t]{0.24\linewidth}
\includegraphics[width=1.1\linewidth]{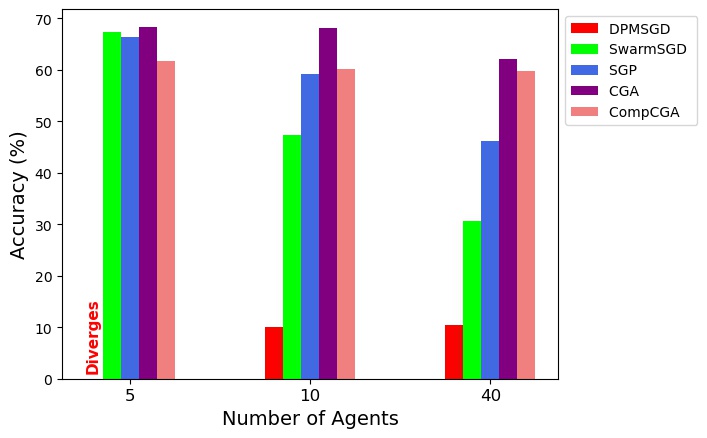}
\caption{}
\end{subfigure}
\caption{\textit{Average testing accuracy for different methods learning from (a) IID data distributions \textit{w.r.t} graph topology  (b) non-IID data distributions \textit{w.r.t} graph topology (c) IID data distributions \textit{w.r.t} the number of learning agents (d) non-IID data distributions \textit{w.r.t} the number of learning agents}}
\label{trendnag}
\end{figure*}


\subsection{Comparative evaluation}
We compare our proposed algorithm, \textit{CGA} and its compressed version, \textit{compCGA} with other state-of-the-art decentralized methods - DPMSGD, SGP and SwarmSGD. Note that in order to provide a fair comparison between the algorithms, there are some minor adjustments that we made during the experiments. For SGP optimizer, we considered the graph to be undirected where the connected agents could both send and receive information to and from each other. On top of that, the adjacency matrix $\mathbf{\Pi}$ in our experiments (a.k.a mixing matrix $P^{(k)}$ in \citet{assran2019stochastic}) is fixed throughout the training process. In the implementation of SwarmSGD, we defined the number of local SGD steps, $H=1$, where the selected pair of agents perform only a single local SGD update before averaging their model parameters. In term of graph topologies, SwarmSGD was run not only on $r$-regular graphs (fully connected and ring) as described in \citet{nadiradze2019swarmsgd}, we also performed experiments using bipartite graph topology which is not $r$-regular.

As a baseline, we first provide a comparative evaluation for IID data distributions in Table~\ref{CifarCNNiid}. Results show that \textit{CGA} performance is comparable with or slightly better than other methods in most cases with smaller number of agents, i.e., $5$ and $10$. However, we do observe a noticeable reduction in testing accuracy for SGP and SwarmSGD with $40$ agents communicating over Ring or Bipartite graphs (which is an expected trend as reported in~\citet{assran2019stochastic} and~\citet{sattler2019robust}). While the testing accuracy of \textit{CGA} also decreases in these scenarios, the performance reduction is not as drastic in comparison. The performance of \textit{compCGA} deteriorates slightly compared to \textit{CGA}, while still maintaining better accuracy than other methods in most scenarios.    

The advantage of \textit{CGA} is much more pronounced under non-IID data distributions as seen in Table~\ref{CifarCNNnon}. With extreme non-IID data distributions, \textit{CGA} achieves the highest accuracy for all scenarios with different number of learning agents and communication graph topologies. In contrast, the baseline method DPMSGD struggles significantly in all scenarios with non-IID data. Other methods (SGP and SwarmSGD) while having similar performance as \textit{CGA} for $5$ agents and fully connected topology, their performances drop significantly more than that of \textit{CGA} with higher number of agents and sparser communication graphs. 
\textit{CompCGA} performs slightly worse than \textit{CGA}, while still maintaining better accuracy than other methods in most scenarios. 

Finally, we graphically summarize the overall trends that we observed in Figure~\ref{trendnag}. From Figure~\ref{trendnag} (a) and (b), it is clear that while there is no appreciable impact of graph topology on testing accuracy under IID data distributions, the impact is quite significant under non-IID data distributions. The results shown here are with $5$ agents (see \textit{Supplementary Section}~\ref{cifarvggapp} for $10$ and $40$ agents). In this case, testing accuracy decreases for sparse graph topologies which conforms with observations made in~\citet{sattler2019robust}. 
Figure~\ref{trendnag} (c) and (d) show accuracy trends with respect to the number of agents. All results shown here are with fully connected topology (see \textit{Supplementary Section}~\ref{cifarvggapp} for other topologies). In this regard,~\citet{assran2019stochastic} shows a slight reduction in accuracy when the number of nodes/agents increase for both SGP and DPSGD methods. We see a similar trend here for both IID and non-IID data distributions. Clearly, the impact is more pronounced for non-IID data. However, the performance decrease with increase in number agents remain small for both \textit{CGA} and \textit{CompCGA} under non-IID data distributions. Also, as discussed in Table~\ref{table1}, we notice that the communication rounds for \textit{CGA} is twice as that of SGP and 4 times the communication round of SwarmSGD. Therefore, we looked at their convergence properties \textit{w.r.t} communication rounds. As Figure~\ref{comround} shows, CGA converges to a lower loss value after $200$ communication rounds.

\begin{figure}[ht]
\includegraphics[width=1\linewidth]{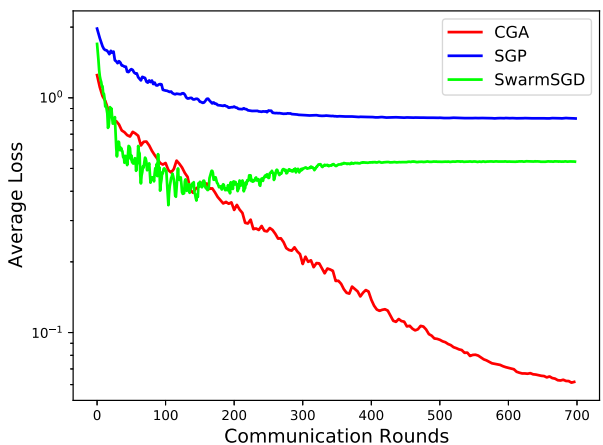}
\caption{\textit{Average training loss (log scale) for different algorithms \textit{w.r.t} communication rounds on non-IID data distributions (for 5 agents using CNN model architecture)}}
\label{comround}\end{figure}


\section{Conclusions}

In this paper, we propose the Cross-Gradient Aggregation (\textit{CGA}) algorithm to effectively learn from non-IID data distributions in a decentralized manner. We present convergence analysis for our proposed algorithm and show that we match the best known convergence rate for decentralized algorithms using \textit{CGA}. To reduce the communication overhead associated with \textit{CGA}, we propose a compressed variant of our algorithm (\textit{CompCGA}) and show its efficacy. Finally, we compare the performance of both \textit{CGA} and \textit{CompCGA} with state-of-the-art decentralized learning algorithms and show superior performance of our algorithms especially for the non-IID data distributions. Future research will focus on addressing performance reduction in scenarios with a large number of agents communicating over sparse graph topologies. 

\section{Acknowledgements}
This work was partly supported by the National Science Foundation under grants CAREER-1845969 and CAREER CCF-2005804. We would also like to thank NVIDIA\textsuperscript{\textregistered} for providing GPUs used for testing the algorithms developed during this research. This work also used the Extreme Science and Engineering Discovery Environment (XSEDE), which is supported by NSF grant ACI-1548562 and the Bridges system supported by NSF grant ACI-1445606, at the Pittsburgh Supercomputing Center (PSC).

\bibliography{references}

\begin{thebibliography}{42}
\providecommand{\natexlab}[1]{#1}
\providecommand{\url}[1]{\texttt{#1}}
\expandafter\ifx\csname urlstyle\endcsname\relax
  \providecommand{\doi}[1]{doi: #1}\else
  \providecommand{\doi}{doi: \begingroup \urlstyle{rm}\Url}\fi

\bibitem[Alistarh et~al.(2018)Alistarh, Hoefler, Johansson, Konstantinov,
  Khirirat, and Renggli]{alistarh2018convergence}
Dan Alistarh, Torsten Hoefler, Mikael Johansson, Nikola Konstantinov, Sarit
  Khirirat, and C{\'e}dric Renggli.
\newblock The convergence of sparsified gradient methods.
\newblock In \emph{Advances in Neural Information Processing Systems}, pages
  5973--5983, 2018.

\bibitem[Arjevani et~al.(2020)Arjevani, Bruna, Can, G{\"u}rb{\"u}zbalaban,
  Jegelka, and Lin]{arjevani2020ideal}
Yossi Arjevani, Joan Bruna, Bugra Can, Mert G{\"u}rb{\"u}zbalaban, Stefanie
  Jegelka, and Hongzhou Lin.
\newblock Ideal: Inexact decentralized accelerated augmented lagrangian method.
\newblock \emph{arXiv preprint arXiv:2006.06733}, 2020.

\bibitem[Assran et~al.(2019)Assran, Loizou, Ballas, and
  Rabbat]{assran2019stochastic}
Mahmoud Assran, Nicolas Loizou, Nicolas Ballas, and Mike Rabbat.
\newblock Stochastic gradient push for distributed deep learning.
\newblock In \emph{International Conference on Machine Learning}, pages
  344--353. PMLR, 2019.

\bibitem[Balu et~al.(2021)Balu, Jiang, Tan, Hedge, Lee, and
  Sarkar]{balu2021decentralized}
Aditya Balu, Zhanhong Jiang, Sin~Yong Tan, Chinmay Hedge, Young~M Lee, and
  Soumik Sarkar.
\newblock Decentralized deep learning using momentum-accelerated consensus.
\newblock In \emph{ICASSP 2021-2021 IEEE International Conference on Acoustics,
  Speech and Signal Processing (ICASSP)}, pages 3675--3679. IEEE, 2021.

\bibitem[Boyd et~al.(2006)Boyd, Ghosh, Prabhakar, and Shah]{boyd2006randomized}
Stephen Boyd, Arpita Ghosh, Balaji Prabhakar, and Devavrat Shah.
\newblock Randomized gossip algorithms.
\newblock \emph{IEEE transactions on information theory}, 52\penalty0
  (6):\penalty0 2508--2530, 2006.

\bibitem[Dekel et~al.(2012)Dekel, Gilad-Bachrach, Shamir, and
  Xiao]{dekel2012optimal}
Ofer Dekel, Ran Gilad-Bachrach, Ohad Shamir, and Lin Xiao.
\newblock Optimal distributed online prediction using mini-batches.
\newblock \emph{The Journal of Machine Learning Research}, 13:\penalty0
  165--202, 2012.

\bibitem[Haghighat et~al.(2020)Haghighat, Ravichandra-Mouli, Chakraborty,
  Esfandiari, Arabi, and Sharma]{haghighat2020applications}
Arya~Ketabchi Haghighat, Varsha Ravichandra-Mouli, Pranamesh Chakraborty,
  Yasaman Esfandiari, Saeed Arabi, and Anuj Sharma.
\newblock Applications of deep learning in intelligent transportation systems.
\newblock \emph{Journal of Big Data Analytics in Transportation}, 2\penalty0
  (2):\penalty0 115--145, 2020.

\bibitem[Hsieh et~al.(2019)Hsieh, Phanishayee, Mutlu, and
  Gibbons]{hsieh2019non}
Kevin Hsieh, Amar Phanishayee, Onur Mutlu, and Phillip~B Gibbons.
\newblock The non-iid data quagmire of decentralized machine learning.
\newblock \emph{arXiv preprint arXiv:1910.00189}, 2019.

\bibitem[Jiang et~al.(2017)Jiang, Balu, Hegde, and
  Sarkar]{jiang2017collaborative}
Zhanhong Jiang, Aditya Balu, Chinmay Hegde, and Soumik Sarkar.
\newblock Collaborative deep learning in fixed topology networks.
\newblock \emph{Advances in Neural Information Processing Systems},
  2017:\penalty0 5905--5915, 2017.

\bibitem[Jiang et~al.(2018)Jiang, Balu, Hegde, and Sarkar]{jiang2018consensus}
Zhanhong Jiang, Aditya Balu, Chinmay Hegde, and Soumik Sarkar.
\newblock On consensus-optimality trade-offs in collaborative deep learning.
\newblock \emph{arXiv preprint arXiv:1805.12120}, 2018.

\bibitem[Kairouz et~al.(2019)Kairouz, McMahan, Avent, Bellet, Bennis, Bhagoji,
  Bonawitz, Charles, Cormode, Cummings, et~al.]{kairouz2019advances}
Peter Kairouz, H~Brendan McMahan, Brendan Avent, Aur{\'e}lien Bellet, Mehdi
  Bennis, Arjun~Nitin Bhagoji, Keith Bonawitz, Zachary Charles, Graham Cormode,
  Rachel Cummings, et~al.
\newblock Advances and open problems in federated learning.
\newblock \emph{arXiv preprint arXiv:1912.04977}, 2019.

\bibitem[Karimireddy et~al.(2019{\natexlab{a}})Karimireddy, Kale, Mohri, Reddi,
  Stich, and Suresh]{karimireddy2019scaffold}
Sai~Praneeth Karimireddy, Satyen Kale, Mehryar Mohri, Sashank~J Reddi,
  Sebastian~U Stich, and Ananda~Theertha Suresh.
\newblock Scaffold: Stochastic controlled averaging for on-device federated
  learning.
\newblock \emph{arXiv preprint arXiv:1910.06378}, 2019{\natexlab{a}}.

\bibitem[Karimireddy et~al.(2019{\natexlab{b}})Karimireddy, Rebjock, Stich, and
  Jaggi]{karimireddy2019error}
Sai~Praneeth Karimireddy, Quentin Rebjock, Sebastian~U Stich, and Martin Jaggi.
\newblock Error feedback fixes signsgd and other gradient compression schemes.
\newblock \emph{arXiv preprint arXiv:1901.09847}, 2019{\natexlab{b}}.

\bibitem[Kempe et~al.(2003)Kempe, Dobra, and Gehrke]{kempe2003gossip}
David Kempe, Alin Dobra, and Johannes Gehrke.
\newblock Gossip-based computation of aggregate information.
\newblock In \emph{44th Annual IEEE Symposium on Foundations of Computer
  Science, 2003. Proceedings.}, pages 482--491. IEEE, 2003.

\bibitem[Koloskova et~al.(2019)Koloskova, Lin, Stich, and
  Jaggi]{koloskova2019decentralized}
Anastasia Koloskova, Tao Lin, Sebastian~U Stich, and Martin Jaggi.
\newblock Decentralized deep learning with arbitrary communication compression.
\newblock \emph{arXiv preprint arXiv:1907.09356}, 2019.

\bibitem[Koloskova et~al.(2020)Koloskova, Loizou, Boreiri, Jaggi, and
  Stich]{koloskova2020unified}
Anastasia Koloskova, Nicolas Loizou, Sadra Boreiri, Martin Jaggi, and
  Sebastian~U Stich.
\newblock A unified theory of decentralized sgd with changing topology and
  local updates.
\newblock \emph{arXiv preprint arXiv:2003.10422}, 2020.

\bibitem[Kone{\v{c}}n{\`y} et~al.(2016)Kone{\v{c}}n{\`y}, McMahan, Ramage, and
  Richt{\'a}rik]{konevcny2016federated}
Jakub Kone{\v{c}}n{\`y}, H~Brendan McMahan, Daniel Ramage, and Peter
  Richt{\'a}rik.
\newblock Federated optimization: Distributed machine learning for on-device
  intelligence.
\newblock \emph{arXiv preprint arXiv:1610.02527}, 2016.

\bibitem[Li et~al.(2019{\natexlab{a}})Li, Li, Wang, Li, Zhou, Guo, and
  Li]{li2019gradient}
Chengjie Li, Ruixuan Li, Haozhao Wang, Yuhua Li, Pan Zhou, Song Guo, and Keqin
  Li.
\newblock Gradient scheduling with global momentum for non-iid data distributed
  asynchronous training.
\newblock \emph{arXiv preprint arXiv:1902.07848}, 2019{\natexlab{a}}.

\bibitem[Li et~al.(2018)Li, Sahu, Zaheer, Sanjabi, Talwalkar, and
  Smith]{li2018federated}
Tian Li, Anit~Kumar Sahu, Manzil Zaheer, Maziar Sanjabi, Ameet Talwalkar, and
  Virginia Smith.
\newblock Federated optimization in heterogeneous networks.
\newblock \emph{arXiv preprint arXiv:1812.06127}, 2018.

\bibitem[Li et~al.(2019{\natexlab{b}})Li, Huang, Yang, Wang, and
  Zhang]{li2019convergence}
Xiang Li, Kaixuan Huang, Wenhao Yang, Shusen Wang, and Zhihua Zhang.
\newblock On the convergence of fedavg on non-iid data.
\newblock \emph{arXiv preprint arXiv:1907.02189}, 2019{\natexlab{b}}.

\bibitem[Lian et~al.(2017)Lian, Zhang, Zhang, Hsieh, Zhang, and
  Liu]{lian2017can}
Xiangru Lian, Ce~Zhang, Huan Zhang, Cho-Jui Hsieh, Wei Zhang, and Ji~Liu.
\newblock Can decentralized algorithms outperform centralized algorithms? a
  case study for decentralized parallel stochastic gradient descent.
\newblock In \emph{Advances in Neural Information Processing Systems}, pages
  5330--5340, 2017.

\bibitem[Lopez-Paz and Ranzato(2017)]{lopez2017gradient}
David Lopez-Paz and Marc'Aurelio Ranzato.
\newblock Gradient episodic memory for continual learning.
\newblock In \emph{Advances in Neural Information Processing Systems}, pages
  6467--6476, 2017.

\bibitem[Lu and De~Sa(2020)]{lu2020moniqua}
Yucheng Lu and Christopher De~Sa.
\newblock Moniqua: Modulo quantized communication in decentralized sgd.
\newblock \emph{arXiv preprint arXiv:2002.11787}, 2020.

\bibitem[McMahan et~al.(2017)McMahan, Moore, Ramage, Hampson, and
  y~Arcas]{mcmahan2017communication}
Brendan McMahan, Eider Moore, Daniel Ramage, Seth Hampson, and Blaise~Aguera
  y~Arcas.
\newblock Communication-efficient learning of deep networks from decentralized
  data.
\newblock In \emph{Artificial Intelligence and Statistics}, pages 1273--1282.
  PMLR, 2017.

\bibitem[Nadiradze et~al.(2019)Nadiradze, Sabour, Alistarh, Sharma, Markov, and
  Aksenov]{nadiradze2019swarmsgd}
Giorgi Nadiradze, Amirmojtaba Sabour, Dan Alistarh, Aditya Sharma, Ilia Markov,
  and Vitaly Aksenov.
\newblock {SwarmSGD}: Scalable decentralized {SGD} with local updates.
\newblock \emph{arXiv preprint arXiv:1910.12308}, 2019.

\bibitem[Nedi{\'c} et~al.(2018)Nedi{\'c}, Olshevsky, and
  Rabbat]{nedic2018network}
Angelia Nedi{\'c}, Alex Olshevsky, and Michael~G Rabbat.
\newblock Network topology and communication-computation tradeoffs in
  decentralized optimization.
\newblock \emph{Proceedings of the IEEE}, 106\penalty0 (5):\penalty0 953--976,
  2018.

\bibitem[Nesterov(2012)]{nesterov2012efficiency}
Yu~Nesterov.
\newblock Efficiency of coordinate descent methods on huge-scale optimization
  problems.
\newblock \emph{SIAM Journal on Optimization}, 22\penalty0 (2):\penalty0
  341--362, 2012.

\bibitem[Richt{\'a}rik and Tak{\'a}{\v{c}}(2016)]{richtarik2016distributed}
Peter Richt{\'a}rik and Martin Tak{\'a}{\v{c}}.
\newblock Distributed coordinate descent method for learning with big data.
\newblock \emph{The Journal of Machine Learning Research}, 17\penalty0
  (1):\penalty0 2657--2681, 2016.

\bibitem[Rothchild et~al.(2020)Rothchild, Panda, Ullah, Ivkin, Stoica,
  Braverman, Gonzalez, and Arora]{rothchild2020fetchsgd}
Daniel Rothchild, Ashwinee Panda, Enayat Ullah, Nikita Ivkin, Ion Stoica,
  Vladimir Braverman, Joseph Gonzalez, and Raman Arora.
\newblock Fetchsgd: Communication-efficient federated learning with sketching.
\newblock \emph{arXiv preprint arXiv:2007.07682}, 2020.

\bibitem[Sattler et~al.(2019)Sattler, Wiedemann, M{\"u}ller, and
  Samek]{sattler2019robust}
Felix Sattler, Simon Wiedemann, Klaus-Robert M{\"u}ller, and Wojciech Samek.
\newblock Robust and communication-efficient federated learning from non-iid
  data.
\newblock \emph{IEEE transactions on neural networks and learning systems},
  2019.

\bibitem[Scaman et~al.(2018)Scaman, Bach, Bubeck, Massouli{\'e}, and
  Lee]{scaman2018optimal}
Kevin Scaman, Francis Bach, S{\'e}bastien Bubeck, Laurent Massouli{\'e}, and
  Yin~Tat Lee.
\newblock Optimal algorithms for non-smooth distributed optimization in
  networks.
\newblock In \emph{Advances in Neural Information Processing Systems}, pages
  2740--2749, 2018.

\bibitem[Seide et~al.(2014)Seide, Fu, Droppo, Li, and Yu]{seide20141}
Frank Seide, Hao Fu, Jasha Droppo, Gang Li, and Dong Yu.
\newblock 1-bit stochastic gradient descent and its application to
  data-parallel distributed training of speech dnns.
\newblock In \emph{Fifteenth Annual Conference of the International Speech
  Communication Association}, 2014.

\bibitem[Shoham et~al.(2019)Shoham, Avidor, Keren, Israel, Benditkis,
  Mor-Yosef, and Zeitak]{shoham2019overcoming}
Neta Shoham, Tomer Avidor, Aviv Keren, Nadav Israel, Daniel Benditkis, Liron
  Mor-Yosef, and Itai Zeitak.
\newblock Overcoming forgetting in federated learning on non-iid data.
\newblock \emph{arXiv preprint arXiv:1910.07796}, 2019.

\bibitem[Simonyan and Zisserman(2014)]{simonyan2014very}
Karen Simonyan and Andrew Zisserman.
\newblock Very deep convolutional networks for large-scale image recognition.
\newblock \emph{arXiv preprint arXiv:1409.1556}, 2014.

\bibitem[Tang et~al.(2019)Tang, Lian, Qiu, Yuan, Zhang, Zhang, and
  Liu]{DBLP:journals/corr/abs-1907-07346}
Hanlin Tang, Xiangru Lian, Shuang Qiu, Lei Yuan, Ce~Zhang, Tong Zhang, and
  Ji~Liu.
\newblock Deepsqueeze: Parallel stochastic gradient descent with double-pass
  error-compensated compression.
\newblock \emph{CoRR}, abs/1907.07346, 2019.
\newblock URL \url{http://arxiv.org/abs/1907.07346}.

\bibitem[Tong et~al.(2020)Tong, Liang, and Bi]{tong2020effective}
Qianqian Tong, Guannan Liang, and Jinbo Bi.
\newblock Effective federated adaptive gradient methods with non-iid
  decentralized data.
\newblock \emph{arXiv preprint arXiv:2009.06557}, 2020.

\bibitem[Vogels et~al.(2020)Vogels, Karimireddy, and
  Jaggi]{vogels2020practical}
Thijs Vogels, Sai~Praneeth Karimireddy, and Martin Jaggi.
\newblock Practical low-rank communication compression in decentralized deep
  learning.
\newblock \emph{Advances in Neural Information Processing Systems}, 33, 2020.

\bibitem[Wang et~al.(2019)Wang, Tantia, Ballas, and Rabbat]{wang2019slowmo}
Jianyu Wang, Vinayak Tantia, Nicolas Ballas, and Michael Rabbat.
\newblock Slowmo: Improving communication-efficient distributed sgd with slow
  momentum.
\newblock \emph{arXiv preprint arXiv:1910.00643}, 2019.

\bibitem[Wang et~al.(2020)Wang, Liu, Liang, Joshi, and Poor]{wang2020tackling}
Jianyu Wang, Qinghua Liu, Hao Liang, Gauri Joshi, and H~Vincent Poor.
\newblock Tackling the objective inconsistency problem in heterogeneous
  federated optimization.
\newblock \emph{arXiv preprint arXiv:2007.07481}, 2020.

\bibitem[Xiao and Boyd(2004)]{xiao2004fast}
Lin Xiao and Stephen Boyd.
\newblock Fast linear iterations for distributed averaging.
\newblock \emph{Systems \& Control Letters}, 53\penalty0 (1):\penalty0 65--78,
  2004.

\bibitem[Yu et~al.(2019)Yu, Jin, and Yang]{yu2019linear}
Hao Yu, Rong Jin, and Sen Yang.
\newblock On the linear speedup analysis of communication efficient momentum
  sgd for distributed non-convex optimization.
\newblock \emph{arXiv preprint arXiv:1905.03817}, 2019.

\bibitem[Zhao et~al.(2018)Zhao, Li, Lai, Suda, Civin, and
  Chandra]{zhao2018federated}
Yue Zhao, Meng Li, Liangzhen Lai, Naveen Suda, Damon Civin, and Vikas Chandra.
\newblock Federated learning with non-iid data.
\newblock \emph{arXiv preprint arXiv:1806.00582}, 2018.

\end{thebibliography}
\bibliographystyle{plainnat}
\clearpage
\newpage
\onecolumn
\appendix
\section{Appendix}

\subsection{Proof of Lemma 1}\label{lem_1_proof}
This section presents the detailed proof for Lemma~\ref{lemma_1}. To begin with, we provide some technical auxiliary lemmas and the associated proof. We start with bounding the ensemble average of local optimal gradients.

The core update law for \textit{CGA} is:

\begin{lem}\label{lemma_2}
Let all assumptions hold. Let $g^i$ be the unbiased estimate of $\nabla f_i(\mathbf{x}^i)$ at the point $\mathbf{x}^i$ such that $\mathbb{E}[\mathbf{g}^i]=\nabla f_i(\mathbf{x}^i)$, for all $i\in[N]:=\{1,2,...,N\}$. Thus the following relationship holds
\begin{equation}
    \mathbb{E}\bigg[\bigg\|\frac{1}{N}\sum_{i=1}^N\tilde{\mathbf{g}}^i\bigg\|^2\bigg]\leq \frac{2\sigma^2}{N}+
    2\mathbb{E}\bigg[\bigg\|\frac{1}{N}\sum_{i=1}^N\nabla f_i(\mathbf{x}^i)\bigg\|^2\bigg]+2\epsilon^2.
\end{equation}
\end{lem}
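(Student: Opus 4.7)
The plan is to split the quantity on the left via the identity $\tilde{\mathbf{g}}^i = (\tilde{\mathbf{g}}^i - \mathbf{g}^i) + \mathbf{g}^i$ and use the elementary inequality $\|a+b\|^2 \leq 2\|a\|^2 + 2\|b\|^2$ to bound it by two pieces: a ``projection deviation'' term that will be handled by Assumption~\ref{assum_3}, and a ``stochastic gradient'' term that will be handled by Assumption~\ref{assum_2}. Concretely,
$$
\mathbb{E}\bigg\|\frac{1}{N}\sum_{i=1}^N\tilde{\mathbf{g}}^i\bigg\|^2 \leq 2\,\mathbb{E}\bigg\|\frac{1}{N}\sum_{i=1}^N(\tilde{\mathbf{g}}^i - \mathbf{g}^i)\bigg\|^2 + 2\,\mathbb{E}\bigg\|\frac{1}{N}\sum_{i=1}^N \mathbf{g}^i\bigg\|^2.
$$

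For the first piece, I would apply Jensen's inequality $\|\frac{1}{N}\sum_i a_i\|^2 \leq \frac{1}{N}\sum_i \|a_i\|^2$ and then invoke Assumption~\ref{assum_3} term by term to obtain the bound $\epsilon^2$, which after the factor of $2$ yields the $2\epsilon^2$ summand.

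For the second piece, I would further decompose $\mathbf{g}^i = (\mathbf{g}^i - \nabla f_i(\mathbf{x}^i)) + \nabla f_i(\mathbf{x}^i)$. Since $\mathbb{E}[\mathbf{g}^i - \nabla f_i(\mathbf{x}^i)] = 0$ and the stochastic samples $\zeta_i \sim \mathcal{D}_i$ are drawn independently across agents (conditioned on the current parameters $\{\mathbf{x}^i\}$), the cross terms in the expanded squared norm vanish in expectation. This gives
$$
\mathbb{E}\bigg\|\frac{1}{N}\sum_i \mathbf{g}^i\bigg\|^2 = \frac{1}{N^2}\sum_i \mathbb{E}\|\mathbf{g}^i - \nabla f_i(\mathbf{x}^i)\|^2 + \mathbb{E}\bigg\|\frac{1}{N}\sum_i \nabla f_i(\mathbf{x}^i)\bigg\|^2,
$$
and Assumption~\ref{assum_2} (read as a variance bound by $\sigma^2$) controls the first term by $\sigma^2/N$. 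Combining the two pieces reproduces exactly the claimed inequality.

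The main obstacle is not computational but a modeling one: the cancellation of cross terms requires that the sampling noises $\mathbf{g}^i - \nabla f_i(\mathbf{x}^i)$ are mutually independent across agents (conditional on the iterates), which is natural given that each agent mini-batches from its own private $\mathcal{D}_i$, but is not explicitly spelled out in Assumption~\ref{assum_2}. I would note this standard assumption at the beginning of the proof and then the remaining algebra is a routine three-line application of Jensen, $2\|\cdot\|^2$ splitting, and the two variance bounds.
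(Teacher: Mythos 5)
Your proof is correct and follows essentially the same route as the paper: the same decomposition $\tilde{\mathbf{g}}^i = (\tilde{\mathbf{g}}^i - \mathbf{g}^i) + \mathbf{g}^i$, the same $2\|\cdot\|^2$ splitting and Jensen step feeding into Assumption~\ref{assum_3}, and the same variance bound $\tfrac{\sigma^2}{N} + \mathbb{E}\|\tfrac{1}{N}\sum_i \nabla f_i(\mathbf{x}^i)\|^2$ for the stochastic-gradient piece. The only difference is that the paper obtains that last bound by citing Lemma~1 of \citet{yu2019linear}, whereas you prove it inline via the zero-mean, cross-agent-independent noise decomposition --- and your remark that this independence is implicit rather than stated in Assumption~\ref{assum_2} is an accurate observation about the paper as well.
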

\begin{proof}

\begin{equation}
\begin{split}
    &\mathbb{E}\bigg[\bigg\|\frac{1}{N}\sum_{i=1}^N\tilde{\mathbf{g}}^i\bigg\|^2\bigg] =
    \mathbb{E}\bigg[\bigg\|\frac{1}{N}\sum_{i=1}^N(\tilde{\mathbf{g}}^i -\mathbf{g}^i+\mathbf{g}^i )\bigg\|^2\bigg] = \mathbb{E}\bigg[\bigg\|\frac{1}{N}\sum_{i=1}^N(\tilde{\mathbf{g}}^i -\mathbf{g}^i)+\frac{1}{N}\sum_{i=1}^N \mathbf{g}^i \bigg\|^2\bigg]\\
    &\overset{a}{\leq} 2\mathbb{E}\bigg[\bigg\|\frac{1}{N}\sum_{i=1}^N(\tilde{\mathbf{g}}^i -\mathbf{g}^i)\bigg\|^2+\bigg\|\frac{1}{N}\sum_{i=1}^N \mathbf{g}^i\| \bigg\|^2\bigg]
    \overset{b}{\leq} 2 \frac{1}{N^2}  \mathbb{E}\bigg[N \sum_{i=1}^N\bigg\|\tilde{\mathbf{g}}^i -\mathbf{g}^i \bigg\|^2\bigg] +2 (\frac{\sigma^2}{N} + \mathbb{E}\bigg[\bigg\|\frac{1}{N}\sum_{i=1}^N\nabla f_i(\mathbf{x}^i)\bigg\|^2\bigg])\\
    & \leq \frac{2}{N}  \mathbb{E}\bigg[ \sum_{i=1}^N\bigg\|\tilde{\mathbf{g}}^i -\mathbf{g}^i \bigg\|^2\bigg] +2 \frac{\sigma^2}{N} + 2\mathbb{E}\bigg[\bigg\|\frac{1}{N}\sum_{i=1}^N\nabla f_i(\mathbf{x}^i)\bigg\|^2\bigg] 
    = \frac{2}{N}  \sum_{i=1}^N \mathbb{E}\bigg[\bigg\|\tilde{\mathbf{g}}^i -\mathbf{g}^i \bigg\|^2\bigg] +2 \frac{\sigma^2}{N} + 2\mathbb{E}\bigg[\bigg\|\frac{1}{N}\sum_{i=1}^N\nabla f_i(\mathbf{x}^i)\bigg\|^2\bigg] \\
    &\overset{c}{\leq} 2\epsilon^2 + \frac{2\sigma^2}{N} + 2\mathbb{E}\bigg[\bigg\|\frac{1}{N}\sum_{i=1}^N\nabla f_i(\mathbf{x}^i)\bigg\|^2\bigg] 
\end{split}
\end{equation}

(a) refers to the fact that the inequality $\|\mathbf{a}+\mathbf{b}\|^2\leq 2\|\mathbf{a}\|^2+2\|\mathbf{b}\|^2$. (b) holds as $\|\sum_{i=1}^N \mathbf{a}_i\|^2 \leq N \sum_{i=1}^N \|\mathbf{a}_i\|^2$. The second term in the second inequality is the conclusion of Lemma $1$ in ~\cite{yu2019linear} (c) follows from Assumption~\ref{assum_3}.
\end{proof}


Multiplying the update law by $\frac{1}{N}\mathbf{1}\mathbf{1}^\top$, where $\mathbf{1}$ is the column vector with entries being 1, we obtain: 
\begin{equation}\label{ave_update}
\begin{split}
    &\bar{\mathbf{v}}_k=\beta\bar{\mathbf{v}}_{k-1}-\alpha\frac{1}{N}\sum_{i=1}^N\tilde{\mathbf{g}}^i_{k-1}\\
    &\bar{\mathbf{x}}_k=\bar{\mathbf{x}}_{k-1}+\bar{\mathbf{v}}_{k}
\end{split}
\end{equation}
We define an auxiliary sequence such that
\begin{equation}\label{z_seq}
    \bar{\mathbf{z}}_k:=\frac{1}{1-\beta}\bar{\mathbf{x}}_k-\frac{\beta}{1-\beta}\bar{\mathbf{x}}_{k-1}
\end{equation}
Where $k>0$. If $k=0$ then $\bar{\mathbf{z}}_k=\bar{\mathbf{x}}_k$. For the rest of the analysis, the initial value will be directly set to $0$.


\begin{lem}\label{lemma_3}
Define the sequence $\{\bar{\mathbf{z}}_k\}_{k\geq 0}$ as in Eq.~\ref{z_seq}. Based on \textit{CGA}, we have the following relationship
\begin{equation}
    \bar{\mathbf{z}}_{k+1}-\bar{\mathbf{z}}_k=-\frac{\alpha}{1-\beta}\frac{1}{N}\sum_{i=1}^N\tilde{\mathbf{g}}^i_k.
\end{equation}
\end{lem}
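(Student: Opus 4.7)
The plan is to verify the identity by direct substitution, exploiting the fact that the auxiliary sequence $\bar{\mathbf{z}}_k$ was specifically constructed so that the momentum terms cancel. This is a classical trick in the analysis of momentum SGD (e.g., as used by Yu et al.) and the calculation is essentially algebraic.

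First, I would rewrite the averaged update law from equation~\eqref{ave_update} in ``difference'' form. Since $\bar{\mathbf{x}}_k = \bar{\mathbf{x}}_{k-1} + \bar{\mathbf{v}}_k$, we immediately obtain $\bar{\mathbf{v}}_k = \bar{\mathbf{x}}_k - \bar{\mathbf{x}}_{k-1}$, and hence $\bar{\mathbf{v}}_{k+1} = \bar{\mathbf{x}}_{k+1} - \bar{\mathbf{x}}_k$. I would also keep the momentum recursion $\bar{\mathbf{v}}_{k+1} = \beta \bar{\mathbf{v}}_k - \alpha \frac{1}{N}\sum_{i=1}^N \tilde{\mathbf{g}}_k^i$ on hand for later use.

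Next, I would compute $\bar{\mathbf{z}}_{k+1} - \bar{\mathbf{z}}_k$ directly from the definition \eqref{z_seq}:
\begin{equation*}
\bar{\mathbf{z}}_{k+1} - \bar{\mathbf{z}}_k = \frac{1}{1-\beta}(\bar{\mathbf{x}}_{k+1} - \bar{\mathbf{x}}_k) - \frac{\beta}{1-\beta}(\bar{\mathbf{x}}_k - \bar{\mathbf{x}}_{k-1}) = \frac{1}{1-\beta}\bar{\mathbf{v}}_{k+1} - \frac{\beta}{1-\beta}\bar{\mathbf{v}}_k.
\end{equation*}
Now the momentum recursion can be substituted into the first term, producing $\frac{\beta}{1-\beta}\bar{\mathbf{v}}_k - \frac{\alpha}{(1-\beta)N}\sum_{i=1}^N \tilde{\mathbf{g}}_k^i$, and the two $\bar{\mathbf{v}}_k$ contributions cancel exactly, leaving the desired expression.

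There is no genuine obstacle in this proof; the only subtlety is a bookkeeping check at the boundary $k=0$, where the definition of $\bar{\mathbf{z}}_k$ degenerates (the excerpt notes that $\bar{\mathbf{z}}_0 = \bar{\mathbf{x}}_0$ and that the initial value is set to $0$). I would briefly remark that the identity for $k\geq 1$ follows from the above chain of equalities, and that the $k=0$ case requires only the stated initialization convention to be consistent. The whole argument is three lines once the momentum identity $\bar{\mathbf{v}}_k = \bar{\mathbf{x}}_k - \bar{\mathbf{x}}_{k-1}$ is in hand.
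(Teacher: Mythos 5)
Your proposal is correct and follows essentially the same route as the paper: compute $\bar{\mathbf{z}}_{k+1}-\bar{\mathbf{z}}_k=\frac{1}{1-\beta}\bar{\mathbf{v}}_{k+1}-\frac{\beta}{1-\beta}\bar{\mathbf{v}}_k$ from the definition and cancel via the momentum recursion, with the $k=0$ boundary handled separately using $\bar{\mathbf{z}}_0=\bar{\mathbf{x}}_0$ and $\bar{\mathbf{v}}_0=0$. The paper labels its argument ``mathematical induction,'' but it is the same two-case direct verification you describe.
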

\begin{proof}
Using mathematical induction we have:
\begin{equation}
\begin{split}
    &k = 0:\\
    &\bar{\mathbf{z}}_{k+1}-\bar{\mathbf{z}}_k = \bar{\mathbf{z}}_{1}-\bar{\mathbf{z}}_0 = \frac{1}{1-\beta}\bar{\mathbf{x}}_{1}-\frac{\beta}{1-\beta}\bar{\mathbf{x}}_{0} - \bar{\mathbf{x}}_{0} = \frac{1}{1-\beta}(\bar{\mathbf{x}}_{1} - \bar{\mathbf{x}}_{0}) = 
    \frac{1}{1-\beta}(\bar{\mathbf{v}}_{1}) = \frac{-\alpha}{N(1-\beta)}\sum_{i=1}^N\tilde{\mathbf{g}}^i_{0}\\
    &k \geq 1:\\
    &\bar{\mathbf{z}}_{k+1}-\bar{\mathbf{z}}_k = \frac{1}{1-\beta}\bar{\mathbf{x}}_{k+1}-\frac{\beta}{1-\beta}\bar{\mathbf{x}}_{k} - \frac{1}{1-\beta}\bar{\mathbf{x}}_k+\frac{\beta}{1-\beta}\bar{\mathbf{x}}_{k-1} =\\ &\frac{1}{1-\beta}((\bar{\mathbf{x}}_{k+1}- \bar{\mathbf{x}}_{k}) - (\beta (\bar{\mathbf{x}}_{k}-\bar{\mathbf{x}}_{k-1}))) = \frac{1}{1-\beta} \underbrace{(\bar{\mathbf{v}}_{k+1} - \beta(\bar{\mathbf{v}}_{k}))}_{-\alpha\frac{1}{N}\sum_{i=1}^N\tilde{\mathbf{g}}^i_{k}} = \frac{-\alpha}{N(1-\beta)}\sum_{i=1}^N\tilde{\mathbf{g}}^i_{k}
\end{split}
\end{equation}
\end{proof}


\begin{lem}\label{lemma_4}
Define respectively the sequence $\{\bar{\mathbf{x}}_k\}_{k\geq 0}$ as in Eq.~\ref{ave_update} and the sequence $\{\bar{\mathbf{z}}_k\}_{k\geq 0}$ as in Eq.~\ref{z_seq}. For all $K\geq 1$, \textit{CGA} ensures the following relationship
\begin{equation}
    \sum_{k=0}^{K-1}\|\bar{\mathbf{z}}_k-\bar{\mathbf{x}}_k\|^2\leq \frac{\alpha^2\beta^2}{(1-\beta)^4}\sum_{k=0}^{K-1}\bigg\|\frac{1}{N}\sum_{i=1}^N\tilde{\mathbf{g}}^i_k\bigg\|^2.
\end{equation}
\end{lem}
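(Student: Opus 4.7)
The plan is to reduce the quantity $\bar{\mathbf{z}}_k - \bar{\mathbf{x}}_k$ to a weighted geometric sum of past averaged gradients, then apply a Jensen/Cauchy--Schwarz style estimate, and finally swap the order of summation to collapse the double geometric series.

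First I would rewrite $\bar{\mathbf{z}}_k - \bar{\mathbf{x}}_k$ directly from the definition in \eqref{z_seq}:
\begin{equation*}
\bar{\mathbf{z}}_k - \bar{\mathbf{x}}_k \;=\; \frac{1}{1-\beta}\bar{\mathbf{x}}_k - \frac{\beta}{1-\beta}\bar{\mathbf{x}}_{k-1} - \bar{\mathbf{x}}_k \;=\; \frac{\beta}{1-\beta}\,\bar{\mathbf{v}}_k ,
\end{equation*}
using $\bar{\mathbf{x}}_k - \bar{\mathbf{x}}_{k-1} = \bar{\mathbf{v}}_k$ from \eqref{ave_update}. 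Next, I would unroll the momentum recursion $\bar{\mathbf{v}}_k = \beta \bar{\mathbf{v}}_{k-1} - \frac{\alpha}{N}\sum_i \tilde{\mathbf{g}}^i_{k-1}$ starting from $\bar{\mathbf{v}}_0 = 0$ to obtain the closed form $\bar{\mathbf{v}}_k = -\alpha \sum_{t=0}^{k-1} \beta^{k-1-t} \frac{1}{N}\sum_i \tilde{\mathbf{g}}^i_t$. Substituting this in yields $\bar{\mathbf{z}}_k - \bar{\mathbf{x}}_k = -\frac{\alpha\beta}{1-\beta}\sum_{t=0}^{k-1}\beta^{k-1-t}\bar{\mathbf{g}}_t$, where $\bar{\mathbf{g}}_t := \frac{1}{N}\sum_i \tilde{\mathbf{g}}^i_t$.

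Then I would bound the squared norm of this weighted sum. Using the standard convexity/Jensen inequality $\bigl\|\sum_t a_t b_t\bigr\|^2 \leq \bigl(\sum_t a_t\bigr)\sum_t a_t \|b_t\|^2$ with nonnegative weights $a_t = \beta^{k-1-t}$, together with the geometric sum estimate $\sum_{t=0}^{k-1}\beta^{k-1-t} \leq \frac{1}{1-\beta}$, I get
\begin{equation*}
\|\bar{\mathbf{z}}_k - \bar{\mathbf{x}}_k\|^2 \;\leq\; \frac{\alpha^2\beta^2}{(1-\beta)^3}\sum_{t=0}^{k-1}\beta^{k-1-t}\,\|\bar{\mathbf{g}}_t\|^2 .
\end{equation*}

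Finally I would sum this from $k=0$ to $K-1$ and swap the order of summation: for each fixed $t$, the inner sum over $k$ becomes $\sum_{k=t+1}^{K-1}\beta^{k-1-t} \leq \frac{1}{1-\beta}$. This collapses the double geometric series and produces the advertised factor $\frac{\alpha^2\beta^2}{(1-\beta)^4}$ in front of $\sum_{k=0}^{K-1}\|\bar{\mathbf{g}}_k\|^2$, which is exactly the stated bound. The only mild subtlety — really the main thing to be careful about — is keeping the two applications of $\sum \beta^s \leq 1/(1-\beta)$ straight (one from Jensen, one from the order swap), since together they are what promote the denominator from $(1-\beta)^2$ up to $(1-\beta)^4$; all other steps are routine algebra.
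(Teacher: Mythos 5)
Your proposal is correct and follows essentially the same route as the paper: reduce $\bar{\mathbf{z}}_k-\bar{\mathbf{x}}_k$ to $\frac{\beta}{1-\beta}\bar{\mathbf{v}}_k$, unroll the momentum recursion into a geometric sum of averaged gradients, apply the normalized Jensen bound with $\sum_{\tau}\beta^{k-1-\tau}\leq\frac{1}{1-\beta}$, and then swap the order of summation to absorb a second factor of $\frac{1}{1-\beta}$. The two applications of the geometric-sum bound that you flag are exactly how the paper arrives at the $(1-\beta)^4$ denominator as well.
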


\begin{proof}
As $\bar{v}_0 = 0$, we can apply \ref{ave_update} recursively to achieve an update rule for $\bar{v}_k$. Therefor, we have :
\begin{equation}\label{v_lem}
    \bar{\mathbf{v}}_k = -\alpha\sum_{\tau=0}^{k-1}\beta^{k-1-\tau}\Bigg[\frac{1}{N}\sum_{i=1}^N\tilde{\mathbf{g}}^i_\tau\Bigg]\;\;\;\;\forall k\geq 1
\end{equation}

Also, based on Eq.~\ref{z_seq} we have:
\begin{equation}\label{x_lem}
    \bar{\mathbf{z}}_k -\bar{\mathbf{x}}_k = \frac{\beta}{1-\beta}[\bar{\mathbf{x}}_k-\bar{\mathbf{x}}_{k-1}] = \frac{\beta}{1-\beta}\bar{\mathbf{v}}_k
\end{equation}

Based on Equations~\ref{v_lem} and \ref{x_lem} we have:
\begin{equation}\label{fin_lem}
    \bar{\mathbf{z}}_k -\bar{\mathbf{x}}_k =\frac{-\alpha\beta}{1-\beta}\sum_{\tau=0}^{k-1}\beta^{k-1-\tau}\Bigg[\frac{1}{N}\sum_{i=1}^N\tilde{\mathbf{g}}^i_\tau\Bigg]\;\;\;\;\forall k\geq 1
\end{equation}

We define $s_k = \sum_{\tau=0}^{k-1}\beta^{k-1-\tau} = \frac{1-{\beta}^k}{1-{\beta}} \;\;\;\forall k\geq 1$. We have:
\begin{equation}\label{z-x}
    \begin{split}
        &||\bar{\mathbf{z}}_k -\bar{\mathbf{x}}_k||^2 =\frac{\alpha^2\beta^2}{(1-\beta)^2}s_k^2\bigg\|\sum_{\tau=0}^{k-1}\frac{\beta^{k-1-\tau}}{s_k}\Bigg[\frac{1}{N}\sum_{i=1}^N\tilde{\mathbf{g}}^i_\tau\Bigg]\bigg\|^2 \overset{Jensen Inequality}{\leq}\\ &\frac{\alpha^2\beta^2}{(1-\beta)^2}s_k^2\sum_{\tau=0}^{k-1}\frac{\beta^{k-1-\tau}}{s_k}\bigg\|\Bigg[\frac{1}{N}\sum_{i=1}^N\tilde{\mathbf{g}}^i_\tau\Bigg]\bigg\|^2 = \frac{\alpha^2\beta^2(1-\beta^k)}{(1-\beta)^3}\sum_{\tau=0}^{k-1}\beta^{k-1-\tau}\bigg\|\Bigg[\frac{1}{N}\sum_{i=1}^N\tilde{\mathbf{g}}^i_\tau\Bigg]\bigg\|^2 \leq\\
        & \frac{\alpha^2\beta^2}{(1-\beta)^3}\sum_{\tau=0}^{k-1}\beta^{k-1-\tau}\bigg\|\Bigg[\frac{1}{N}\sum_{i=1}^N\tilde{\mathbf{g}}^i_\tau\Bigg]\bigg\|^2
    \end{split}
\end{equation}

Setting $K\geq1$, As $\bar{\mathbf{z}}_0 - \bar{\mathbf{x}}_0 = 0$, by summing Eq.~\ref{z-x} over $k \in \{1,2,\dots,K-1\}$:
\begin{equation}\label{z-x-fin}
    \begin{split}
        &\sum_{k=0}^{K-1}||\bar{\mathbf{z}}_k -\bar{\mathbf{x}}_k||^2 \leq  \frac{\alpha^2\beta^2}{(1-\beta)^3}\sum_{k=1}^{K-1}\sum_{\tau=0}^{k-1}\beta^{k-1-\tau}\bigg\|\Bigg[\frac{1}{N}\sum_{i=1}^N\tilde{\mathbf{g}}^i_\tau\Bigg]\bigg\|^2\\
        & = \frac{\alpha^2\beta^2}{(1-\beta)^3}\sum_{\tau=0}^{K-2}\bigg(\bigg\|\Bigg[\frac{1}{N}\sum_{i=1}^N\tilde{\mathbf{g}}^i_\tau\Bigg]\bigg\|^2 \sum_{l=\tau+1}^{K-1} \beta^{l-1-\tau}\bigg) \overset{a}{\leq}\\
        & \frac{\alpha^2\beta^2}{(1-\beta)^4}\sum_{\tau=0}^{K-2}\bigg\|\Bigg[\frac{1}{N}\sum_{i=1}^N\tilde{\mathbf{g}}^i_\tau\Bigg]\bigg\|^2 \leq
        \frac{\alpha^2\beta^2}{(1-\beta)^4}\sum_{\tau=0}^{K-1}\bigg\|\Bigg[\frac{1}{N}\sum_{i=1}^N\tilde{\mathbf{g}}^i_\tau\Bigg]\bigg\|^2
    \end{split}
\end{equation}

Here (a) refers to $\sum_{l=\tau+1}^{K-1} \beta^{l-1-\tau} = \frac{1-\beta^{K-1-\tau}}{1-\beta} \leq \frac{1}{1-\beta}$.
\end{proof}


Before proceeding to prove Lemma~\ref{lemma_1}, we introduce some key notations and facts that serve to characterize the lemma.

We define the following notations:
\begin{equation}\label{notation}
    \begin{split}
        &\tilde{\mathbf{G}}_k\triangleq [\tilde{\mathbf{g}}^1_k,\tilde{\mathbf{g}}^2_k,...,\tilde{\mathbf{g}}^N_k]\\
        &\mathbf{V}_k\triangleq[\mathbf{v}^1_k,\mathbf{v}^2_k,...,\mathbf{v}^N_k]\\
        &\mathbf{X}_k\triangleq[\mathbf{x}^1_k,\mathbf{x}^2_k,...,\mathbf{x}^N_k]\\
        &\mathbf{G}_k\triangleq[\mathbf{g}^1_k,\mathbf{g}^2_k,...,\mathbf{g}^N_k]\\
        &\mathbf{H}_k\triangleq[\nabla f_1(\mathbf{x}^1_k),\nabla f_2(\mathbf{x}^2_k),...,\nabla f_N(\mathbf{x}^N_k)]\\
    \end{split}
\end{equation}

We can observe that the above matrices are all with dimension $d\times N$ such that any matrix $\mathbf{A}$ satisfies $\|\mathbf{A}\|_\mathfrak{F}^2=\sum_{i=1}^N\|\mathbf{a}_i\|^2$, where $\mathbf{a}_i$ is the $i$-th column of the matrix $\mathbf{A}$.  Thus, we can obtain that:
\begin{equation}
    \|\mathbf{X}_k(\mathbf{I}-\mathbf{Q})\|_\mathfrak{F}^2=\sum_{i=1}^N\|\mathbf{x}^i_k-\bar{\mathbf{x}}_k\|^2.
\end{equation}
\begin{fact}\label{fact_2}
Define $\mathbf{Q}=\frac{1}{N}\mathbf{1}\mathbf{1}^\top$. For each doubly stochastic matrix $\mathbf{\Pi}$, the following properties can be obtained
\begin{itemize}
    \item $\mathbf{Q}\mathbf{\Pi}=\mathbf{\Pi}\mathbf{Q}$;
    \item $(\mathbf{I}-\mathbf{Q})\mathbf{\Pi}=\mathbf{\Pi}(\mathbf{I}-\mathbf{Q})$;
    \item For any integer $k\geq 1$, $\|(\mathbf{I}-\mathbf{Q})\mathbf{\Pi}\|_\mathfrak{S}\leq(\sqrt{\rho})^k$, where $\|\cdot\|_\mathfrak{S}$ is the spectrum norm of a matrix.
\end{itemize}
\end{fact}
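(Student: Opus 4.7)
The plan is to verify the three identities/inequalities in order, exploiting the fact that $\mathbf{Q}=\tfrac{1}{N}\mathbf{1}\mathbf{1}^\top$ is precisely the orthogonal projection onto the span of the all-ones vector, which is the leading eigenspace of every doubly stochastic matrix.

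First, for the identity $\mathbf{Q}\mathbf{\Pi}=\mathbf{\Pi}\mathbf{Q}$, I would use the two defining properties of a doubly stochastic matrix: $\mathbf{\Pi}\mathbf{1}=\mathbf{1}$ (row sums) and $\mathbf{1}^\top\mathbf{\Pi}=\mathbf{1}^\top$ (column sums). Substituting into $\mathbf{Q}\mathbf{\Pi}=\tfrac{1}{N}\mathbf{1}(\mathbf{1}^\top\mathbf{\Pi})=\tfrac{1}{N}\mathbf{1}\mathbf{1}^\top=\mathbf{Q}$, and similarly $\mathbf{\Pi}\mathbf{Q}=\tfrac{1}{N}(\mathbf{\Pi}\mathbf{1})\mathbf{1}^\top=\mathbf{Q}$, so both sides equal $\mathbf{Q}$ itself. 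The second bullet is then a one-line algebraic consequence: $(\mathbf{I}-\mathbf{Q})\mathbf{\Pi}=\mathbf{\Pi}-\mathbf{Q}\mathbf{\Pi}=\mathbf{\Pi}-\mathbf{\Pi}\mathbf{Q}=\mathbf{\Pi}(\mathbf{I}-\mathbf{Q})$.

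For the third bullet (which I read as $\|(\mathbf{I}-\mathbf{Q})\mathbf{\Pi}^k\|_{\mathfrak{S}}\le(\sqrt{\rho})^k$, the exponent on $\mathbf{\Pi}$ being required to make sense of the bound for general $k$), I would invoke Assumption~\ref{assum_5} together with the symmetry of $\mathbf{\Pi}$ noted in the problem formulation. Symmetry gives a spectral decomposition $\mathbf{\Pi}=\sum_{i=1}^N\lambda_i(\mathbf{\Pi})\,\mathbf{u}_i\mathbf{u}_i^\top$ with orthonormal eigenvectors. The top eigenvalue is $\lambda_1(\mathbf{\Pi})=1$ with eigenvector $\mathbf{u}_1=\tfrac{1}{\sqrt{N}}\mathbf{1}$, so $\mathbf{Q}=\mathbf{u}_1\mathbf{u}_1^\top$ is exactly the spectral projector associated with $\lambda_1$. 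Hence $\mathbf{I}-\mathbf{Q}=\sum_{i\ge 2}\mathbf{u}_i\mathbf{u}_i^\top$, and the commutativity established above lets me write
\begin{equation*}
(\mathbf{I}-\mathbf{Q})\mathbf{\Pi}^k=\sum_{i=2}^N\lambda_i(\mathbf{\Pi})^k\,\mathbf{u}_i\mathbf{u}_i^\top.
\end{equation*}
Taking the spectral norm of this symmetric matrix gives $\max_{i\ge 2}|\lambda_i(\mathbf{\Pi})|^k$, which by Assumption~\ref{assum_5} is bounded by $(\sqrt{\rho})^k$.

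Since each part reduces to either a direct substitution using $\mathbf{\Pi}\mathbf{1}=\mathbf{1}$ or to reading off the spectral decomposition, there is no serious obstacle. The only subtle point is making explicit that the symmetry of $\mathbf{\Pi}$ (used to get an orthonormal eigenbasis, so that $\|\cdot\|_{\mathfrak{S}}$ equals the largest absolute eigenvalue) is part of the standing assumption on the mixing matrix; if one only assumed $\mathbf{\Pi}$ doubly stochastic without symmetry, the third claim would require a different argument via singular values of $(\mathbf{I}-\mathbf{Q})\mathbf{\Pi}^k$ rather than eigenvalues of $\mathbf{\Pi}$.
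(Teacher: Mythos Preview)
Your proposal is correct; each of the three parts is handled by the standard argument, and your reading of the third bullet as $\|(\mathbf{I}-\mathbf{Q})\mathbf{\Pi}^k\|_{\mathfrak{S}}\le(\sqrt{\rho})^k$ is the only sensible interpretation given the presence of $k$ on the right. Note that the paper itself does not prove this fact at all: it explicitly states that the properties in Facts~\ref{fact_2} and~\ref{fact_1} ``have been well established and in this context, we skip the proof here,'' so there is no paper proof to compare against.
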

\begin{fact}\label{fact_1}
Let $\mathbf{A}_i, i\in\{1,2,...,N\}$ be $N$ arbitrary real square matrices. It follows that
\begin{equation}
    \|\sum_{i=1}^N\mathbf{A}_i\|^2_\mathfrak{F}\leq \sum_{i=1}^N\sum_{j=1}^N\|\mathbf{A}_i\|_\mathfrak{F}\|\mathbf{A}_j\|_\mathfrak{F}.
\end{equation}
\end{fact}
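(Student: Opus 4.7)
The plan is to reduce the claim to the triangle inequality for the Frobenius norm followed by elementary expansion of a square of a sum. No machinery beyond basic matrix norm properties is needed.

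First, I would invoke that the Frobenius norm is a genuine matrix norm (equivalently, it is the $\ell_2$ norm of the vectorized matrix, so it inherits every norm property from the Euclidean norm on $\mathbb{R}^{d^2}$). In particular, the triangle inequality applied inductively to the sum yields
\begin{equation*}
\Bigl\|\sum_{i=1}^N \mathbf{A}_i\Bigr\|_\mathfrak{F} \;\le\; \sum_{i=1}^N \|\mathbf{A}_i\|_\mathfrak{F}.
\end{equation*}

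Second, since both sides of the above inequality are non-negative real numbers, squaring preserves the inequality, giving
\begin{equation*}
\Bigl\|\sum_{i=1}^N \mathbf{A}_i\Bigr\|_\mathfrak{F}^2 \;\le\; \Bigl(\sum_{i=1}^N \|\mathbf{A}_i\|_\mathfrak{F}\Bigr)^2.
\end{equation*}

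Third, I would expand the square on the right-hand side using the elementary identity $(\sum_{i} a_i)^2 = \sum_{i}\sum_{j} a_i a_j$ with $a_i = \|\mathbf{A}_i\|_\mathfrak{F}$, which produces exactly the double sum $\sum_{i=1}^N\sum_{j=1}^N \|\mathbf{A}_i\|_\mathfrak{F}\|\mathbf{A}_j\|_\mathfrak{F}$ appearing in the statement. Chaining the three displays completes the proof. There is no genuine obstacle here; the only thing worth double-checking is that the matrices being square is inessential to the argument (the Frobenius norm and its triangle inequality work for any fixed matrix shape), so the square assumption is used only to make the hypothesis match the context in which the fact will be applied later.
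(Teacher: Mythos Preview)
Your argument is correct: the triangle inequality for the Frobenius norm, squaring (valid because both sides are nonnegative), and expanding $(\sum_i a_i)^2$ as a double sum is exactly what is needed. The paper itself does not give a proof of this fact at all, stating only that it is ``well established'' and skipping the details, so your write-up is strictly more complete than what appears there.
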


The properties shown in Facts~\ref{fact_2} and~\ref{fact_1} have been well established and in this context, we skip the proof here. We are now ready to prove Lemma~\ref{lemma_1}.

\begin{proof}
Since $ \mathbf{X}_{k} =\mathbf{X}_{k-1}\mathbf{\Pi} + \mathbf{V}_k$ we have:

\begin{equation}
\begin{split}
    &\mathbf{X}_{k}(\mathbf{I}-\mathbf{Q}) =\mathbf{X}_{k-1}(\mathbf{I}-\mathbf{Q})\mathbf{\Pi} + \mathbf{V}_k(\mathbf{I}-\mathbf{Q})
\end{split}
\end{equation}

Applying the above equation $k$ times we have: 

\begin{equation}
\begin{split}
    &\mathbf{X}_{k}(\mathbf{I}-\mathbf{Q}) =\mathbf{X}_{0}(\mathbf{I}-\mathbf{Q})\mathbf{\Pi}^k + \sum_{\tau=1}^{k}\mathbf{V}_{\tau}(\mathbf{I}-\mathbf{Q})\mathbf{\Pi}^{k-\tau} \overset{\mathbf{X}_{0}=0}{=}\sum_{\tau=1}^{k}\mathbf{V}_{\tau}(\mathbf{I}-\mathbf{Q})\mathbf{\Pi}^{k-\tau}
\end{split}
\end{equation} 
As $\bar{\mathbf{V}}_k=\beta\bar{\mathbf{V}}_{k-1}-\alpha\frac{1}{N}\sum_{i=1}^N\tilde{\mathbf{G}}^i_{k-1} \overset{\mathbf{V}_{0}=0}{=} -\alpha\frac{1}{N}\sum_{i=1}^N\tilde{\mathbf{G}}^i_{k-1}$, we can get:

\begin{equation}
\begin{split}
    &\mathbf{X}_k (\mathbf{I}-\mathbf{Q}) = 
     -\alpha\sum_{\tau=1}^k\sum_{l=0}^{\tau-1}\tilde{\mathbf{G}}_{l}\beta^{\tau-1-l}(\mathbf{I}-\mathbf{Q}) \mathbf{\Pi}^{k-\tau} = -\alpha\sum_{\tau=1}^k\sum_{l=0}^{\tau-1}\tilde{\mathbf{G}}_{l}\beta^{\tau-1-l}\mathbf{\Pi}^{k-\tau-l}(\mathbf{I}-\mathbf{Q})\\
    & -\alpha\sum_{n=1}^{k-1}\tilde{\mathbf{G}}_{n}[\sum_{l=n+1}^{k}\beta^{l-1-n}\mathbf{\Pi}^{k-1-n}(\mathbf{I}-\mathbf{Q}) = -\alpha\sum_{\tau=0}^{k-1}\frac{1-\beta^{k-\tau}}{1-\beta}\tilde{\mathbf{G}}_{\tau}(\mathbf{I}-\mathbf{Q})\mathbf{\Pi}^{k-1-\tau}.
\end{split}
\end{equation}

Therefore, for $k\geq 1$, we have:

\begin{equation}
    \begin{split}\label{expand}
        &\mathbb{E}\bigg[\bigg\|\mathbf{X}_k(\mathbf{I}-\mathbf{Q})\bigg\|^2_\mathfrak{F}\bigg] = \alpha^2 \mathbb{E}\bigg[\bigg\|\sum_{\tau=0}^{k-1}\frac{1-\beta^{k-\tau}}{1-\beta}\tilde{\mathbf{G}}_{\tau}(\mathbf{I}-\mathbf{Q})\mathbf{\Pi}^{k-1-\tau}\bigg\|^2_\mathfrak{F}\bigg]\\
        & \overset{a}{\leq} \underbrace{2\alpha^2\mathbb{E}\bigg[\bigg\|\sum_{\tau=0}^{k-1}\frac{1-\beta^{k-\tau}}{1-\beta}(\tilde{\mathbf{G}}_{\tau}-\mathbf{G}_{\tau})(\mathbf{I}-\mathbf{Q})\mathbf{\Pi}^{k-1-\tau}\bigg\|^2_\mathfrak{F}\bigg]}_{I}+ \underbrace{2\alpha^2\mathbb{E}\bigg[\bigg\|\sum_{\tau=0}^{k-1}\frac{1-\beta^{k-\tau}}{1-\beta}\mathbf{G}_{\tau}(\mathbf{I}-\mathbf{Q})\mathbf{\Pi}^{k-1-\tau}\bigg\|^2_\mathfrak{F}\bigg]}_{II}
    \end{split}
\end{equation}
(a) follows from the inequality $\|\mathbf{A}+\mathbf{B}\|_\mathfrak{F}^2\leq 2\|\mathbf{A}\|_\mathfrak{F}^2+2\|\mathbf{B}\|_\mathfrak{F}^2$.

We develop upper bounds of term \textbf{I}:
\begin{equation}\label{I}
    \begin{split}
        & \mathbb{E}\bigg[\bigg\|\sum_{\tau=0}^{k-1}\frac{1-\beta^{k-\tau}}{1-\beta}(\tilde{\mathbf{G}}_{\tau}-\mathbf{G}_{\tau})(\mathbf{I}-\mathbf{Q})\mathbf{\Pi}^{k-1-\tau}\bigg\|^2_\mathfrak{F}\bigg] \overset{a}{\leq} \sum_{\tau=0}^{k-1}\mathbb{E}\bigg[\bigg\|\frac{1-\beta^{k-\tau}}{1-\beta}(\tilde{\mathbf{G}}_{\tau}-\mathbf{G}_{\tau})(\mathbf{I}-\mathbf{Q})\mathbf{\Pi}^{k-1-\tau}\bigg\|^2_\mathfrak{F}\bigg]\\
        & \overset{b}{\leq} \frac{1}{(1-\beta)^2}\sum_{\tau=0}^{k-1}\rho^{k-1-\tau}\mathbb{E}\bigg[\bigg\|\tilde{\mathbf{G}}_{\tau}-\mathbf{G}_{\tau}\bigg\|^2_\mathfrak{F}\bigg] \overset{c}{\leq} \frac{1}{(1-\beta)^2}\sum_{\tau=0}^{k-1}\rho^{k-1-\tau}N\epsilon^2 \overset{d}{\leq} \frac{N\epsilon^2}{(1-\beta)^2(1-\rho)}
    \end{split}
\end{equation}
(a) follows from Jensen inequality. (b) follows from the inequality $|\frac{1-\beta^{k-\tau}}{1-\beta}| \leq \frac{1}{1-\beta} $. (c) follows from Assumption~\ref{assum_3} and Frobenius norm. (d) follows from Assumption~\ref{assum_5}.

We then proceed to find the upper bound for term \textbf{II}.

\begin{equation}\label{II}
    \begin{split}
        & \mathbb{E}\bigg[\bigg\|\sum_{\tau=0}^{k-1}\frac{1-\beta^{k-\tau}}{1-\beta}\mathbf{G}_{\tau}(\mathbf{I}-\mathbf{Q})\mathbf{\Pi}^{k-1-\tau}\bigg\|^2_\mathfrak{F}\bigg] \overset{a}{\leq} 
        \sum_{\tau=0}^{k-1}\sum_{\tau^\prime=0}^{k-1}\mathbb{E}\bigg[\bigg\|\frac{1-\beta^{k-\tau}}{1-\beta}\mathbf{G}_{\tau}(\mathbf{I}-\mathbf{Q})\mathbf{\Pi}^{k-1-\tau}\bigg\|_\mathfrak{F}\\
        & \bigg\|\frac{1-\beta^{k-\tau}}{1-\beta}\mathbf{G}_{\tau^\prime}(\mathbf{I}-\mathbf{Q})\mathbf{\Pi}^{k-1-\tau^\prime}\bigg\|_\mathfrak{F}\bigg] \leq \frac{1}{(1-\beta)^2}\sum_{\tau=0}^{k-1}\sum_{\tau^\prime=0}^{k-1}\rho^{(k-1-\frac{\tau+\tau^\prime}{2})}\mathbb{E}\bigg[\|\mathbf{G}_{\tau}\|_\mathfrak{F}\|\mathbf{G}_{\tau^\prime}\|_\mathfrak{F}\bigg] \overset{b}{\leq}\\
        &\frac{1}{(1-\beta)^2}\sum_{\tau=0}^{k-1}\sum_{\tau^\prime=0}^{k-1}\rho^{(k-1-\frac{\tau+\tau^\prime}{2})}\bigg(\frac{1}{2}\mathbb{E}[\|\mathbf{G}_{\tau}\|_\mathfrak{F}^2]+\frac{1}{2}\mathbb{E}[\|\mathbf{G}_{\tau^\prime}\|_\mathfrak{F}^2]\bigg) = \frac{1}{(1-\beta)^2}\sum_{\tau=0}^{k-1}\sum_{\tau^\prime=0}^{k-1}\rho^{(k-1-\frac{\tau+\tau^\prime}{2})}\mathbb{E}[\|\mathbf{G}_{\tau}\|_\mathfrak{F}^2]\\
        &\overset{c}{\leq} \frac{1}{(1-\beta)^2(1-\sqrt{\rho})}\sum_{\tau=0}^{k-1}\rho^{(\frac{k-1-\tau}{2})}\mathbb{E}[\|\mathbf{G}_{\tau}\|_\mathfrak{F}^2]
    \end{split}
\end{equation}

(a) follows from Fact~\ref{fact_1}. (b) follows from the inequality $xy \leq \frac{1}{2}(x^2+y^2)$ for any two real numbers $x,y$. (c) is derived using $\sum_{\tau_1=0}^{k-1}\rho^{k-1-\frac{\tau_1+\tau}{2}} \leq \frac{\rho^{\frac{k-1-\tau}{2}}}{1-\sqrt{\rho}}$.

We then proceed with finding the bounds for $\mathbb{E}[\|\mathbf{G}_{\tau}\|_\mathfrak{F}^2]$:

\begin{equation}\label{E_G}
    \begin{split}
        &\mathbb{E}[\|\mathbf{G}_\tau\|^2_\mathfrak{F}] = \mathbb{E}[\|\mathbf{G}_\tau- \mathbf{H}_\tau+ \mathbf{H}_\tau-\mathbf{H}_\tau \mathbf{Q}+\mathbf{H}_\tau \mathbf{Q}\|^2_\mathfrak{F}]\\
        & \leq 3\mathbb{E}[\|\mathbf{G}_\tau- \mathbf{H}_\tau\|^2_\mathfrak{F}]+ 3\mathbb{E}[\|\mathbf{H}_\tau(I-\mathbf{Q})\|^2\mathfrak{F}]+3\mathbb{E}[\|\mathbf{H}_\tau \mathbf{Q}\|^2_\mathfrak{F}]
        \overset{a}{\leq} 
        3N\sigma^2+3N\delta^2+3 \mathbb{E}[\|\frac{1}{N}\sum_{i=1}^N\nabla f_i(\mathbf{x}^i_\tau)\|^2]
    \end{split}
\end{equation}

(a) holds because $\mathbb{E}[\|\mathbf{H}_\tau \mathbf{Q}\|^2_\mathfrak{F}]\leq\mathbb{E}[\|\frac{1}{N}\sum_{i=1}^N\nabla f_i(\mathbf{x}^i_\tau)\|^2]$

Substituting (\ref{E_G}) in ~(\ref{II}):

\begin{equation}\label{prefinal}
    \begin{split}
        & \mathbb{E}\bigg[\bigg\|\sum_{\tau=0}^{k-1}\frac{1-\beta^{k-\tau}}{1-\beta}\mathbf{G}_{\tau}(\mathbf{I}-\mathbf{Q})\mathbf{\Pi}^{k-1-\tau}\bigg\|^2_\mathfrak{F}\bigg] \leq \frac{1}{(1-\beta)^2(1-\sqrt{\rho})}\sum_{\tau=0}^{k-1}\rho^{(\frac{k-1-\tau}{2})}\bigg[3N\sigma^2+3N\delta^2+3 \mathbb{E}[\|\frac{1}{N}\sum_{i=1}^N\nabla f_i(\mathbf{x}^i_\tau)\|^2]\bigg]\\
        &\leq \frac{3N(\sigma^2+\delta^2)}{(1-\beta)^2(1-\sqrt{\rho})^2}+\frac{3N}{(1-\beta)^2(1-\sqrt{\rho})}\sum_{\tau=0}^{k-1}\rho^{(\frac{k-1-\tau}{2})}\mathbb{E}[\|\frac{1}{N}\sum_{i=1}^N\nabla f_i(\mathbf{x}^i_\tau)\|^2]
    \end{split}
\end{equation}

substituting~(\ref{prefinal}) and (\ref{I}) into the main inequality (\ref{expand}):

\begin{equation}\label{final1}
    \begin{split}
        &\mathbb{E}\bigg[\bigg\|\mathbf{X}_k(\mathbf{I}-\mathbf{Q})\bigg\|^2_\mathfrak{F}\bigg] \leq \frac{2\alpha^2N\epsilon^2}{(1-\beta)^2(1-\rho)}+
        \frac{2\alpha^2}{(1-\beta)^2(1-\sqrt{\rho})}\bigg(\frac{3N(\sigma^2)}{1-\sqrt{\rho}}+\frac{3N(\delta^2)}{1-\sqrt{\rho}}+\\
        & 3N\sum_{\tau=0}^{k-1}\rho^{(\frac{k-1-\tau}{2})}\mathbb{E}[\|\frac{1}{N}\sum_{i=1}^N\nabla f_i(\mathbf{x}^i_\tau)\|^2]\bigg)= \frac{2\alpha^2}{(1-\beta)^2}\bigg(\frac{N\epsilon^2}{1-\rho}+\frac{3N\sigma^2}{(1-\sqrt{\rho})^2}+\frac{3N\delta^2}{(1-\sqrt{\rho})^2}\bigg)+ \\
        & \frac{6N\alpha^2}{(1-\beta)^2(1-\sqrt{\rho})}\sum_{\tau=0}^{k-1}\rho^{(\frac{k-1-\tau}{2})}\mathbb{E}[\|\frac{1}{N}\sum_{i=1}^N\nabla f_i(\mathbf{x}^i_\tau)\|^2]
    \end{split}
\end{equation}

Summing over $k\in\{1,\dots, K-1\}$ and noting that $\mathbb{E}\bigg[\bigg\|\mathbf{X}_0(\mathbf{I}-\mathbf{Q})\bigg\|^2_\mathfrak{F}\bigg] = 0$:

\begin{equation}\label{final2}
    \begin{split}
        &\sum_{k=1}^{K-1}\mathbb{E}\bigg[\bigg\|\mathbf{X}_k(\mathbf{I}-\mathbf{Q})\bigg\|^2_\mathfrak{F}\bigg] \leq CK + \frac{6N\alpha^2}{(1-\beta)^2(1-\sqrt{\rho})}\sum_{k=1}^{K-1}\sum_{\tau=0}^{k-1}\rho^{(\frac{k-1-\tau}{2})}\mathbb{E}[\|\frac{1}{N}\sum_{i=1}^N\nabla f_i(\mathbf{x}^i_\tau)\|^2]\leq \\
        & CK + \frac{6N\alpha^2}{(1-\beta)^2(1-\sqrt{\rho})}\sum_{k=0}^{K-1}\frac{1-\rho^{(\frac{K-1-k}{2})}}{1-\sqrt{\rho}}\mathbb{E}[\|\frac{1}{N}\sum_{i=1}^N\nabla f_i(\mathbf{x}^i_k)\|^2]\leq\\
        & CK + \frac{6N\alpha^2}{(1-\beta)^2(1-\sqrt{\rho})}\sum_{k=0}^{K-1}\mathbb{E}[\|\frac{1}{N}\sum_{i=1}^N\nabla f_i(\mathbf{x}^i_k)\|^2]
    \end{split}
\end{equation}
Where $C = \frac{2\alpha^2}{(1-\beta)^2}\bigg(\frac{N\epsilon^2}{1-\rho}+\frac{3N\sigma^2}{(1-\sqrt{\rho})^2}+\frac{3N\delta^2}{(1-\sqrt{\rho})^2}\bigg)$.

Dividing both sides by $N$:

\begin{equation}\label{final2-2}
    \begin{split}
        &\sum_{k=1}^{K-1}\frac{1}{N}\mathbb{E}\bigg[\bigg\|\mathbf{X}_k(\mathbf{I}-\mathbf{Q})\bigg\|^2_\mathfrak{F}\bigg] \leq \\
        &\frac{2\alpha^2}{(1-\beta)^2}\bigg(\frac{ \epsilon^2}{1-\rho}+\frac{3 \sigma^2}{(1-\sqrt{\rho})^2}+\frac{3 \delta^2}{(1-\sqrt{\rho})^2}\bigg)K+ \frac{6 \alpha^2}{(1-\beta)^2(1-\sqrt{\rho})}\sum_{k=0}^{K-1}\mathbb{E}[\|\frac{1}{N}\sum_{i=1}^N\nabla f_i(\mathbf{x}^i_k)\|^2]
    \end{split}
\end{equation}

We immediately have:

\begin{equation}
    \begin{split}
        &\sum_{k=0}^{K-1}\frac{1}{N}\sum_{i=1}^N\mathbb{E}\bigg[\bigg\|\bar{\mathbf{x}}_k-\mathbf{x}^i_k\bigg\|^2\bigg]\leq\\
        &\frac{2\alpha^2}{(1-\beta)^2}\bigg(\frac{ \epsilon^2}{1-\rho}+\frac{3 \sigma^2}{(1-\sqrt{\rho})^2}+\frac{3 \delta^2}{(1-\sqrt{\rho})^2}\bigg)K+ \frac{6 \alpha^2}{(1-\beta)^2(1-\sqrt{\rho})}\sum_{k=0}^{K-1}\mathbb{E}[\|\frac{1}{N}\sum_{i=1}^N\nabla f_i(\mathbf{x}^i_k)\|^2]
    \end{split}
\end{equation}


\end{proof}
\subsection{Proof for Theorem 1}\label{the_1_proof}
\begin{proof}
Using the smoothness properties for  $\mathcal{F}$ we have:
\begin{equation}\label{main0}
    \mathbb{E}[\mathcal{F}(\bar{\mathbf{z}}_{k+1})] \leq \mathbb{E}[\mathcal{F}(\bar{\mathbf{z}}_{k})]+\mathbb{E}[\langle\nabla\mathcal{F}(\bar{\mathbf{z}}_{k}),\bar{\mathbf{z}}_{k+1}- \bar{\mathbf{z}}_{k}\rangle]+ \frac{L}{2} \mathbb{E}[\|\bar{\mathbf{z}}_{k+1} - \bar{\mathbf{z}}_{k}\|^2 ]
\end{equation}

Using Lemma~\ref{lemma_3} we have:
\begin{equation}\label{main1}
\begin{split}
&\mathbb{E}[\langle\nabla\mathcal{F}(\bar{\mathbf{z}}_{k}),\bar{\mathbf{z}}_{k+1}- \bar{\mathbf{z}}_{k}\rangle] =
\frac{-\alpha}{1-\beta}\mathbb{E}[\langle\nabla\mathcal{F}(\bar{\mathbf{z}}_{k}),\frac{1}{N}\sum_{i=1}^{N}\tilde{\mathbf{g}}^i_{k}\rangle] =\\ &\underbrace{\frac{-\alpha}{1-\beta}\mathbb{E}[\langle\nabla\mathcal{F}(\bar{\mathbf{z}}_{k})- \nabla\mathcal{F}(\bar{\mathbf{x}}_{k}),\frac{1}{N}\sum_{i=1}^{N}(\tilde{\mathbf{g}}^i_{k}\rangle]}_{I}-\underbrace{\frac{\alpha}{1-\beta}\mathbb{E}[\langle\nabla\mathcal{F}(\bar{\mathbf{x}}_{k}),\frac{1}{N}\sum_{i=1}^{N}(\tilde{\mathbf{g}}^i_{k}\rangle]}_{II}
\end{split}
\end{equation}

We proceed by analysing (I):

\begin{equation}\label{helper1}
\begin{split}
 & \frac{-\alpha}{1-\beta}\mathbb{E}[\langle\nabla\mathcal{F}(\bar{\mathbf{z}}_{k})- \nabla\mathcal{F}(\bar{\mathbf{x}}_{k}),\frac{1}{N}\sum_{i=1}^{N}(\tilde{\mathbf{g}}^i_{k}\rangle] \leq\\
 & \frac{(1-\beta)}{2\beta L}\mathbb{E}[\|\nabla\mathcal{F}(\bar{\mathbf{z}}_{k})- \nabla\mathcal{F}(\bar{\mathbf{x}}_{k})\|^2]+ \frac{\beta L \alpha^{2}}{2(1^{-\beta})^{3}}\mathbb{E}[\| \frac{1}{N} \sum_{i=1}^{N} \tilde{\mathbf{g}}_{k}^{i} \|^{2}] \leq\\
 &\frac{(1-\beta)L}{2 \beta}\mathbb{E}[\left\|\bar{\mathbf{z}}_{k}-\bar{\mathbf{x}}_{k}\right\|^{2}]+\frac{\beta L \alpha^2}{2(1-\beta)^{3}}\mathbb{E}[\|\frac{1}{N} \sum_{i=1}^{N} \tilde{\mathbf{g}}_{k}^{i}\|^{2}]
\end{split}
\end{equation}

For term (II) we have:

\begin{equation}\label{helper2}
\begin{split}
 &\langle\nabla \mathcal{F}\left(\bar{\mathbf{x}}_{k}\right), \frac{1}{N} \sum_{i=1}^{N} \tilde{\mathbf{g}}_{k}^{i}\rangle= \langle\nabla \mathcal{F}\left(\bar{\mathbf{x}}_{k}\right), \frac{1}{N} \sum_{i=1}^{N}\left(\tilde{\mathbf{g}}_{k}^{i}-\mathbf{g}_{k}^{i}+\mathbf{g}_{k}^{i}\right)\rangle =\\
&\underbrace{\langle\nabla \mathcal{F}\left(\bar{\mathbf{x}}_{k}\right), \frac{1}{N} \sum_{i=1}^{N}\left(\tilde{\mathbf{g}}_{k}^{i}-\mathbf{g}_{k}^{i}\right)\rangle}_{\star}+\underbrace{
\langle\nabla \mathcal{F}\left(\bar{\mathbf{x}}_{k}\right) , \frac{1}{N} \sum_{i=1}^{N}\tilde{\mathbf{g}}_{k}^{i}\rangle}_{\star \star}
\end{split}
\end{equation}
We first analyse ($\star$):
\begin{equation}\label{helper2-1}
    \begin{split}
        &\frac{-\alpha}{(1-\beta)}\mathbb{E}[\langle\nabla \mathcal{F}\left(\bar{\mathbf{x}}_{k}\right), \frac{1}{N} \sum_{i=1}^{N}\left(\tilde{\mathbf{g}}_{k}^{i}-\mathbf{g}_{k}^{i}\right)\rangle] \leq \frac{(1-\beta)\alpha^2}{2 \beta L}\mathbb{E}[\|\nabla \mathcal{F}(\bar{\mathbf{x}}_{k})\|^{2}]+\frac{\beta L }{2(1-\beta)^{3}}\mathbb{E}[\|\frac{1}{N} \sum_{i=1}^{N} (\tilde{\mathbf{g}}_{k}^{i}- \mathbf{g}_k^i)\|^{2}]
    \end{split}
\end{equation}

This  holds as 
$\langle \mathbf{a},\mathbf{b} \rangle \leq \frac{1}{2}\|\mathbf{a}\|^2 + \frac{1}{2}\|\mathbf{b}\|^2$ where $\mathbf{a}=\frac{-\alpha \sqrt{1-\beta}}{\beta L} \nabla \mathcal{F}(\bar{\mathbf{x}}_k)$ and $\mathbf{b} = -\frac{\sqrt{\beta L}}{(1-\beta)^{\frac{3}{2}}}\frac{1}{N} \sum_{i=1}^{N}(\tilde{\mathbf{g}}_{k}^{i}-\mathbf{g}_{k}^{i})$.

Analysing ($\star \star$):

\begin{equation}\label{helper2-2}
    \begin{split}
       \mathbb{E}\bigg[\langle\nabla \mathcal{F}\left(\bar{\mathbf{x}}_{k}\right) , \frac{1}{N} \sum_{i=1}^{N}\tilde{\mathbf{g}}_{k}^{i}\rangle\bigg] =  \mathbb{E}\bigg[\langle\nabla\mathcal{F}(\bar{\mathbf{x}}_{k}),\frac{1}{N}\sum_{i=1}^{N}\nabla f_i(\mathbf{x}^i_k)\rangle\bigg]
    \end{split}
\end{equation}

The above equality holds because $\bar{\mathbf{x}}_k$ and $\mathbf{x}_k^i$ are determined by $\zeta_{k-1} = [\zeta_0,\dots,\zeta_{k-1}]$ which is independent of $\zeta_{k}$, and $\mathbb{E}[\mathbf{g}_k^i|\zeta_{k-1}] = \mathbb{E}[\mathbf{g}_k^i]=\nabla f_i(\mathbf{x}_k^i)$. With the aid of the equity $\langle \mathbf{a},\mathbf{b} \rangle = \frac{1}{2}[\|\mathbf{a}\|^2 + \|\mathbf{b}\|^2 - \|\mathbf{a}-\mathbf{b}\|^2]$, we have :

\begin{equation}\label{helper2-2-1}
\begin{split}
&\langle\nabla \mathcal{F}\left(\bar{\mathbf{x}}_{k}\right), \frac{1}{N} \sum_{i=1}^{N} \nabla f_{i}\left(\mathbf{x}_{k}^{i}\right)\rangle=\frac{1}{2}\left(\|\nabla F\left(\bar{\mathbf{x}}_{k}\right)\|^{2}+\| \frac{1}{N} \sum_{i=1}^{N}
\nabla f_{i}(\mathbf{x}_{k}^{i})\|^{2}-\| \nabla \mathcal{F}(\bar{\mathbf{x}}_{k})-\frac{1}{N} \sum_{i=1}^{N} \nabla f_{i}(\mathbf{x}_{k}^{i}) \|^{2}\right) \overset{a}{\geq}\\
&\frac{1}{2}\left(\|\nabla \mathcal{F}(\bar{\mathbf{x}}_{k})\|^{2}+
\|\frac{1}{N} \sum_{i=1}^{N} \nabla f_{i}(\mathbf{x}_{k}^{i})\|^{2}-L^{2} \frac{1}{N} \sum_{i=1}^{N}\|\bar{\mathbf{x}}_{k}-\mathbf{x}_{k}^{i}\|^{2}\right)
\end{split}
\end{equation}

(a) follows because $\|\nabla\mathcal{F}(\bar{\mathbf{x}}_{k})- \frac{1}{N}\sum_{i=1}^{N}\nabla f_i(\mathbf{x}^i_k)\|^2 = \|\frac{1}{N}\sum_{i=1}^{N}\nabla f_i(\bar{\mathbf{x}}_{k})- \frac{1}{N}\sum_{i=1}^{N}\nabla f_i(\mathbf{x}^i_k)\|^2 \leq \frac{1}{N}\sum_{i=1}^{N}\|\nabla f_i(\bar{\mathbf{x}}_{k})- \nabla f_i(\mathbf{x}^i_k)\|^2 \leq \frac{1}{N}\sum_{i=1}^{N} L^2 \|\bar{\mathbf{x}}_{k}- \mathbf{x}^i_k\|^2$.

Substituting~(\ref{helper2-2-1}) into~(\ref{helper2-2}) and~(\ref{helper2-1}),~(\ref{helper2-2}) into~(\ref{helper2}) and~(\ref{helper1}),~(\ref{helper2}) into~(\ref{main1}):

\begin{equation}\label{main2}
\begin{split}
&\mathbb{E}[\langle\nabla\mathcal{F}(\bar{\mathbf{z}}_{k}),\bar{\mathbf{z}}_{k+1}- \bar{\mathbf{z}}_{k}\rangle] \leq 
 \frac{(1-\beta)L}{2\beta}\mathbb{E}[\|\bar{\mathbf{z}}_{k}-\bar{\mathbf{x}}_{k}\|^2]+ 
 \frac{\beta L \alpha^2}{2(1-\beta)^3}\mathbb{E}[\|\frac{1}{N}\sum_{i=1}^{N}(\tilde{\mathbf{g}}^i_{k})\|^2]+
 \bigg(\frac{(1-\beta)\alpha^2}{2\beta L}- \frac{\alpha}{2(1-\beta)}\bigg)\\
 &\mathbb{E}[\|\nabla \mathcal{F}(\bar{\mathbf{x}}_k)\|^2 ] - \frac{\alpha}{2(1-\beta)}\mathbb{E}[\|\frac{1}{N}\sum_{i=1}^{N}\nabla f_i(\mathbf{x}^i_k)\|^2]+\frac{\beta L}{2(1-\beta)^3}\mathbb{E}[\|\frac{1}{N}\sum_{i=1}^{N}(\tilde{\mathbf{g}}^i_{k}-\mathbf{g}^i_k)\|^2 ]+\frac{\alpha L^2}{2(1-\beta)} \frac{1}{N}\sum_{i=1}^{N}\mathbb{E}[\|\bar{\mathbf{x}}_{k}-\mathbf{x}^i_k\|^2]
\end{split}
\end{equation}

Lemma~\ref{lemma_3} states that:
\begin{equation}\label{eqlem_3}
    \mathbb{E}[\|\bar{\mathbf{z}}_{k+1}-\bar{\mathbf{z}}_k\|^2]=\frac{\alpha^2}{(1-\beta)^2}\mathbb{E}[\|\frac{1}{N}\sum_{i=1}^N\tilde{\mathbf{g}}^i_k\|^2].
\end{equation}

Substituting (\ref{main2}),(\ref{eqlem_3}) in (\ref{main0}):

\begin{equation}\label{main3}
\begin{split}
    &\mathbb{E}[\mathcal{F}(\bar{\mathbf{z}}_{k+1})] \leq \mathbb{E}[\mathcal{F}(\bar{\mathbf{z}}_{k})]+ \frac{(1-\beta)L}{2\beta}\mathbb{E}[\|\bar{\mathbf{z}}_{k}-\bar{\mathbf{x}}_{k}\|^2]+
     \frac{\beta L \alpha^2}{2(1-\beta)^3}\mathbb{E}[\|\frac{1}{N}\sum_{i=1}^{N}(\tilde{\mathbf{g}}^i_{k})\|^2]+
     \bigg(\frac{(1-\beta)\alpha^2}{2\beta L}- \frac{\alpha}{2(1-\beta)}\bigg)\\
 &\mathbb{E}[\|\nabla \mathcal{F}(\bar{\mathbf{x}}_k)\|^2 ] - \frac{\alpha}{2(1-\beta)}\mathbb{E}[\|\frac{1}{N}\sum_{i=1}^{N}\nabla f_i(\mathbf{x}^i_k)\|^2]+\frac{\beta L}{2(1-\beta)^3}\mathbb{E}[\|\frac{1}{N}\sum_{i=1}^{N}(\tilde{\mathbf{g}}^i_{k}-\mathbf{g}^i_k)\|^2 ]+\\
 &\frac{\alpha L^2}{2(1-\beta)} \frac{1}{N}\sum_{i=1}^{N}\mathbb{E}[\|\bar{\mathbf{x}}_{k}-\mathbf{x}^i_k\|^2] +\frac{\alpha^2}{(1-\beta)^2}\mathbb{E}[\|\frac{1}{N}\sum_{i=1}^N\tilde{\mathbf{g}}^i_k\|^2].
    \end{split}
\end{equation}

Rearranging the terms and dividing by $C_1 = \frac{\alpha}{2(1-\beta)}-\frac{(1-\beta)\alpha^2}{2\beta L}$to find the bound for $\mathbb{E}[\|\nabla\mathcal{F}(\bar{\mathbf{x}}_{k})\|^2]$:

\begin{equation}\label{main4}
\begin{split}
    &\mathbb{E}[\|\nabla\mathcal{F}(\bar{\mathbf{x}}_{k})\|^2] \leq
    \frac{1}{C_1}\bigg(\mathbb{E}[\mathcal{F}(\bar{\mathbf{z}}_{k})]-\mathbb{E}[\mathcal{F}(\bar{\mathbf{z}}_{k+1})]\bigg)+
    C_2 \:\mathbb{E}[\|\frac{1}{N}\sum_{i=1}^{N}(\tilde{\mathbf{g}}^i_{k})\|^2]+
    C_3\:\mathbb{E}[\|\bar{\mathbf{z}}_{k}-\bar{\mathbf{x}}_{k}\|^2]\\
    &- C_6\: \mathbb{E}[\|\frac{1}{N}\sum_{i=1}^{N}\nabla f_i(\mathbf{x}^i_k)\|^2] + C_4\:\mathbb{E}[\|\frac{1}{N}\sum_{i=1}^{N}(\tilde{\mathbf{g}}^i_{k}-\mathbf{g}^i_k)\|^2 ] + C_5\: \sum_{i=1}^{N}\mathbb{E}[\|\bar{\mathbf{x}}_{k}-\mathbf{x}^i_k\|^2]
    \end{split}
\end{equation}
Where $C_{2}=\left(\frac{\beta L \alpha^{2}}{2(1-\beta)^{3}}+\frac{\alpha^{2} L}{(1-\beta)^{2}}\right) / C_{1}$, $C_{3}=\frac{(1-\beta) L}{2 \beta} / C_{1}$,  $C_{4}=\frac{\beta L}{2(1-\beta)^{3}} / C_{1}$,  $C_{5}=\frac{\alpha L^{2}}{2(1-\beta)} / C_{1}$,
$C_{6}=\frac{\alpha}{2(1-\beta)} / C_{1}$.

Summing over $k \in \{0,1,\dots, K-1\}$: 

\begin{equation}
    \begin{split}
        &\sum_{k=0}^{K-1} \mathbb{E}\left[\left\|\nabla \mathcal{F}\left(\bar{\mathbf{x}}_{k}\right)\right\|^{2}\right] \leq
        \frac{1}{C_{1}}\bigg(\mathbb{E}\left[\mathcal{F}\left(\bar{\mathbf{z}}_{0}\right)\right]-\mathbb{E}\left[\mathcal{F}\left(\bar{\mathbf{z}}_{k}\right)\right]\bigg)-C_{6} \sum_{k=0}^{k-1}
        \mathbb{E}\left[\left\|\frac{1}{N} \sum_{i=1}^{N} \nabla f_{i}\left(\mathbf{x}_{k}^{i}\right)\right\|^{2}\right]+ C_{2} \sum_{k=0}^{k-1} \mathbb{E}\left[\left\|\frac{1}{N} \sum_{i=1}^{N} \tilde{\mathbf{g}}_{k}^{i}\right\|^{2}\right]\\ & + C_{3} \sum_{k=0}^{k-1} \mathbb{E}\left[\left\|\bar{\mathbf{z}}_{k}-\bar{\mathbf{x}}_{k}\right\|^{2}\right]+C_{4} \sum_{k=0}^{k-1} \mathbb{E}\left[\left\|\frac{1}{N} \sum_{i=1}^{N}\left(\tilde{\mathbf{g}}_{k}^{i}-\mathbf{g}_{k}^{i}\right)\right\|^{2}\right] +C_{5} \sum_{k=0}^{k-1} \frac{1}{N} \sum_{l=1}^{N} \mathbb{E}\left[\left\|\bar{\mathbf{x}}_{k}-\mathbf{x}_{k}^{i}\right\|^{2}\right]
    \end{split}
\end{equation}

Substituting Lemma~\ref{lemma_1}, Lemma~\ref{lemma_2}, and Lemma~\ref{lemma_4} and Assumption~\ref{assum_3} into the above equation we have:

\begin{equation}
    \begin{split}
        &\sum_{k=0}^{K-1} \mathbb{E}\left[\left\|\nabla \mathcal{F}\left(\bar{\mathbf{x}}_{k}\right)\right\|^{2}\right] \leq
        \frac{1}{C_{1}}\bigg(\mathbb{E}\left[\mathcal{F}\left(\bar{\mathbf{z}}_{0}\right)\right]-\mathbb{E}\left[\mathcal{F}\left(\bar{\mathbf{z}}_{k}\right)\right]\bigg)-
        \bigg(C_{6}-C_5\frac{6\alpha^2}{(1-\beta)(1-\sqrt{\rho})}-2C_2-2C_3\frac{\alpha^2 \beta^2}{(1-\beta)^4}\bigg)\\ &\sum_{k=0}^{k-1}
        \mathbb{E}\left[\left\|\frac{1}{N} \sum_{i=1}^{N} \nabla f_{i}\left(\mathbf{x}_{k}^{i}\right)\right\|^{2}\right]+ \bigg(C_{2}+C_3\frac{\alpha^2\beta}{(1-\beta)^4}\bigg)\bigg(\frac{2\sigma^2}{N}+2\epsilon^2\bigg)K+ C_4 \epsilon^2 K+ C_5 \frac{2\alpha^2}{(1-\beta)^2}\bigg(\frac{ \epsilon^2}{1-\rho}+\\
        &\frac{3 \sigma^2}{(1-\sqrt{\rho})^2}+\frac{3 \delta^2}{(1-\sqrt{\rho})^2}\bigg)K
    \end{split}
\end{equation}

Dividing both sides by $K$:
\begin{equation}
    \begin{split}
        &\frac{1}{K}\sum_{k=0}^{K-1} \mathbb{E}\left[\left\|\nabla \mathcal{F}\left(\bar{\mathbf{x}}_{k}\right)\right\|^{2}\right] \leq
        \frac{1}{C_{1}K}\bigg(\mathcal{F}\left(\bar{\mathbf{x}}_{0}\right)-\mathcal{F}^{\star}\bigg)+\bigg(C_{2}+C_3\frac{\alpha^2\beta}{(1-\beta)^4}\bigg)\bigg(\frac{2\sigma^2}{N}+2\epsilon^2\bigg)+ C_4 \epsilon^2 +\\
        &C_5 \frac{2\alpha^2}{(1-\beta)^2}\bigg(\frac{ \epsilon^2}{1-\rho}+
        \frac{3 \sigma^2}{(1-\sqrt{\rho})^2}+\frac{3 \delta^2}{(1-\sqrt{\rho})^2}\bigg)K
    \end{split}
\end{equation}

The above follows from the fact that $\bar{\mathbf{z}}_0 = \bar{\mathbf{x}}_0$ and 
$\bigg(C_{6}-C_5\frac{6\alpha^2}{(1-\beta)(1-\sqrt{\rho})}-2C_2-2C_3\frac{\alpha^2 \beta^2}{(1-\beta)^4}\bigg)\geq 0$.

Therefor we have :
\begin{equation}\label{eq_them1}
    \begin{split}
        &\frac{1}{K}\sum_{k=0}^{K-1} \mathbb{E}\left[\left\|\nabla \mathcal{F}\left(\bar{\mathbf{x}}_{k}\right)\right\|^{2}\right] \leq
       \frac{1}{ C_{1} K}\left(\mathcal{F}\left(\bar{\mathbf{x}}_{0}\right)-\mathcal{F}^{*}\right)+ \left(2  C_{2}+ C_{3} \frac{\alpha^{2} \beta}{(1-\beta)^{4}}+ C_{4}+ C_5 \frac{2 \alpha^{2}}{(1-\beta)^{2}(1-\rho)}\right) \epsilon^{2}+\\ &\left(\frac{2}{N}\left( C_{2}+ C_{3} \frac{\alpha^{2} \beta}{(1-\beta)^{4}}\right)+ C_{5} \frac{6 \alpha^{2}}{(1-\beta)^{2}(1-\sqrt{p})^{2}}\right) \sigma^{2}+  C_{5} \frac{6 \alpha^{2}}{(1-\beta)^{2}(1-\sqrt{\rho})^{2}}\delta^2
    \end{split}
\end{equation}
\end{proof}

\subsection{Discussion on the Step Size}
Recalling the conditions for the step size $\alpha$ in Theorem~\ref{theorem_1}, 
\[1-\frac{6\alpha^2L^2}{(1-\beta)(1-\sqrt{\rho})^2}-\frac{4  L\alpha}{(1-\beta)^2}\geq 0.\]
Solving the last inequality, combining the fact that $\alpha>0$, we have then the specific form of $\alpha^*$
\[\alpha^*=\frac{(1-\sqrt{\rho})\sqrt{16(1-\sqrt{\rho})^2+24(1-\beta)^3}-4(1-\sqrt{\rho})^2}{12L(1-\beta)}.\]
Therefore, if the step size $\alpha$ is defined as 
\[\alpha\leq\textnormal{min}\Bigg\{\frac{\beta L}{(1-\beta)^2},\frac{(1-\sqrt{\rho})\sqrt{16(1-\sqrt{\rho})^2+24(1-\beta)^3}-4(1-\sqrt{\rho})^2}{12L(1-\beta)}\Bigg\},\] Eq.~\ref{eq_them1} naturally holds true.

\subsection{Proof for Corollary 1}\label{coro_1_proof}
\begin{proof}
According to Eq.~\ref{eq_them1}, on the right hand side, there are four terms with different coefficients with respect to the step size $\alpha$. We separately investigate each term in the following.
As $C_1 = \mathcal{O}(\frac{\sqrt{N}}{\sqrt{K}})$. Therefore,
\[\frac{\mathcal{F}(\bar{\mathbf{x}}_0)-\mathcal{F}^*}{C_1K}=\mathcal{O}(\frac{1}{\sqrt{NK}}).\] While for the second term, we have\[C_2=\mathcal{O}(\frac{\sqrt{N}}{\sqrt{K}}),  C_3=\mathcal{O}(\frac{\sqrt{K}}{\sqrt{N}}), C_4=\mathcal{O}(\frac{\sqrt{K}}{\sqrt{N}}), C_5=\mathcal{O}(1),\] such that\[2C_2\epsilon^2=\mathcal{O}(\frac{\sqrt{N}}{K^{1.5}}), C_3\frac{\alpha^2\beta}{(1-\beta)^4}\epsilon^2=\mathcal{O}(\frac{\sqrt{N}}{K^{1.5}}), C_4\epsilon^2=\mathcal{O}(\frac{1}{\sqrt{NK}}), C_5\frac{2\alpha^2}{(1-\beta)^2(1-\rho)}\epsilon^2=\mathcal{O}(\frac{N}{K^2}).\]Similarly, we can obtain for the third term and the last term,
\[\frac{2}{N}\Bigg(C_2+C_3\frac{\alpha^2\beta}{(1-\beta)^4}\Bigg)\sigma^2=\mathcal{O}(\frac{1}{\sqrt{NK}}), C_5\frac{6\alpha^2}{(1-\beta)^2(1-\sqrt{\rho})^2}\sigma^2=\mathcal{O}(\frac{N}{K}),\] and\[C_5\frac{6\alpha^2}{(1-\beta)^2(1-\sqrt{\rho})^2}\delta^2=\mathcal{O}(\frac{N}{K}).\] Hence, By omitting the constant $N$ in this context, there exists a constant $C>0$ such that the overall convergence rate is as follows:
\begin{equation}
    \frac{1}{K}\sum_{k=0}^{K-1} \mathbb{E}\left[\left\|\nabla \mathcal{F}\left(\bar{\mathbf{x}}_{k}\right)\right\|^{2}\right] \leq C\Bigg(\frac{1}{\sqrt{NK}}+\frac{1}{K}+\frac{1}{K^{1.5}}+\frac{1}{K^2}\Bigg), 
\end{equation}
which suggests when $N$ is fixed and $K$ is sufficiently large, \textit{CGA} enables the convergence rate of $\mathcal{O}(\frac{1}{\sqrt{NK}})$.
\end{proof}


\subsection{Additional CIFAR-10 Results}\label{cifarvggapp}

In this section, we provide more experimental results for CIFAR10 dataset trained using a CNN architecture and more complex VGG11 model architecture:

\textbf{Additional CIFAR10 results trained using CNN :}

We start by providing the corresponding accuracy plots for Figure~\ref{cifarloss} in the main paper:

\begin{figure*}[ht]
\centering
\begin{subfigure}[t]{0.33\linewidth}
\includegraphics[width=1\linewidth]{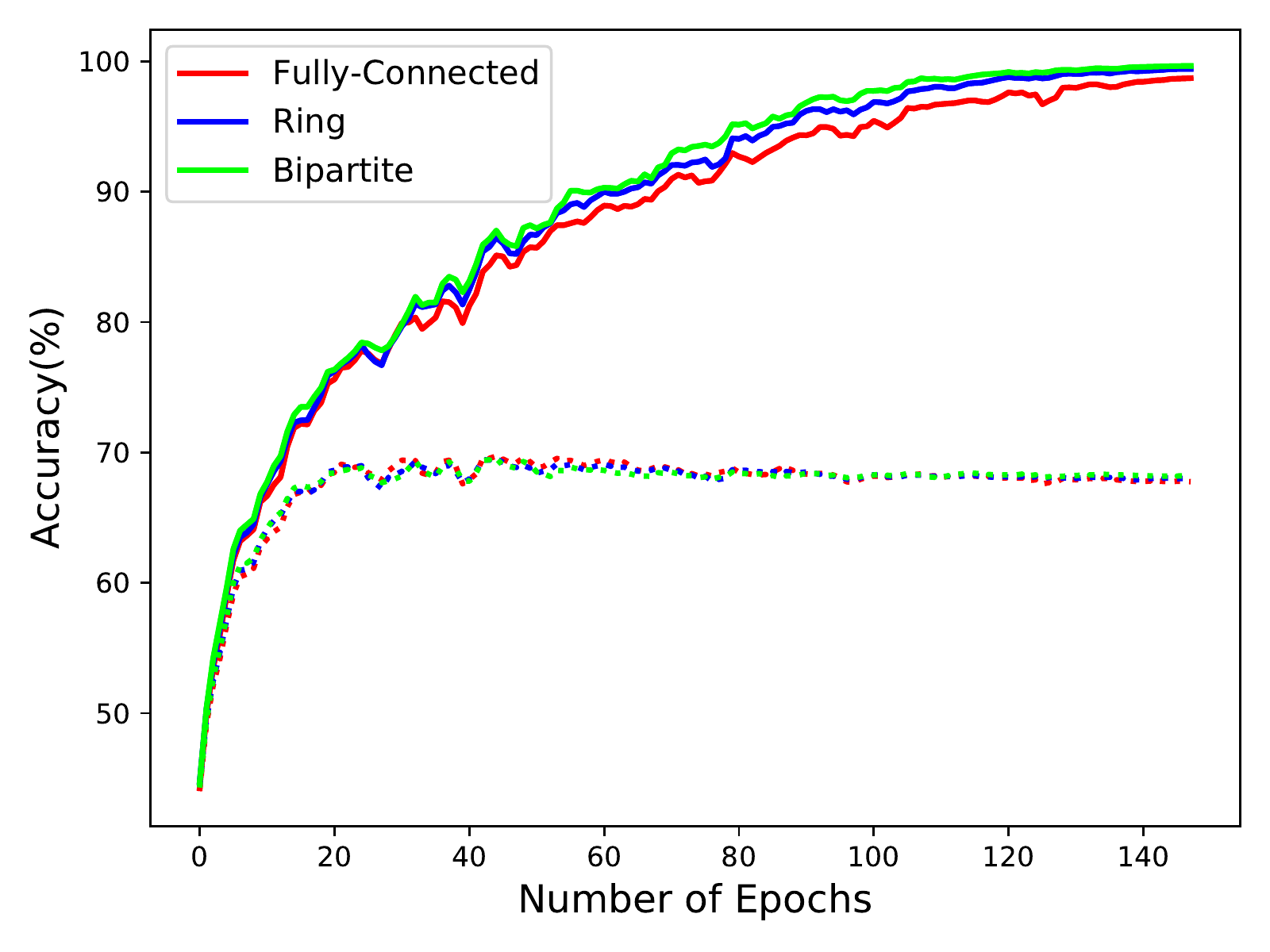}
\caption{}
\end{subfigure}
\begin{subfigure}[t]{0.33\linewidth}
\includegraphics[width=1\linewidth]{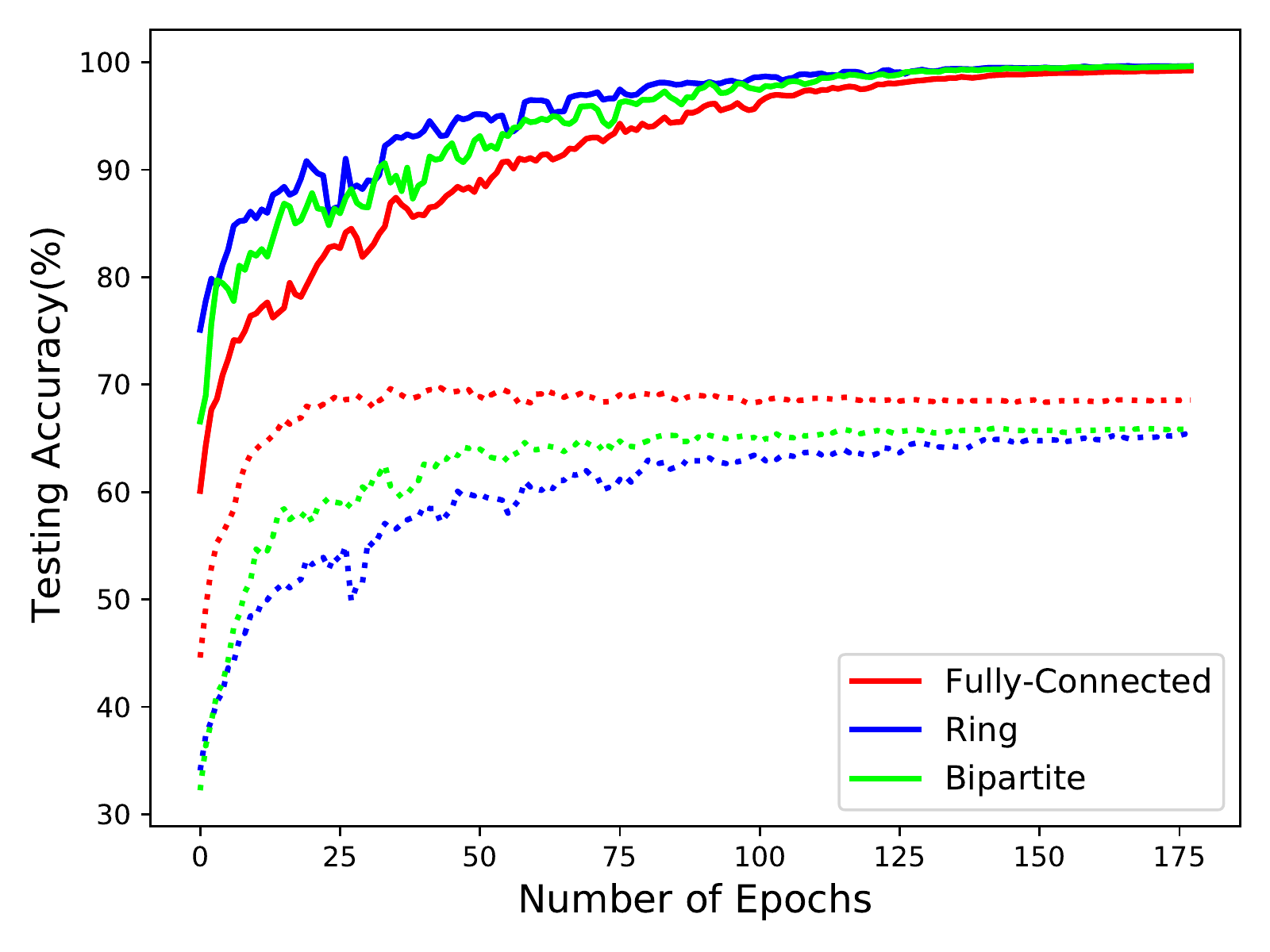}
\caption{}
\end{subfigure}
\begin{subfigure}[t]{0.33\linewidth}
\includegraphics[width=1\linewidth]{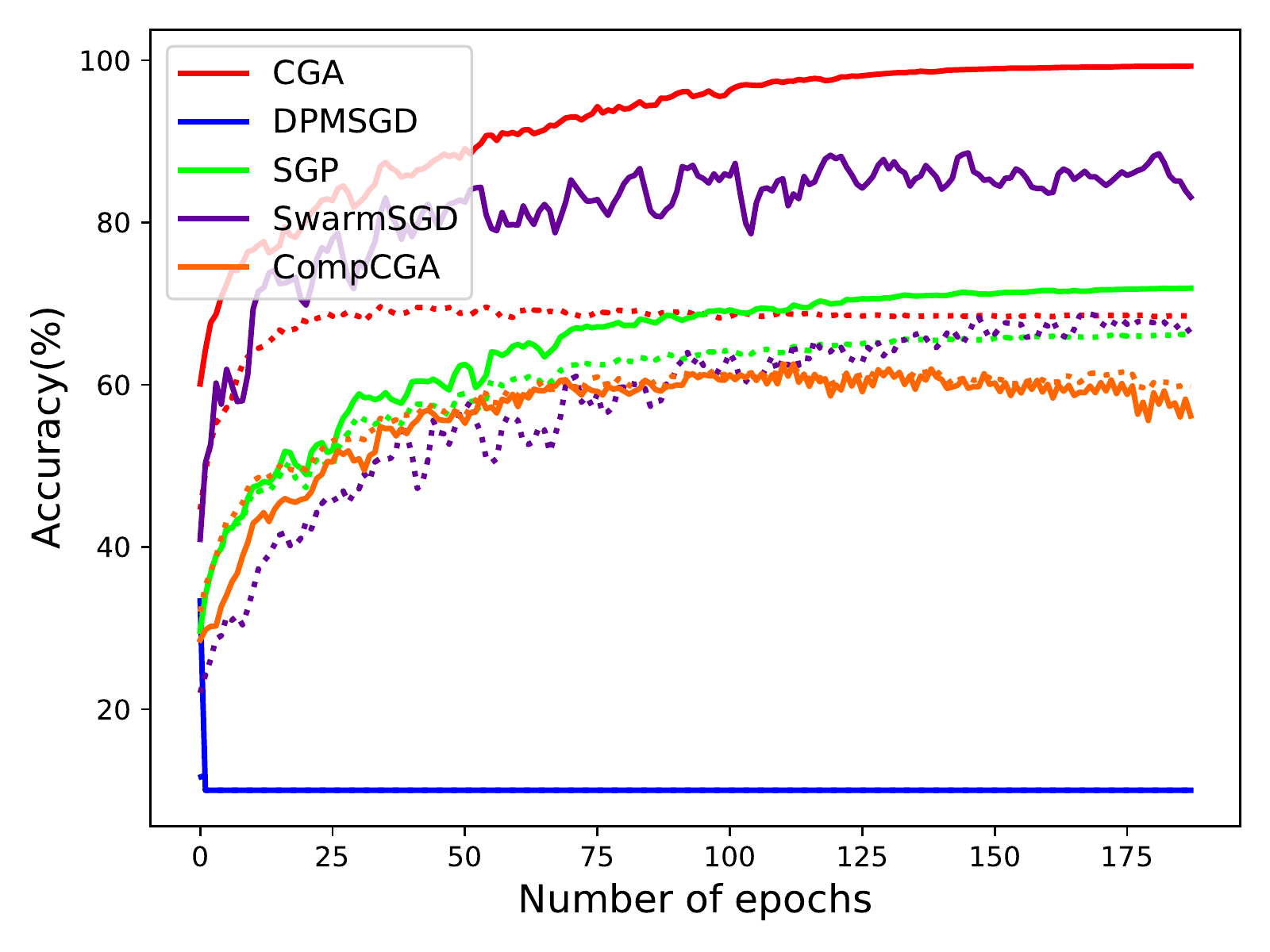}
\caption{}
\end{subfigure}
\caption{\textit{Average training and validation accuracy for (a) \textit{CGA} method on IID (b) \textit{CGA} method on non-IID data distributions (c) different methods on non-IID data distributions for training 5 agents using CNN model architecture}}
\label{cifartop}
\end{figure*}

\begin{figure*}[ht]

\centering
\begin{subfigure}[t]{0.49\linewidth}
\includegraphics[width=1\linewidth]{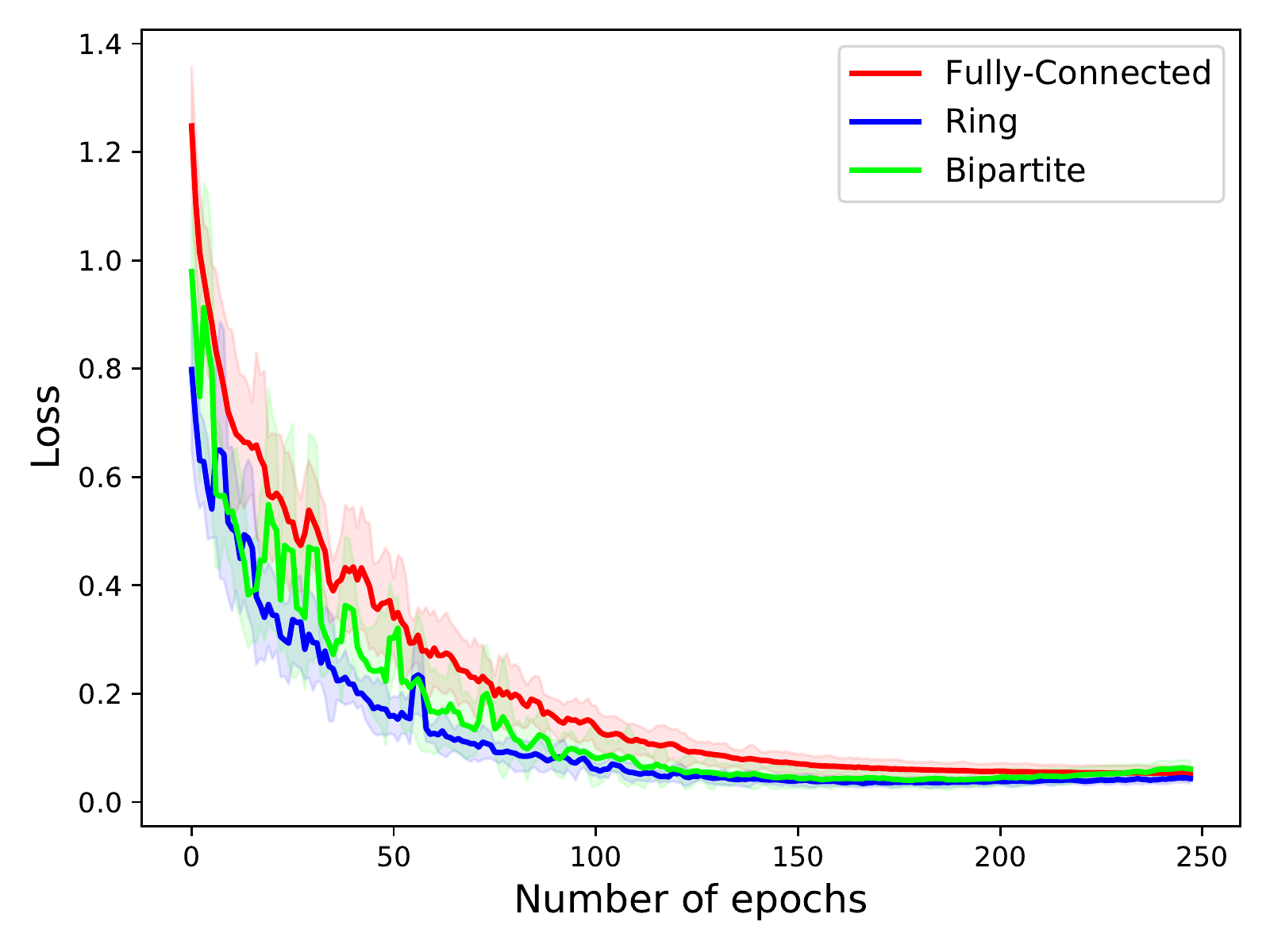}
\caption{}
\end{subfigure}
\begin{subfigure}[t]{0.49\linewidth}
\includegraphics[width=1\linewidth]{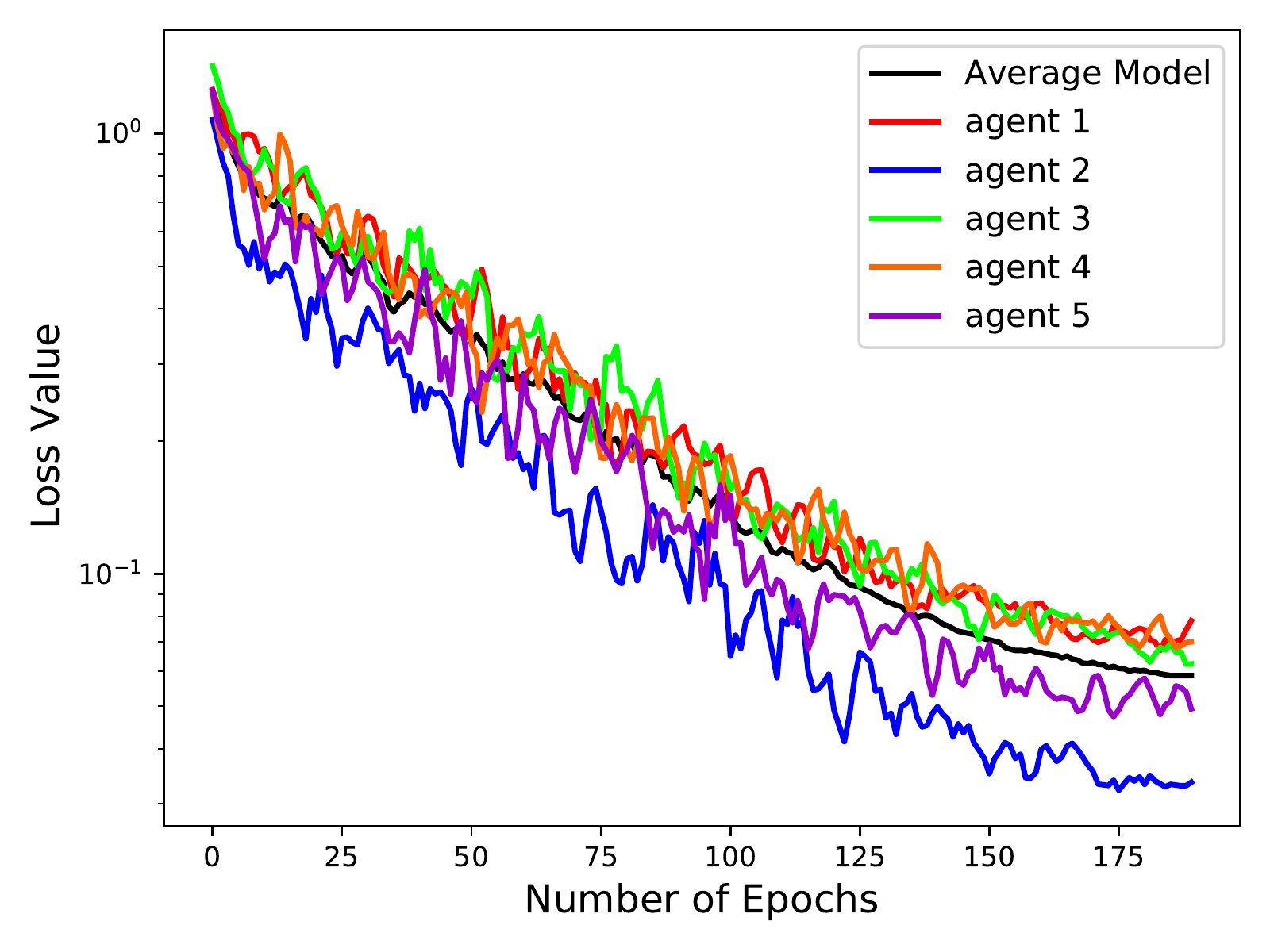}
\caption{}
\end{subfigure} 
\caption{\textit{Average training loss for (a) different topologies trained using CGA algorithm (b) individual agents along with the average model during training using CGA algorithm (log scale)}}
\label{cifarloss_reb}

\end{figure*}
Based on Figure~\ref{cifartop}(a), (b) \textit{CGA} achieves a high accuracy for different graph topologies  when learning from both IID and non-IID data distributions. However other methods i.e. DPMSGD suffer from maintaining the high accuracy when learning from non-IID data distributions. The adverse effect of non-IIDness in the data can be more elaborated upon by looking at Figure~\ref{trendncomplete}. Comparing (a) with (b) and (c) with (d) we can see that although the migration from IID to non-IID affects all the methods, \textit{CGA} suffers less than other methods for different combinations of graph topology and graph type. The same observation can be made by looking at Figure~\ref{trendgraphcomplete} which shows the accuracy obtained for different methods \textit{w.r.t} the graph type.

While Figure~\ref{cifarloss}(a) harps on the phenomenon of faster convergence with sparser graph topology which is an observation that have been made by earlier research works in Federated Learning~\cite{mcmahan2017communication} by reducing the client fraction which makes the mixing matrix sparser and decentralized learning~\cite{jiang2017collaborative}. However, as Figure~\ref{cifarloss_reb}(a) shows, by training for more epochs, all converge to similar loss values. Figure~\ref{cifarloss_reb} shows that the loss value associated with the consensus model is very close to the loss values corresponding to all other agents which means the projected gradient using QP is capturing the correct direction. 

\begin{figure*}[ht]
\centering
\begin{subfigure}[t]{0.24\linewidth}
\includegraphics[trim=0cm 0cm 3.5cm 0cm, clip,width=1\linewidth]{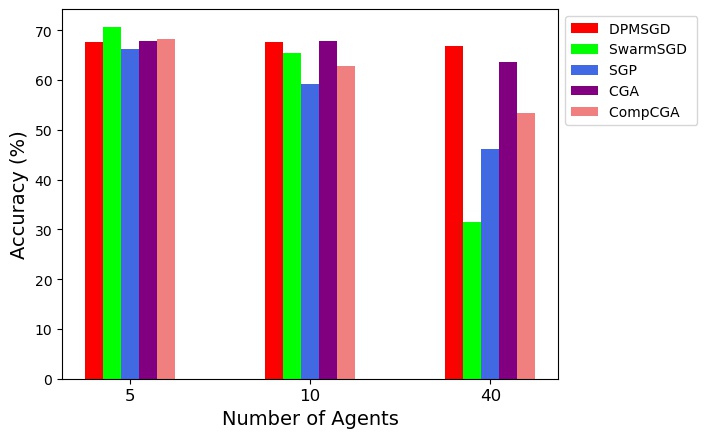}
\caption{}
\end{subfigure}
\begin{subfigure}[t]{0.24\linewidth}
\includegraphics[trim=0cm 0cm 3.5cm 0cm, clip,width=1\linewidth]{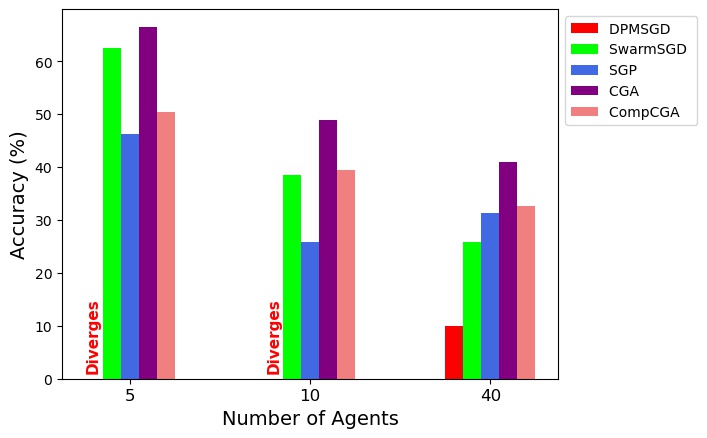}
\caption{}
\end{subfigure}
\centering
\begin{subfigure}[t]{0.24\linewidth}
\includegraphics[trim=0cm 0cm 3.5cm 0cm, clip,width=1\linewidth]{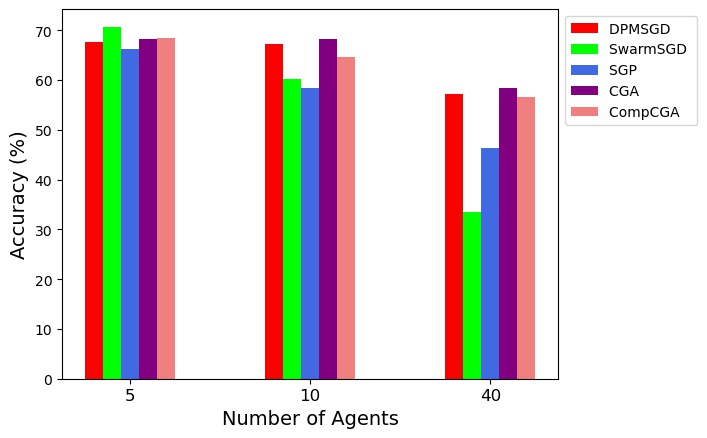}
\caption{}
\end{subfigure}
\begin{subfigure}[t]{0.24\linewidth}
\includegraphics[width=1.25\linewidth]{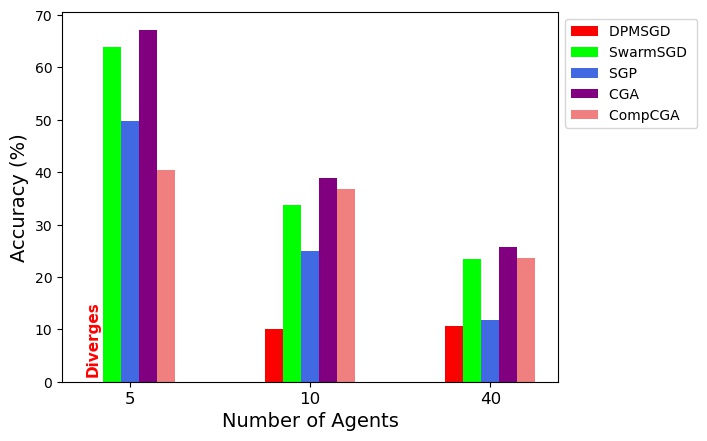}
\caption{}
\end{subfigure}
\caption{\textit{Average testing accuracy for different methods \textit{w.r.t} the number of learning agents learning from (a) IID data distributions for Ring graph topology (b) non-IID data distributions for Ring graph topology (c) IID data distributions for Bipartite graph topology (d) non-IID data distributions for Bipartite graph topology}}
\label{trendncomplete}
\end{figure*}

\begin{figure*}[hbt]
\centering
\begin{subfigure}[t]{0.24\linewidth}
\includegraphics[trim=0cm 0cm 3.5cm 0cm, clip,width=1\linewidth]{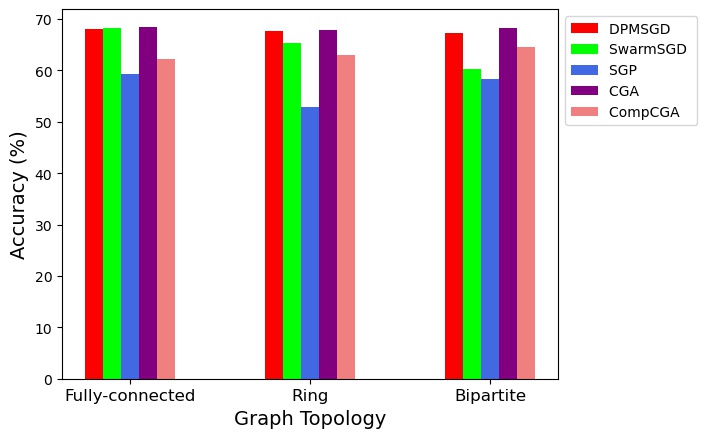}
\caption{}
\end{subfigure}
\begin{subfigure}[t]{0.24\linewidth}
\includegraphics[trim=0cm 0cm 3.5cm 0cm, clip,width=1\linewidth]{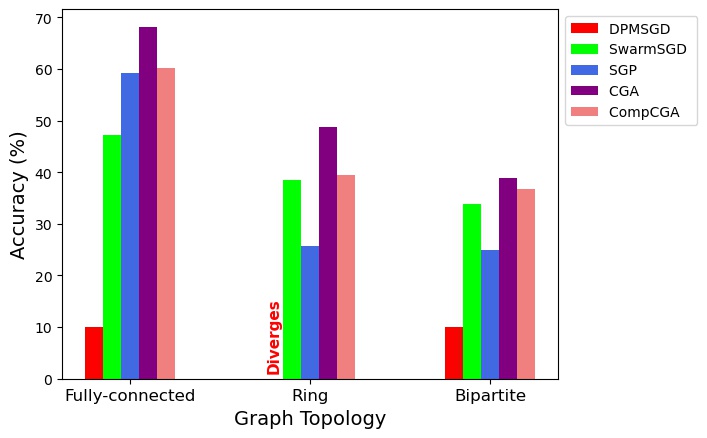}
\caption{}
\end{subfigure}
\centering
\begin{subfigure}[t]{0.24\linewidth}
\includegraphics[trim=0cm 0cm 3.5cm 0cm, clip,width=1\linewidth]{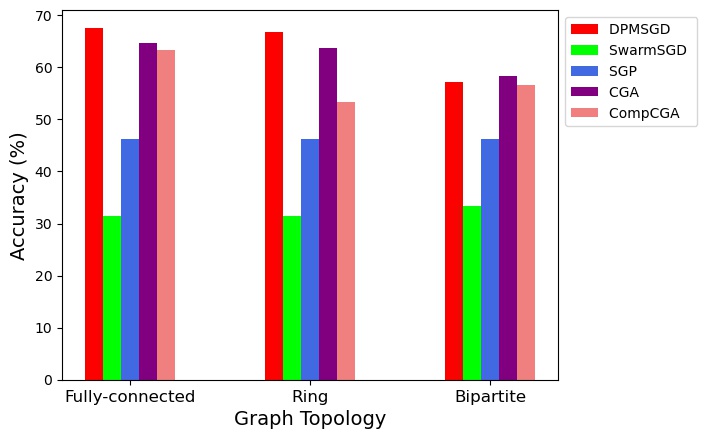}
\caption{}
\end{subfigure}
\begin{subfigure}[t]{0.24\linewidth}
\includegraphics[width=1.25\linewidth]{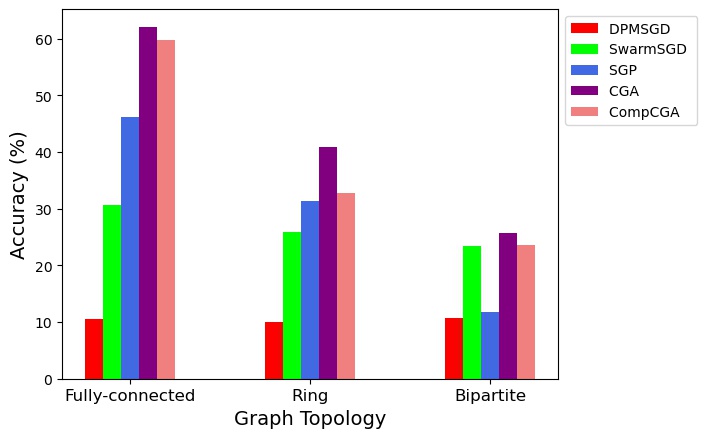}
\caption{}
\end{subfigure}
\caption{\textit{Average testing accuracy for different methods \textit{w.r.t} the graph topology learning from (a) IID data distributions learning from 10 agents (b) non-IID data distributions learning from 10 agents(c) IID data distributions learning from 40 agents (d) non-IID data distributions learning from 40 agents}}
\label{trendgraphcomplete}
\end{figure*}


\textbf{CIFAR10 with VGG11: }

We now extend our experimental analysis by using a more complex model architecture (e.g.~VGG11) for CIFAR10 dataset. Tables~\ref{CifarVGGiid} and~\ref{CifarVGGnon} summarize the performance of \textit{CGA} compared to other methods. Similar to CNN model architecture, \textit{CGA} can maintain the performance when migrating from IID to non-IID data distributions. However, we observe that as VGG11 model is much more complex than CNN, all the methods suffer from an increase in the number of learning agents and complexity of graph topology. 

\begin{table*} [hbt!]
    \caption{Model Accuracy Comparison for training CIFAR10 using VGG11 with IID data distribution}
    \centering
    \resizebox{0.7\textwidth}{!}{
    \begin{tabular}{c||l||l||l}
    \textbf{Model} & \textbf{Fully-connected} & \textbf{Ring} & \textbf{Bipartite} \\
    \hline
    \hline
    DPMSGD &\Centering{\begin{tabular}{l}                  67.8\% (5)\\60.8\% (10)\\ \textbf{59.8\%} (40) \end{tabular}}&{\begin{tabular}{l}61.9\% (5)\\ 60.5\% (10)\\ \textbf{60.1\%} (40)\end{tabular}}&{\begin{tabular}{l}61.0\% (5)\\ 60.7\% (10)\\ \textbf{60.1\%} (40)\end{tabular}}\\
    \hline
    SGP  & {\begin{tabular}{l}
    72.5\% (5)\\ 70.3\% (10)\\ 41.1\% (40)
    \end{tabular}}&{\begin{tabular}{l}
                     72.0\% (5)\\  42.8\% (10)\\  41.6\% (40)\end{tabular}} &{\begin{tabular}{l}
                    71.1\% (5)\\ \textbf{70.2\%} (10)\\ 41.5\% (40)
                 \end{tabular}}\\
    \hline
    SwarmSGD & {\begin{tabular}{l}
                     75.8\% (5)\\ \textbf{71.5\%} (10)\\ 21.8\% (40) \end{tabular}}&
     {\begin{tabular}{l}73.1\% (5)\\  \textbf{71.4\%} (10)\\20.6\% (40)
                 \end{tabular}} &{\begin{tabular}{l}
                    78.3\% (5)\\  70.1\% (10)\\ 20.3\% (40)
                 \end{tabular}}\\
    \hline
    CGA (ours) & {\begin{tabular}{l}
                   \textbf{81.1\%} (5)\\68.8\% (10)\\ 21.9\% (40)
    \end{tabular}}&{\begin{tabular}{l} \textbf{81.8\%} (5)\\ 68.3\% (10)\\ 18.5\% (40) \end{tabular}} &{\begin{tabular}{l}
    \textbf{81.5\%} (5)\\ 68.2\% (10)\\ 20.3\% (40)
                 \end{tabular}}\\
    \hline
    \hline
    \end{tabular}}
    \label{CifarVGGiid}
\end{table*}

\begin{table*} [hbt!]
    \caption{Model Accuracy Comparison for training CIFAR10 with non-IID data distribution using VGG11}
    \centering
    \resizebox{0.7\textwidth}{!}{\begin{tabular}{c||l||l||l}
    \textbf{Model} & \textbf{Fully-connected} & \textbf{Ring} & \textbf{Bipartite} \\
    \hline
    \hline
    DPMSGD &\Centering{\begin{tabular}{l}                   Diverges  (5)\\Diverges (10)\\ 12\% (40) \end{tabular}}&{\begin{tabular}{l}Diverges (5)\\ Diverges (10)\\ Diverges (40)\end{tabular}}&{\begin{tabular}{l}Diverges (5)\\ 10\% (10)\\ 10.7\% (40)\end{tabular}}\\
    \hline
    SGP  & {\begin{tabular}{l}
    20.4\% (5)\\ 10.1\% (10)\\ Diverges (40)
    \end{tabular}}&{\begin{tabular}{l}
                     20.8\% (5)\\  10.0\% (10)\\  10.0\% (40)\end{tabular}} &{\begin{tabular}{l}
                    20.3\% (5)\\ Diverges (10)\\ 10.1\% (40)
                 \end{tabular}}\\
    \hline
    SwarmSGD & {\begin{tabular}{l}
                     19.4\% (5)\\ 10.0\% (10)\\ 9.9\% (40) \end{tabular}}&
     {\begin{tabular}{l}19.9\% (5)\\  Diverges (10)\\10.2\% (40)
                 \end{tabular}} &{\begin{tabular}{l}
                    20.2\% (5)\\ Diverges (10)\\ 10\% (40)
                 \end{tabular}}\\
    \hline
    CGA (ours) & {\begin{tabular}{l}
                  \textbf{74.6\%} (5)\\\textbf{69.8\%} (10)\\ \textbf{12.8\%} (40)
    \end{tabular}}&{\begin{tabular}{l} \textbf{75.8\%} (5)\\ \textbf{38.9\%} (10)\\ \textbf{20.5\%} (40) \end{tabular}} &{\begin{tabular}{l}
    \textbf{77.5\%} (5)\\ \textbf{18.7\%} (10)\\ \textbf{23.6\%} (40)
                 \end{tabular}}\\
    \hline
    \hline
    \end{tabular}}
    \label{CifarVGGnon}
\end{table*}

\subsection{MNIST Results}\label{mnistapp}
Same as what we did for CIFAR-10, we are comparing different methods performance on MNIST dataset. The results are summarized in Tables~\ref{mnistiid} and~\ref{mnistnoniid}. Although the accuracies are generally high when learning from MNIST dataset, and most of the methods work in most of the settings, we can see that although \textit{CGA} can maintain the model performance while learning from non-IID data, DPMSGD, SGP and SwarmSGD suffer from non-IIDness in the data specially when the number of agents and the graph topology combinations become more complex. 

\begin{table*} [hbt!]
    \caption{Model Accuracy Comparison for training MNIST using CNN with IID data distribution}
    \centering
    \resizebox{0.7\textwidth}{!}{
    \begin{tabular}{c||l||l||l}
    \textbf{Model} & \textbf{Fully-connected} & \textbf{Ring} & \textbf{Bipartite} \\
    \hline
    \hline
    DPSGD &\Centering{\begin{tabular}{l}                   \textbf{98.8\%} (5)\\\textbf{98.6\%} (10)\\ \textbf{96.9\%} (40) \end{tabular}}&{\begin{tabular}{l}\textbf{98.8\%} (5)\\ \textbf{98.5\%} (10)\\ \textbf{96.8\%} (40)\end{tabular}}&{\begin{tabular}{l}\textbf{98.8\%} (5)\\ 98.5\% (10)\\ \textbf{96.8\%} (40)\end{tabular}}\\
    \hline
    SGP  & {\begin{tabular}{l}
    96.2\% (5)\\ 93.2\% (10)\\ 71.4\% (40)
    \end{tabular}}&{\begin{tabular}{l}
                     96.3\% (5)\\  93.2\% (10)\\  71.4\% (40)\end{tabular}} &{\begin{tabular}{l}
                    96.2\% (5)\\ 93.2\% (10)\\ 71.4\% (40)
                 \end{tabular}}\\
    \hline
    SwarmSGD & {\begin{tabular}{l}
                     98.4\% (5)\\ 96.1\% (10)\\ 38.3\% (40) \end{tabular}}&
     {\begin{tabular}{l}98.4\% (5)\\  96.1\% (10)\\38.3\% (40)
                 \end{tabular}} &{\begin{tabular}{l}
                   98.5\% (5)\\  96.0\% (10)\\ 39.7\% (40)
                 \end{tabular}}\\
    \hline
    CGA (ours) & {\begin{tabular}{l}
                   98.6 \% (5)\\98.2\% (10)\\ 94.7\% (40)
    \end{tabular}}&{\begin{tabular}{l} 98.7\% (5)\\ 98.3\% (10)\\ 95.5\% (40) \end{tabular}} &{\begin{tabular}{l}
    98.7\% (5)\\ \textbf{98.6\%} (10)\\ \textbf{96.8\%} (40)
                 \end{tabular}}\\
    \hline
    \hline
    \end{tabular}}
    \label{mnistiid}
\end{table*}
\vspace{-30pt}
\begin{table*} [hbt!]
    \caption{Model Accuracy Comparison for training MNIST with non-IID data distribution using CNN}
    \centering
    \resizebox{0.7\textwidth}{!}    {\begin{tabular}{c||l||l||l}
    \textbf{Model} & \textbf{Fully-connected} & \textbf{Ring} & \textbf{Bipartite} \\
    \hline
    \hline
    DPSGD &\Centering{\begin{tabular}{l}                   98.3\% (5)\\87.1\% (10)\\ 85.3\% (40) \end{tabular}}&{\begin{tabular}{l} 98.2\% (5)\\ 74.5\% (10)\\ 72.5\% (40)\end{tabular}}&{\begin{tabular}{l} 98.2\% (5)\\ 70.9\% (10)\\ 34.3\% (40)\end{tabular}}\\
    \hline
    SGP  & {\begin{tabular}{l}
    95.9\% (5)\\ 92.7\% (10)\\ 71.2\% (40)
    \end{tabular}}&{\begin{tabular}{l}
                     96.0\% (5)\\  91.3\% (10)\\  74.6\% (40)\end{tabular}} &{\begin{tabular}{l}
                    95.9\% (5)\\ 90.2\% (10)\\ 62.2\% (40)
                 \end{tabular}}\\
    \hline
    SwarmSGD & {\begin{tabular}{l}
                     98.2\% (5)\\ 93.2\% (10)\\ 24.8\% (40) \end{tabular}}&
     {\begin{tabular}{l}98.1\% (5)\\  90.9\% (10)\\33.5\% (40)
                 \end{tabular}} &{\begin{tabular}{l}
                   98.2\% (5)\\  91.4\% (10)\\ 18.3\% (40)
                 \end{tabular}}\\
    \hline
    CGA (ours) & {\begin{tabular}{l}
                   \textbf{98.6\%} (5)\\\textbf{98.2\%} (10)\\ \textbf{94.1\%} (40)
    \end{tabular}}&{\begin{tabular}{l} \textbf{98.5\%} (5)\\ \textbf{96.2\%} (10)\\ \textbf{91.6\%} (40) \end{tabular}} &{\begin{tabular}{l}
    \textbf{98.5\%} (5)\\ \textbf{96.2\%} (10)\\ \textbf{91.8\%} (40)
                 \end{tabular}}\\
    \hline
    \hline
    \end{tabular}}
    \label{mnistnoniid}
\end{table*}

\end{document}